\theoremstyle{plain}
\newtheorem{thm}{Theorem}[section]
\newtheorem{lem}[thm]{Lemma}
\newtheorem{cor}[thm]{Corollary}
\newtheorem{assumption}[thm]{Assumption}
\newtheorem{defn}[thm]{Definition}
\definecolor{tabblue}{HTML}{1f77b4}
\definecolor{taborange}{HTML}{ff7f0e}
\definecolor{tabgreen}{HTML}{2ca02c}
\definecolor{tabred}{HTML}{d62728}
\definecolor{tabpurple}{HTML}{9467bd}
\definecolor{tabbrown}{HTML}{8c564b}
\definecolor{tabpink}{HTML}{e377c2}
\definecolor{tabgray}{HTML}{7f7f7f}
\definecolor{tabolive}{HTML}{bcbd22}
\definecolor{tabcyan}{HTML}{17becf}
\newacronym{method}{\textsc{CoRel}}{\textit{\underline{Co}nformal \underline{Rel}ational Prediction}\nobreak}
\newacronym{gdl}{GDL}{graph deep learning}
\newacronym{cp}{CP}{conformal prediction}
\newacronym{lcp}{LCP}{localized CP}
\newacronym{scp}{SCP}{{split CP}}
\newacronym{fcp}{FCP}{{full CP}}
\newacronym{iqn}{IQN}{implicit quantile network}
\newacronym{relqn}{RelQP}{\textit{relational quantile predictor}}
\newacronym{pi}{PI}{prediction interval}
\newacronym{rescp}{\textsc{ResCP}}{\textit{\underline{Res}ervoir \underline{C}onformal \underline{P}rediction}}
\newacronym{rescqr}{\textsc{ResCQR}}{Reservoir Conformal Quantile Regression}
\newglossaryentry{scpi}{name=SPCI,description=}
\newglossaryentry{nexcp}{name=NexCP,description=}
\newglossaryentry{kowcpi}{name=KOWCPI,description=}
\newglossaryentry{seqcp}{name=SeqCP,description=}
\newglossaryentry{hopcpt}{name=HopCPT,description=}
\newglossaryentry{cornn}{name=\textsc{CP-QRNN},description=}
\newglossaryentry{adam}{name=Adam,description=}
\newacronym{stgnn}{STGNN}{spatiotemporal graph neural network\nobreak}
\newacronym{gnn}{GNN}{graph neural network}
\newacronym{mp}{MP}{message-passing}
\newacronym{mlp}{MLP}{multilayer perceptron}
\newacronym{tts}{TTS}{time-then-space}
\newacronym{stt}{STT}{space-then-time}
\newacronym{tas}{T\&S}{time-and-space}
\newacronym{rnn}{RNN}{recurrent neural network}
\newacronym{tcn}{TCN}{temporal convolutional network}
\newacronym{arima}{ARIMA}{Autoregressive Integrated Moving Average}
\newacronym{gru}{GRU}{gated recurrent unit}
\newacronym{rc}{RC}{reservoir computing}
\newacronym{esn}{ESN}{echo state network}
\newacronym{esp}{ESP}{Echo State Property}
\newacronym{sn}{SN}{\textit{sensor network}}
\newacronym{iot}{IoT}{Internet of Things}
\newacronym{stmp}{STMP}{spatiotemporal message-passing}
\newacronym{stcn}{STCN}{spatiotemporal convolutional network}
\newacronym{gcrnn}{GCRNN}{graph convolutional recurrent neural network}
\newacronym{mae}{MAE}{mean absolute error}
\newacronym{mape}{MAPE}{mean absolute percentage error}
\newacronym{rmse}{RMSE}{root mean squared error}
\newacronym{mre}{MRE}{mean relative error}
\newacronym{cdf}{CDF}{cumulative distribution function}
\newacronym{ar}{AR}{autoregressive}
\newglossaryentry{ttsimp}{name=RNN\texttt{+}IMP,description=}
\newglossaryentry{ttsamp}{name=RNN\texttt{+}AMP,description=}
\newglossaryentry{tasimp}{name=GCRNN-IMP,description=}
\newglossaryentry{tasamp}{name=GCRNN-AMP,description=}
\newglossaryentry{fcrnn}{name=FC-RNN,description=}
\newglossaryentry{local_rnn}{name=LocalRNNs,description=}
\newglossaryentry{dcrnn}{name=DCRNN,description=}
\newglossaryentry{agcrn}{name=AGCRN,description=}
\newglossaryentry{gwnet}{name=GraphWaveNet,description=}
\newglossaryentry{gpvar}{name=GPVAR,description=}
\newglossaryentry{lgpvar}{name=GPVAR-L,description=}
\newglossaryentry{la}{name=METR-LA,description=}
\newglossaryentry{bay}{name=PEMS-BAY,description=}
\newglossaryentry{cer}{name=CER-E,description=}
\newglossaryentry{air}{name=AQI,description=}
\newglossaryentry{pems3}{name=PEMS03,description=}
\newglossaryentry{pems4}{name=PEMS04,description=}
\newglossaryentry{pems7}{name=PEMS07,description=}
\newglossaryentry{pems8}{name=PEMS08,description=}
\patchcmd{\hyper@makecurrent}{%
    \ifx\Hy@param\Hy@chapterstring
        \let\Hy@param\Hy@chapapp
    \fi
}{%
    \iftoggle{inappendix}{
        \@checkappendixparam{chapter}%
        \@checkappendixparam{section}%
        \@checkappendixparam{subsection}%
        \@checkappendixparam{subsubsection}%
        \@checkappendixparam{paragraph}%
        \@checkappendixparam{subparagraph}%
    }{}%
}{}{\errmessage{failed to patch}}
\newcommand*{\@checkappendixparam}[1]{%
    \def\@checkappendixparamtmp{#1}%
    \ifx\Hy@param\@checkappendixparamtmp
        \let\Hy@param\Hy@appendixstring
    \fi
}
\apptocmd{\appendix}{\toggletrue{inappendix}}{}{\errmessage{failed to patch}}
\def\eqref#1{equation~\ref{#1}}
\def\1{\bm{1}}
\def\vh{{\bm{h}}}
\def\vw{{\bm{w}}}
\def\vx{{\bm{x}}}
\def\mW{{\bm{W}}}
\DeclareMathAlphabet{\mathsfit}{\encodingdefault}{\sfdefault}{m}{sl}
\SetMathAlphabet{\mathsfit}{bold}{\encodingdefault}{\sfdefault}{bx}{n}
\def\sN{{\mathbb{N}}}
\def\sR{{\mathbb{R}}}
\newcommand{\Var}{\mathrm{Var}}
\newcommand{\Cov}{\mathrm{Cov}}
\DeclareMathOperator*{\argmin}{arg\,min}
\title{{ResCP}: Reservoir Conformal Prediction for Time Series Forecasting}
\author{Roberto Neglia\thanks{Equal contribution. Correspondance to: \texttt{roberto.neglia@uit.no}; \texttt{andrea.cini@usi.ch}.}\\
UiT the Arctic University of Norway \\
\And
Andrea Cini\footnotemark[1] \\
IDSIA USI-SUPSI, Università della Svizzera italiana \\
Swiss National Science Foundation Postdoctoral Fellow \\
\And
Michael M.~Bronstein \\
University of Oxford \\
AITHYRA \\
\And
Filippo Maria Bianchi \\
UiT the Arctic University of Norway \\
NORCE Norwegian Research Centre AS \\
}
\begin{document}

\maketitle

\begin{abstract} 
    \Gls{cp} offers a powerful framework to build distribution-free prediction intervals for exchangeable data. 
    Existing methods that extend \gls{cp} to sequential data rely on fitting a relatively complex model to capture temporal dependencies. 
    However, these methods can fail if the sample size is small and often require expensive retraining when the underlying data distribution changes. 
    To overcome these limitations, we propose \acrfull{rescp}, a novel training-free \gls{cp} for time series. 
    Our approach leverages the efficiency and representation capabilities of Reservoir Computing to dynamically reweight conformity scores. 
    In particular, we compute similarity scores across reservoir states and use them to adaptively reweight the observed residuals. \Acrshort{rescp} enables us to account for local temporal dynamics when modeling the error distribution without compromising computational scalability. We prove that, under reasonable assumptions, \acrshort{rescp} achieves asymptotic conditional coverage, and we empirically demonstrate its effectiveness across diverse forecasting tasks. 
\end{abstract}
\section{Introduction}\label{sec:introduction} 

Despite deep learning often achieving state-of-the-art results in time series forecasting, widely used methods do not provide a way to quantify the uncertainty of the predictions~\citep{benidis2022deep}, which is crucial for their adoption in risk-sensitive scenarios, such as healthcare~\citep{makridakis2019forecasting}, load forecasting~\citep{hong2016probabilistic}, and weather forecasting~\citep {palmer2006predictability}.
Moreover, in many cases, \glspl{pi} must not only be reliable but also fast to compute.
Existing uncertainty quantification approaches for time series often rely on strong distributional assumptions~\citep{benidis2022deep, salinas2020deepar}, which do not always fit real-world data. 
Moreover, they often require long and expensive training procedures, as well as modifications to the underlying forecasting model, which limits their applicability to large datasets.
A framework that recently gained attention in uncertainty quantification is \gls{cp}~\citep{vovk2005algorithmic, angelopoulos2023conformal}. 
\gls{cp} assumes exchangeability between the data used to build the \glspl{pi} and the test data points, meaning that the joint distribution of the associated sequence of random variables does not change when indices are permuted.
Clearly, this does not hold for time series, as the presence of temporal dependencies and, possibly, non-stationarity violates the assumption. 
Moreover, temporal dynamics can be heterogeneous and often result in heteroskedastic errors. 
This requires mechanisms to make \glspl{pi} locally adaptive~\citep{lei2018distribution, guan2022localized}.

To extend \gls{cp} to time series data, a popular approach is to reweight the observed residuals to handle non-exchangeability and/or temporal dependencies~\citep{barber2023conformal, tibshirani2019conformal, auer2023conformal}. 
In particular, \cite{auer2023conformal} proposes an effective reweighting mechanism based on a learned model and soft-attention operators, which, however, suffers from the high computational cost typical of Transformer-like architectures. 
Other methods build \glspl{pi} by directly learning the quantile function of the error distribution~\citep{jensen2022ensemble, chen2023sequential, lee2024transformer, lee2025kernelbased, cini2025relational}. This, however, often results in high sample complexity and might require frequent model updates to adapt to changes in the data distribution. 

To address these limitations, we propose \gls{rescp}, a novel \gls{cp} method that leverages a reservoir, i.e., a randomized recurrent neural network~\citep{jaeger2001esn, lukosevicius2009reservoir}, driven by the observed prediction residuals to efficiently compute data-dependent weights adaptively at each time step. 
To capture locally similar dynamics, the weights are derived from the similarity between current and past states of the reservoir, which is implemented as an \gls{esn}~\citep{jaeger2001esn, gallicchio2020deep}. 
This enables \gls{rescp} to model local errors through a weighted empirical distribution of past residuals, where residuals associated with similar temporal dynamics receive higher weights. 
The \gls{esn} does not require any training and is initialized to yield stable dynamics and informative representations of the input time series at each time step. 
This makes our approach extremely scalable and easy to implement.
\gls{rescp} can be applied on top of \textit{any} point forecasting model, since it only requires residuals from a disjoint calibration set.

Our main contributions are summarized as follows.
\begin{itemize}[topsep=.1em, leftmargin=1em, itemindent=1em]
    \item We provide the first assessment of \acrlong{rc} for \gls{cp} in time series analysis and show that it is a valid and scalable alternative to existing methods. 
    \item We introduce \gls{rescp}, a novel, scalable, and theoretically grounded tool for distribution-free uncertainty quantification in sequential prediction tasks.
    \item We prove that \gls{rescp} can achieve asymptotic conditional coverage under reasonable assumptions on the data-generating process. 
    \item We introduce variants of \gls{rescp} to handle distribution shifts and leverage exogenous variables.
\end{itemize}
We evaluate \gls{rescp} on data from real-world applications and show its robustness and effectiveness by comparing it to state-of-the-art baselines.

\section{Background and related work}\label{sec:background} 

We consider a forecasting setting where the goal is to predict future values of a time series based on past observations. 
Let $\{y_t\}_{t=1}^T$ be the sequence of scalar target observations at each time step $t$. 
We denote by $y_{1:T}$ the entire history of observations up to time $T$. 
Moreover, let $\{\bm{u}_t\}_{t=1}^T$ be the sequence of exogenous variables at each time step $t$ of dimension $D_u$.
In this context, we are also given a forecasting model $\hat{y}_{t+H} = \hat{f}(y_{t-W:t}, \bm{u}_{t-W:t})$ that produces point forecasts of $H$-steps-ahead, with $H \geq 1$, given a window $W \geq 1$ of past observations. The model $\hat{f}$ can be any kind of forecasting model, e.g., a \gls{rnn}. The objective is to construct valid \glspl{pi} that reflect the uncertainty in the forecasts. 
Ideally, \glspl{pi} should be conditioned on the current state of the system, i.e., we want to achieve conditional coverage:
\begin{equation}
    \mathbb{P}\left( y_{t+H} \in \hat{C}^\alpha_{T}(\hat{y}_{t+H}) \big\vert y_{\leq t}, \bm{u}_{\leq t}\right) \geq 1 - \alpha
\end{equation}
where $\hat{C}^\alpha_{T}(\hat{y}_{t+H})$ is the prediction interval for the forecast $\hat{y}_{t+H}$ at significance level $\alpha$.

\subsection{Conformal Prediction in Time Series Forecasting}

\Gls{cp}~\citep{vovk2005algorithmic, angelopoulos2023conformal} is a framework for building valid \textit{distribution-free} \glspl{pi}, possibly with finite samples. 
\Gls{cp} provides a way to quantify uncertainty in predictions by leveraging the empirical quantiles of \textit{conformity} scores, which in our regression setting we define as the prediction residuals
\begin{equation}
    r_t = y_t - \hat{y}_t.
\end{equation}
These scores measure the discrepancy between the model predictions and the observed values.
The \gls{cp} framework we consider in this work is \gls{scp} \citep{vovk2005algorithmic}, which constructs valid \glspl{pi} post-hoc by leveraging a disjoint calibration set to estimate the distribution of prediction errors. 
In its standard formulation, \gls{cp} treats all calibration points symmetrically when building \glspl{pi}. 
In heterogeneous settings, this can lead to overly conservative intervals as conformal scores are often heteroskedastic. 
To address heteroskedasticity, \gls{cp} can be made \textit{locally adaptive} by providing \glspl{pi} whose width can adaptively shrink and inflate at different regions of the input space~\citep{lei2018distribution}. 
In this context, \cite{guan2022localized} proposed \gls{lcp}, which reweights conformity scores according to the similarity between samples in feature space, given a localization function. \cite{hore2023conformal} introduced a randomized version of \gls{lcp}, with approximate conditional coverage guarantees and more robust under covariate shifts. 
Although this reweighting breaks exchangeability, validity is recovered through a finite-sample correction of the target coverage level. 
Other approaches rely on calibrating estimates of the spread of the data~\citep{lei2013distributionfree} or quantile regression~\citep{romano2019conformalized, jensen2022ensemble, feldman2023calibrated}, but often require ad-hoc base predictors~(i.e., they cannot be applied on top of generic point predictors).

\paragraph{\Acrlong{cp} for time series} 
Under the assumption of data exchangeability, \gls{cp} methods provide, in finite samples, valid intervals given a specified level of confidence \citep{angelopoulos2023conformal}. 
However, in settings where exchangeability does not hold, standard \gls{cp} methods can fail in providing valid coverage~\citep{barber2023conformal, tibshirani2019conformal}. 
This is the case of time series data where temporal dependencies break the exchangeability assumption. 
\cite{barber2023conformal} show that reweighting the available residuals~(in a data-independent fashion) can handle non-exchangeability and distribution shifts, and introduce \gls{nexcp}, a weighting scheme that exponentially decays the past residuals over time. 
\cite{barber2025unifying} provides a unified view of several weighted \gls{cp} methods as approaches to condition uncertainty estimates on the available information on the target data point. 
Other recent approaches, instead, fit a model to account for temporal dependencies and rely on asymptotic guarantees under some assumptions on the data-generating process~\citep{chen2023sequential, auer2023conformal, lee2025kernelbased}. 
For example, \gls{scpi}~\citep{chen2023sequential} uses a quantile random forest trained at each step on the most recent scores. 
Conversely, \gls{hopcpt} leverages a Hopfield network to learn a data-dependent reweighting scheme based on soft attention~\citep{auer2023conformal}. 
\cite{lee2025kernelbased} use a reweighted Nadaraya-Watson estimator to perform quantile regression over the past nonconformity scores and derive the weights from the kernel function. 
\cite{cini2025relational} proposes instead to use graph neural networks~\citep{bacciu2020gentle, bronstein2021geometric} to additionally condition uncertainty estimates on correlated time series. As already mentioned, we focus on methods that estimate uncertainty on the prediction of a pre-trained model; we refer to \cite{benidis2022deep} for a discussion of probabilistic forecasting architectures. 

\subsection{Echo State Networks}

\Glspl{esn} are among the most popular \gls{rc} methods~\citep{jaeger2001esn, lukosevicius2009reservoir}. In particular, \glspl{esn} are \glspl{rnn} that are randomly initialized to ensure the stability and meaningfulness of the representations and left untrained.
An \gls{esn} encodes input sequences into nonlinear, high-dimensional state representations through the recurrent component, called \textit{reservoir}, which extrapolates from the input a rich pool of dynamical features.
The state update equation of the reservoir is
\begin{equation}\label{eq:esn_update}
    \bm{h}_t = (1-l)\bm{h}_{t-1} + l \, \sigma\left(\mW_x \bm{x}_t + \mW_h \bm{h}_{t-1} + \bm{b} \right),
\end{equation}
where $\bm{x}_t$ denotes the input at time $t$, $\mW_x \in \mathbb{R}^{D_h \times D_x}$ and $\mW_h \in \mathbb{R}^{D_h \times D_h}$ are fixed random weight matrices, $\bm{b} \in \mathbb{R}^{D_h}$ is a random bias vector, $\bm{h}_t \in \mathbb{R}^{D_h}$ is the reservoir state, $l\in(0,1]$ is the leak rate which controls how much of the current state is retained at each update, and $\sigma$ is a nonlinear activation, typically the hyperbolic tangent. 
When $\mW_x$ and $\mW_h$ are properly initialized~(see,e.g., \citealt{lukovsevivcius2012practical}), reservoir states $\bm{h}_t$ provide expressive embeddings of the past inputs $\bm{x}_{\leq t}$, which can be processed by a trainable readout to perform downstream tasks such as time series classification or  forecasting~\citep{bianchi2020reservoir}.
Since embeddings already model nonlinear dynamics, the readout is usually implemented as a simple linear layer which can be trained efficiently. 

\paragraph{Echo state property} 
The reservoir dynamics should be neither too contractive nor chaotic.
In the first case, the reservoir produces representations that are not sufficiently expressive. 
In the latter, the reservoir is unstable and responds inconsistently to nearly identical input sequences.
\Glspl{esn} are usually configured to ensure the \gls{esp}, a necessary condition for global asymptotic stability under which the reservoir state asymptotically forgets its initial conditions and the past inputs. 
Formally, for any initial states $\bm{h}_0, \bm{{h}}^\prime_0$ and input sequence $\bm{x}_{1:T}$, by calling $f_R(\bm{h}_0, \bm{x}_{1:T})$ the final state of the reservoir initialized with $\bm{h}_0$ and being fed with $\bm{x}_{1:T}$, then the \gls{esp} is defined as
\begin{equation}
    || f_R(\bm{h}_0, \bm{x}_{1:T}) - f_R(\bm{{h}^\prime}_0, \bm{x}_{1:T}) || \to 0 \enskip \text{as} \enskip T \to \infty
\end{equation}
The \gls{esp} can be achieved by properly initializing $\mW_h$ and by setting its spectral radius ${\rho(\mW_h) < 1}$~\citep{yildiz2012re, gallicchio2020deep}.

\section{Reservoir Conformal Prediction}\label{sec:methodology}
We propose a novel approach for computing \gls{cp} intervals for time series forecasting based on the representations generated by \gls{rc} to model temporal dependencies. 
Our method leverages \gls{cp} to provide accurate uncertainty estimates while maintaining the computational efficiency of \gls{rc}. 

\begin{figure}[t]
    \centering
    \includegraphics[width=0.7\linewidth]{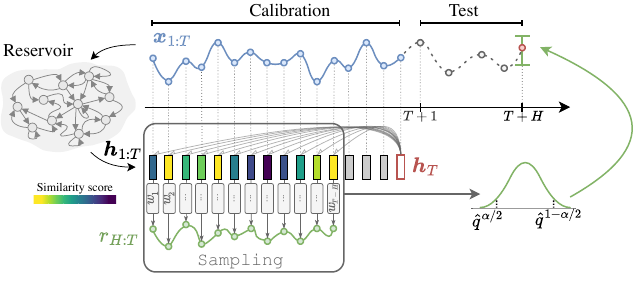}
    \vspace{-.3cm}
    \caption{Let $\{\bm{x}_t\}_{t=1}^T$ and $\{{r}_t\}_{t=1}^T$ be the time series and residuals of the calibration set, respectively. 
    Each sequence $\bm{x}_{1:t}$ generates the state of the reservoir $\bm{h}_t$.
    The last state $\bm{h}_T$ is the query state for the prediction of $\hat{y}_{T+H}$.
    We compute similarity scores between $\bm{h}_T$ and the calibration states $\{\bm{h}_t\}_{t=1}^{T-H}$, which are used to reweight with $\{{w}_t\}_{t=1}^{T-H}$ associated residuals $\{{r}_t\}_{t=H}^T$ by resampling them. 
    Quantiles are computed from the sampled residuals and used to build the \gls{pi} for $\hat{y}_{T+H}$. 
    } 
    \label{fig:overview}
\end{figure}

\subsection{Reservoir Sampling Conformal Prediction}\label{sec:rescp}
We construct \glspl{pi} from a calibration set consisting of a time series of residuals of length $T$. 
In particular, we aim at reweighting the residuals based on similarities with the local dynamics at the target time step. 
We leverage the reservoir capability of embedding the observed dynamics $\bm{x}_{\leq t}$ into its state $\bm{h}_t$. An overview of the whole procedure is depicted in Figure~\ref{fig:overview}. 
In general, the input $\bm{x}_t$ can include any set of endogenous and exogenous variables available at time step $t$. 
In the following, unless differently stated, we consider $\bm{x}_t$ to be the prediction residual~($\vx_t=r_t$ obtained by training a generic forecasting model on a disjoint training set); the discussion on how to effectively incorporate exogenous variables is deferred to Section~\ref{sec:rescqr}.

Given an \gls{esn} with dynamics defined in Equation~\ref{eq:esn_update} and hyperparameters $\bm\theta$, the time series ${\vx_{1:t} = (\vx_1, \dots, \vx_t)}$ is embedded into a sequence of $t$ reservoir states which encode observed dynamics at each time step as
\begin{align}
    \vh_{t} = \text{ESN}_{\bm\theta}(\vx_{1:t}), \qquad \vh_{t} \in{\mathbb{R}^{D_h}}.
\end{align}
In particular, to compute the \gls{pi} for the $H$-step-ahead prediction $\hat{y}_{t+H}$, we take as \textit{query state} the current state $\bm{h}_t$ (since we do not have access to states $\bm{h}_{>t}$) and compute similarity scores between $\bm{h}_t$ and the states in the calibration set. 
Note that in most scenarios, the calibration set can be iteratively updated as new observations become available~(i.e., we can include in the calibration set all the observations up to the current time step $t$).
Weights are computed as
\begin{align}
    z_{s}(\vh_t) & = \textsc{Sim}(\bm{h}_t, \vh_{s}), \qquad 1\leq s \leq T-H \\
    \left\{ w_{1}(\vh_t),\dots, w_{T-H}(\vh_t) \right\} & = \textsc{SoftMax}\left\{ \frac{z_{1}(\vh_t)}{\tau},\dots, \frac{z_{T-H}(\vh_t)}{\tau} \right\}\label{eq:softmax}
\end{align}
where \textsc{Sim} is a similarity score (e.g., dot product) and $\tau>0$ is a temperature hyperparameter. 
Our goal is to approximate the conditional \gls{cdf} of the residual at time $t+H$ with
\begin{equation}\label{eq:rescp-cdf}
    \widehat{F}(r \mid \vh_t) \coloneqq \sum_{s=1}^{T-H} w_s(\vh_t) \mathds{1}\big(r_{s+H}\leq r\big) \approx \mathbb P(r_{t+H} \le r \mid \vx_{\leq t})
\end{equation}
where $w_s(\vh_t)$ is a weight proportional to the similarity of $\vh_s$ to $\vh_t$  and such that $\sum_s w_s(\vh_t)=1$.
Quantiles of the conditional error distribution can then be estimated as:
\begin{equation}\label{eq:empirical_quantile}
    \widehat{Q}_\beta(\vh_t) \coloneqq \inf\{r \in \mathbb{R} \;:\; \widehat{F}(r \;\vert\;\vh_t) \geq \beta\}.
\end{equation}
In practice, instead of computing the weighted quantile in \autoref{eq:empirical_quantile} directly, we approximate it through Monte Carlo sampling akin to \cite{auer2023conformal}. 
More specifically, we sample residuals accordingly to the weights in \autoref{eq:softmax} and compute the standard empirical $\beta$-quantile on the sampled residuals:
\begin{equation}\label{eq:rescp_quantiles}
    \hat{q}^{\alpha/2}_{t+H} = \widehat{Q}_{\alpha/2}(\vh_t), \qquad \hat{q}^{1-\alpha/2}_{t+H} = \widehat{Q}_{1-\alpha/2}(\vh_t).
\end{equation}
\Glspl{pi} with the desired confidence level $\alpha$ are obtained as:
\begin{equation}\label{eq:rescw_int}
    \widehat{C}_{T}^\alpha(\hat{y}_{t+H}) = \left[\hat{y}_{t+H} + \hat{q}^{\alpha/2}_{t+H}, \hat{y}_{t+H} + \hat{q}^{1-\alpha/2}_{t+H}\right].
\end{equation}
To account for skewed error distributions, we follow the approach of \cite{xu2023conformal} and refine the interval in \ref{eq:rescw_int} by selecting the level $\beta^*$ that minimizes its width:
\begin{align}
    \beta^* &= \argmin_{\beta \in [0,\alpha]} \left[\widehat{Q}_{1-\alpha+\beta}(\vh_t) - \widehat{Q}_{\beta}(\vh_t)\right]\notag \\ 
    \widehat{C}_{T}^{\alpha}(\hat{y}_{T+H}) &= \left[\hat{y}_{T+H} + \widehat{Q}_{\beta^*}(\vh_t), \hat{y}_{T+H} + \widehat{Q}_{1-\alpha+\beta^*}(\vh_t)\right].\label{eq:rescw_int_beta}
\end{align}
This can provide narrower \glspl{pi}, at the cost of searching for the optimal level $\beta$~\citep{xu2023conformal}.

\paragraph{Discussion} Essentially, \gls{rescp} is a local \gls{cp} method for time series where local similarity is gauged by relying on reservoir states.  
By sampling more residuals associated with similar states, we condition the estimates on specific dynamics of interest in the time series.
This allows us to build locally adaptive intervals. 
While \cite{guan2022localized} provides asymptotic guarantees for standard local \gls{lcp} for i.i.d.\ data, such results cannot be applied in the context of time series. 
In the following section, we discuss the conditions that allow \gls{rescp} to provide~(asymptotically) valid intervals. Notably, this will require assumptions on the nature of the process generating the data and on the reservoir dynamics. Finally, note that with finite samples, even with i.i.d.\ observation, data-dependent weights might introduce a bias that should be accounted for~\citep{guan2022localized}.
In practice, we found that tuning \gls{rescp} hyperparameters on a validation set was sufficient to provide accurate coverage in most scenarios (see \autoref{sec:experiments}).

\subsubsection{Theoretical analysis} \label{sec:theory}
We analyze the theoretical properties of \gls{rescp}, and show in Corollary~\ref{cor:asym_cond_cov} that it asymptotically guarantees coverage under some regularity assumptions. 
As already mentioned, guarantees of standard \gls{cp} methods break down for time series unless one relies on highly unrealistic assumptions~(e.g., exchangeable or i.i.d.\ observations)~\citep{barber2023conformal}. 
Moreover, even assuming exchangeability, it is impossible to construct finite-length \glspl{pi} with distribution-free conditional coverage guarantees~\citep{lei2013distributionfree, vovk2012conditional}.
We start by introducing the assumptions needed to prove the consistency of the weighted empirical \gls{cdf} in \autoref{eq:rescp-cdf}. 

\begin{assumption}[Time-invariant and mixing process]
\label{ass:mixing}
We assume that the process ${\{Z_t = (\vx_t, r_{t+H})\}_{t=1}^\infty}$ is time-invariant and strongly mixing ($\alpha$-mixing) with coefficient $\alpha(k) \to 0$ as $t\to \infty$. 
\end{assumption}
Intuitively, Assumption~\ref{ass:mixing} implies that the system’s dynamics do not change over time and that it ``forgets’’ its initial conditions and past structure, thereby allowing observations that are far apart in time to be treated as if they were independent.
\begin{assumption}[\gls{esp} and Lipschitz properties in \gls{esn}]
\label{ass:esn}
Let $\vx_{\leq t}=(\ldots,\vx_{t-1},\vx_{t})$ denote the history. 
Define the \emph{fading-memory metric} for $\gamma\in(0,1)$ by
\begin{equation*}
    d_{\mathrm{fm}}(\vx_{\leq t},\vx_{\leq t}^\prime)
    \coloneqq \sum_{k=1}^\infty \gamma^k \, \|\vx_{t-k+1}-\vx^\prime_{t-k+1}\|_2.
\end{equation*}
Then the \gls{esn} state map ${\text{ESN}_{\bm\theta}: \vx_{\leq t}\mapsto {\vh}_t\in\mathbb R^{D_h}}$ is well-defined and causal with fading memory (i.e., the output only depends on the past and present inputs), and there exist a constant $L_X > 0$ such that for all histories $\vx_{\leq t},\vx_{\leq t}'$:
\begin{equation*}
    \| \text{ESN}_{\bm\theta} (\vx_{\leq t})-\text{ESN}_{\bm\theta}(\vx_{\leq t}^\prime) \|_2
    \;\le\; L_X \, d_{\mathrm{fm}}(\vx_{\leq t},\vx_{\leq t}^\prime).
\end{equation*}

\end{assumption}
Assumption~\ref{ass:esn} tells us that reservoir has stable and contractive dynamics: past inputs influence the state in a controlled, exponentially decaying way, and small perturbations are eventually washed out from the reservoir's state. These assumptions are commonly used in \gls{rc}~\citep{grigoryeva2019differentiable, gallicchio2011architectural}. 

\begin{assumption}[Continuity of the conditional law]
\label{ass:continuity}
Let $\mathbb P(r_{t+H}\in \sR \mid \vx_{\leq t})$ denote the conditional law of
$r_{t+H}$ given the past. Then for every $r\in\mathbb R$ the map
\begin{equation*}
    \vx_{\leq t} \;\mapsto\; F(r\mid \vx_{\leq t})
\end{equation*}
is continuous in $\vx_{\leq t}$ with respect to $d_{\mathrm{fm}}$, i.e.
\begin{equation*}
    d_{\mathrm{fm}}(\vx_{\leq t},\vx_{\leq t}^\prime) \to 0 \implies \sup_r \vert F(r\mid \vx_{\leq t}) - F(r\mid \vx_{\leq t}^\prime) \vert \to 0
\end{equation*}
\end{assumption}

Intuitively, Assumption~\ref{ass:continuity} ensures that if two histories are similar~(in the fading-memory sense), then the distributions of the future residuals conditioned on these histories are also close. This is a reasonable assumption to enable learning. 

\begin{defn}[Effective sample size]
    Let $n$ be the number of available calibration samples, the \textit{effective sample size}~\citep{kong1994sequential, liu2008monte} is defined in our setting as
    \begin{equation}
        m_n \coloneqq \left(\sum_{i=1}^n (w^{(n)}_i)^2\right)^{-1}.
    \end{equation}
\end{defn}
Note that at high temperature, weights are approximately uniform~($w^{(n)}_i \approx \frac{1}{n}$) and $\sum_{i} (w^{(n)}_i)^2 \approx \frac{1}{n}$, hence $m_n \approx n$, 
which means that the method collapses to vanilla \gls{scp} as all calibration points are treated symmetrically. If instead the temperature is too low, then all the mass concentrates on only one point, meaning that $\sum_{i} (w^{(n)}_i)^2 = 1$ and hence $m_n = 1$. 

\begin{assumption}[Softmax weighting scheme]
\label{ass:weights}
Let $n$ be the number of available calibration samples, and let $\tau_n=\tau(n)$ be the temperature parameter of the \textsc{SoftMax} of equation~\ref{eq:softmax}, $m_n$ the associated effective sample size, and $\vw^{(n)}$ the output weights.
Assume:
\begin{enumerate}[label = (\roman*)]
\item $\tau_n$ is configured to slowly decrease as $n \to \infty$, so that $m_n \to \infty$; \label{subass:temperature_cooling}
\item for every $\delta>0$,
\begin{equation*}
    \sum_{s:{\|\vh_s-\vh_t\|}_2\ge\delta} w^{(n)}_s(\vh_t) \;\xrightarrow[n\to\infty]{\mathbb P}\;0.
\end{equation*}\label{subass:vanish_weights}
\end{enumerate}
\end{assumption} 
Condition \ref{subass:vanish_weights} states that for sufficiently small temperature $\tau_n$, the softmax normalization should concentrate mass on those calibration points whose states lie in a shrinking neighborhood of $\vh_t$.
Conversely, we also need these points within the shrinking neighborhood of $\vh_t$ to increase as the size of the calibration set increases. 
This can be seen as a \textit{bias-variance tradeoff}. In practical terms, the temperature $\tau_n$ must be set to a small enough value to localize weights at points similar to the query state~(reducing \textit{bias}), but at the same time it should be large enough to guarantee a good effective sample size. This requirement translates into Assumption \autoref{ass:weights}~\ref{subass:temperature_cooling} which prescribes that the temperature has to shrink at a reasonably slow rate: one that allows the number of effective neighbors $m_n$ to diverge as $n$ grows.
\begin{thm}[Consistency of the weighted empirical \gls{cdf}]
\label{thm:consistency}
Let $\widehat F_n({}\cdot{}\mid \vh_t)$ denote the conditional weighted empirical \gls{cdf} in \autoref{eq:rescp-cdf} with calibration data $\{(\vx_i, r_i)\}_{i=1}^n$ and  ${F_n(r \mid \vh_t) \coloneq \mathbb P(r_{t+H} \le r \mid \vh_{t})}$.
Under Assumptions~\ref{ass:mixing}--\ref{ass:weights}, we have for any query state ${\vh_t\in\mathbb R^{D_h}}$,
\begin{equation*}
    \sup_{r\in\mathbb R} \,\big|\widehat F_n(r\mid \vh_t)-F(r\mid \vh_{t})\big|
    \;\xrightarrow[n\to\infty]{\mathbb P}\; 0.
\end{equation*}
\end{thm}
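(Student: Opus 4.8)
The plan is to decompose the supremum into a \emph{variance} (concentration) term and a \emph{bias} (approximation) term, following the classical template for consistency of kernel-type conditional CDF estimators but adapting each step to the weighted, $\alpha$-mixing, fading-memory setting. Fix a query state $\vh_t$ and, for notational convenience, write $F(r\mid\vh) := \mathbb{P}(r_{t+H}\le r\mid \text{ESN}_{\bm\theta}(\vx_{\le t})=\vh)$, which is well defined because $\vh_t$ is a measurable function of the history by Assumption~\ref{ass:esn}. For each $r$ I would introduce the population weighted CDF $\bar F_n(r\mid\vh_t) := \sum_{s=1}^{n} w^{(n)}_s(\vh_t)\, F(r\mid \vh_s)$ and split
\begin{equation*}
  \big|\widehat F_n(r\mid\vh_t)-F(r\mid\vh_t)\big|
  \;\le\;
  \underbrace{\big|\widehat F_n(r\mid\vh_t)-\bar F_n(r\mid\vh_t)\big|}_{\text{variance}}
  \;+\;
  \underbrace{\big|\bar F_n(r\mid\vh_t)-F(r\mid\vh_t)\big|}_{\text{bias}} .
\end{equation*}

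For the \textbf{bias term}, I would use that $\bar F_n(r\mid\vh_t)-F(r\mid\vh_t)=\sum_s w^{(n)}_s(\vh_t)\big(F(r\mid\vh_s)-F(r\mid\vh_t)\big)$ is a convex combination, so its absolute value is bounded by $\sum_s w^{(n)}_s(\vh_t)\,\big|F(r\mid\vh_s)-F(r\mid\vh_t)\big|$. Splitting the sum at the threshold $\|\vh_s-\vh_t\|\ge\delta$: the far-away part is bounded by $\sum_{s:\|\vh_s-\vh_t\|\ge\delta} w^{(n)}_s(\vh_t)\to 0$ in probability by Assumption~\ref{ass:weights}\ref{subass:vanish_weights}; the nearby part is controlled because, by Assumptions~\ref{ass:esn} and~\ref{ass:continuity}, small $\|\vh_s-\vh_t\|$ (together with a uniform-continuity/modulus argument relating state distance back to the fading-memory distance, or working directly with continuity of $\vh\mapsto F(r\mid\vh)$) forces $\sup_r|F(r\mid\vh_s)-F(r\mid\vh_t)|$ to be small, uniformly; letting $\delta\downarrow 0$ after $n\to\infty$ kills this term uniformly in $r$. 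The subtlety here is that Assumption~\ref{ass:continuity} is phrased in the $d_{\mathrm{fm}}$ metric on histories, not in the state metric; I would need a short lemma that $\|\vh-\vh'\|$ small implies the relevant conditional laws are close — either by assuming (or deriving from continuity of the readout/state map) that $F(r\mid\cdot)$ factors continuously through $\vh$, or by passing to the equivalence classes of histories induced by the fading-memory-Lipschitz ESN map.

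For the \textbf{variance term}, note $\widehat F_n(r\mid\vh_t)-\bar F_n(r\mid\vh_t)=\sum_s w^{(n)}_s(\vh_t)\big(\mathds 1(r_{s+H}\le r)-F(r\mid\vh_s)\big)$ is a weighted sum of bounded, conditionally-mean-zero terms. I would first bound its second moment: since the summands are bounded by $1$, under $\alpha$-mixing a covariance-summation bound (Davydov-type inequality, using $\alpha(k)\to 0$ from Assumption~\ref{ass:mixing}) gives $\mathbb{E}\big[(\widehat F_n-\bar F_n)^2\big]\lesssim \sum_s (w^{(n)}_s)^2 \cdot C_{\mathrm{mix}} = C_{\mathrm{mix}}/m_n$, so pointwise-in-$r$ convergence in probability follows from Assumption~\ref{ass:weights}\ref{subass:temperature_cooling} ($m_n\to\infty$) via Chebyshev. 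To upgrade pointwise convergence to the supremum over $r$, I would use the standard Glivenko--Cantelli-style chaining: cover $\mathbb R$ by finitely many grid points $r_1<\dots<r_M$ with $M=M(\varepsilon)$ chosen so that the population CDF increments between consecutive points are at most $\varepsilon$ (possible by monotonicity; add $\pm\infty$), use monotonicity of both $\widehat F_n(\cdot\mid\vh_t)$ and $F(\cdot\mid\vh_t)$ to sandwich the sup-error by $\varepsilon$ plus the max over the $M$ grid points, and take a union bound over the finitely many points.

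The \textbf{main obstacle} I anticipate is the $\alpha$-mixing covariance bound in the variance step: the weights $w^{(n)}_s(\vh_t)$ are themselves \emph{data-dependent} (functions of the reservoir states, hence of the same process), so the summands are not a clean weighted sum of a mixing sequence with deterministic coefficients. I would handle this either by conditioning on the sigma-algebra generated by the states $\{\vh_s\}$ (treating the weights as fixed given this conditioning, and noting $\mathds 1(r_{s+H}\le r)-F(r\mid\vh_s)$ still has conditional mean zero and the relevant cross-covariances still decay by mixing of the joint process $Z_t=(\vx_t,r_{t+H})$), or by arguing that the fading-memory property makes $w^{(n)}_s$ essentially measurable with respect to a finite recent window and then invoking mixing on a slightly coarsened filtration. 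Getting the conditioning argument airtight — i.e., that conditionally on the states the residuals retain enough of the mixing structure to make cross terms summable — is the delicate part; everything else is a fairly mechanical bias/variance plus Glivenko--Cantelli assembly.
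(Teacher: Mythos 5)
Your proposal follows essentially the same route as the paper's proof: the same decomposition of $\widehat F_n-F$ through $\bar F_n(r\mid\vh_t)=\sum_s w^{(n)}_s(\vh_t)F(r\mid\vh_s)$, the same near/far split in state space for the bias term together with a continuity-of-$F(r\mid\cdot)$ lemma bridging the $d_{\mathrm{fm}}$-continuity of Assumption~\ref{ass:continuity} to the state metric (this is exactly the paper's Lemma~\ref{lem:regularity}), Chebyshev plus an $\alpha$-mixing covariance bound controlled by the effective sample size for the variance term, and a Glivenko--Cantelli grid argument to pass from pointwise to uniform convergence. The only minor differences are that the paper does not claim a clean $C/m_n$ bound (which would need summable mixing coefficients) but instead truncates the covariance sum at a finite lag $K$, obtaining $2A_{\max}K/m_n+\epsilon$, and the data-dependent-weights subtlety you flag as the main obstacle is in fact passed over silently in the paper's variance computation, so your conditioning-on-the-states remark is a point of added care rather than a divergence in method.
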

The proof can be found in \autoref{proof:consistency}. Theorem~\ref{thm:consistency} implies the consistency of the empirical conditional quantile estimator $\widehat{Q}_\beta(\vh_t)$, defined in Equation~\ref{eq:empirical_quantile}, with respect to the true conditional quantile $Q_\beta(\vh_t)$. Finally, it is trivial to show that the asymptotic coverage of \gls{rescp} is guaranteed.
\begin{cor}[Asymptotic conditional coverage guarantee]\label{cor:asym_cond_cov}
Under Assumptions~\ref{ass:mixing}--\ref{ass:weights}, for any $\alpha \in (0,1)$ and for $n\to\infty$,
\begin{equation*}
    \mathbb{P}\left( y_{t+h} \in \hat{C}_{t}^\alpha (\hat{y}_{t+h}) \;\vert\; \vh_{t} \right) \xrightarrow[n\to\infty]{\mathbb{P}} (1-\alpha).
\end{equation*}
\end{cor}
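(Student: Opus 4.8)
\emph{Proof plan for Corollary~\ref{cor:asym_cond_cov}.} The plan is to deduce the corollary directly from Theorem~\ref{thm:consistency}, transferring the uniform consistency of the weighted empirical CDF first into consistency of the interval endpoints and then into coverage. Since $r_{t+H}=y_{t+H}-\hat y_{t+H}$ and the interval of Equations~\ref{eq:rescp_quantiles}--\ref{eq:rescw_int} is $\widehat C^{\alpha}_t(\hat y_{t+H})=[\hat y_{t+H}+\widehat Q_{\alpha/2}(\vh_t),\,\hat y_{t+H}+\widehat Q_{1-\alpha/2}(\vh_t)]$, the coverage event rewrites as $\{y_{t+H}\in\widehat C^{\alpha}_t(\hat y_{t+H})\}=\{\widehat Q_{\alpha/2}(\vh_t)\le r_{t+H}\le\widehat Q_{1-\alpha/2}(\vh_t)\}$. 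Conditioning on the calibration sample and on the query state $\vh_t$ freezes the two endpoints, so the conditional coverage equals $F(\widehat Q_{1-\alpha/2}(\vh_t)\mid\vh_t)-F(\widehat Q_{\alpha/2}(\vh_t)^-\mid\vh_t)$, where $F(\cdot\mid\vh_t)$ is the true conditional CDF appearing in Theorem~\ref{thm:consistency}.

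Next I would control $F(\widehat Q_\beta(\vh_t)\mid\vh_t)$ for $\beta\in\{\alpha/2,1-\alpha/2\}$. By the definition of the generalized inverse in Equation~\ref{eq:empirical_quantile} and right-continuity of the step function $\widehat F_n(\cdot\mid\vh_t)$, one has $\widehat F_n(\widehat Q_\beta(\vh_t)\mid\vh_t)\ge\beta$ and $\widehat F_n(\widehat Q_\beta(\vh_t)^-\mid\vh_t)\le\beta$. Writing $\delta_n\coloneqq\sup_r\lvert\widehat F_n(r\mid\vh_t)-F(r\mid\vh_t)\rvert$, Theorem~\ref{thm:consistency} (which already uses Assumptions~\ref{ass:mixing}--\ref{ass:weights}) gives $\delta_n\xrightarrow[n\to\infty]{\mathbb P}0$; assuming, as is standard in the conformal literature, that $F(\cdot\mid\vh_t)$ is continuous (so left limits coincide with values), the two inequalities squeeze $F(\widehat Q_\beta(\vh_t)\mid\vh_t)$ into $[\beta-\delta_n,\beta+\delta_n]$, hence $F(\widehat Q_\beta(\vh_t)\mid\vh_t)\to\beta$ in probability. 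Equivalently $\widehat Q_\beta(\vh_t)\to Q_\beta(\vh_t)$ in probability, the consistency of the quantile estimator announced after Theorem~\ref{thm:consistency}.

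Assembling the pieces, the conditional coverage equals $F(\widehat Q_{1-\alpha/2}(\vh_t)\mid\vh_t)-F(\widehat Q_{\alpha/2}(\vh_t)\mid\vh_t)\to(1-\alpha/2)-\alpha/2=1-\alpha$ in probability, which is the claim. The width-minimizing refinement with data-dependent level $\beta^*\in[0,\alpha]$ is handled identically: the defining inequalities of the generalized inverse give $\widehat F_n(\widehat Q_{1-\alpha+\beta^*}(\vh_t)\mid\vh_t)\ge 1-\alpha+\beta^*$ and $\widehat F_n(\widehat Q_{\beta^*}(\vh_t)^-\mid\vh_t)\le\beta^*$, so after subtracting and translating through $\delta_n$ (and using continuity of $F$), the coverage $F(\widehat Q_{1-\alpha+\beta^*}(\vh_t)\mid\vh_t)-F(\widehat Q_{\beta^*}(\vh_t)^-\mid\vh_t)$ lies within $2\delta_n$ of $1-\alpha$; since $\delta_n$ does not depend on $\beta^*$, this forces convergence in probability to $1-\alpha$ whatever the selected $\beta^*$.

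\textbf{Main obstacle.} The genuinely delicate point is not the calculus above but the conditioning step: the endpoints $\widehat Q_\beta(\vh_t)$ depend on the entire calibration sample, whereas $F(\cdot\mid\vh_t)$ conditions only on the reservoir state, so writing the coverage as $F(\widehat Q_{1-\alpha/2}(\vh_t)\mid\vh_t)-F(\widehat Q_{\alpha/2}(\vh_t)\mid\vh_t)$ tacitly uses that, asymptotically, $r_{t+H}$ is conditionally independent of the calibration scores given $\vh_t$ — exactly what the $\alpha$-mixing Assumption~\ref{ass:mixing} buys and what makes Theorem~\ref{thm:consistency} available in the first place. The secondary point is the continuity (atomlessness) of $F(\cdot\mid\vh_t)$ at the relevant quantile levels: it is implicit in the very notion of conditional coverage, but it must be invoked explicitly, since without it the generalized-inverse bounds yield only coverage at least $1-\alpha$ rather than the stated equality.
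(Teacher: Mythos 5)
Your proposal is correct and follows essentially the same route as the paper: rewrite the conditional coverage as $F(\widehat Q_{1-\alpha/2}(\vh_t)\mid\vh_t)-F(\widehat Q_{\alpha/2}(\vh_t)\mid\vh_t)$ and let the uniform consistency of Theorem~\ref{thm:consistency} drive both terms to their nominal levels via consistency of the estimated quantiles. You are in fact more explicit than the paper on points it glosses over --- the continuity (atomlessness) of $F(\cdot\mid\vh_t)$ needed to turn the generalized-inverse inequalities into two-sided convergence, the fact that the interval endpoints depend on the whole calibration sample rather than on $\vh_t$ alone, and the width-minimizing $\beta^*$ variant --- but these are refinements of, not departures from, the paper's argument.
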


\paragraph{Remark}
In the theoretical analysis, we model the distribution of future residuals conditioned on the state $\vh_t$, i.e., on the reservoir state at time $t$. This is clearly weaker than conditioning on the entire history $\vx_{\leq t}$. Recalling \autoref{eq:rescp-cdf}, whether or not $\widehat{F}(r \mid \vh_t)$ is a good approximation of ${F}(r \mid \vx_{\leq t})$ entirely depends on the ability of the \gls{esn} to encode all relevant information from the past in its state.  In particular, if the state representation captures all the relevant information, then \gls{rescp} provides asymptotic conditional coverage given such a history. Conversely, in the extreme case where every sequence is mapped by the reservoir into the same uninformative state, \gls{rescp} would simply provide marginal coverage, as one would expect. This introduces an additional bias-variance tradeoff. For example, the reservoir mapping all trajectories to similar states would result in small variance, but would have small discriminative power~(large bias). We refer the reader to the rich body of literature on the expressiveness of \glspl{esn}, particularly regarding their universal approximation properties~\citep{grigoryeva2018echo, grigoryeva2018universal, li2025universality} and their effectiveness in extracting meaningful representations from time series data~\citep{bianchi2020reservoir}. In particular, we refer to \citep{lukovsevivcius2012practical} for practical guidelines on how to set up \glspl{esn} effectively.

\subsubsection{Time-dependent weights}
\gls{rescp}, by default, compares the query state with \textit{any} time step from the calibration set. Note that state representations do not include any positional encoding that accounts for how far back in time a certain sample is, treating recent and distant residuals alike. 
This is not problematic for time-invariant and stable processes, which we assumed in \ref{sec:theory} to provide coverage guarantees. 
To deal, instead, with distribution shifts, we 1) update the calibration set over time, keeping its size $N$ fixed using a first-in-first-out approach, and 2) make the weights time-dependent, following an approach similar to NexCP~\citep{barber2023conformal}. 
In particular, we condition the weights on the distance between time steps as
\begin{equation}
    w_i(\bm{h}_t, t) = \gamma (\Delta(t, i)) w_i(\bm{h}_t),
\end{equation}
where $\gamma : \sN \to \sR$ is a discount function that maps the distance between time steps $\Delta(t, i)$ to a decay factor. 
The discount schedule can be chosen in different ways, e.g., exponential like NexCP~\citep{barber2023conformal} or linear. 
In our settings, we observed that a linear decay--i.e., $\gamma (\Delta(t, i)) \coloneq 1/\Delta(t, i)$--allowed to keep \glspl{pi} up-to-date without reducing too much the effective sample size. 
This is particularly important if assumption \ref{ass:mixing} is violated and the underlying dynamics change over time. 
It is worth noting that \gls{rescp} is a non-parametric approach without learnable parameters that is much more robust to non-stationarity compared to approaches that train a model~\citep{auer2023conformal, cini2025relational}, which must necessarily be updated when time-invariance is lost.

\subsection{Reservoir Conformal Quantile Regression}\label{sec:rescqr}

Usually \gls{cp} methods operate directly on conformal scores. However, in some case it might be beneficial to consider exogenous inputs~(covariates).
Adding exogenous variables as input to the reservoir can affect the internal dynamics of the network and, thus, its states. Depending on the characteristics of specific exogenous variables, they can harm the effectiveness of the reservoir in localizing prediction w.r.t.\ the dynamics of the target variable. Since the network is not trained, adjusting to the relevance and characteristics of the covariates is not possible with the completely unsupervised approach discussed in \autoref{sec:rescp}. 
As an alternative to \gls{rescp} in these scenarios, we consider a variant of our original approach, called \gls{rescqr}, which can account for exogenous inputs by relying on quantile regression. In practice, we use a linear readout to map the state at each time step to a set of quantiles of interest.  
Specifically, we use states $\vh_{1:T-H}$ and residuals $r_{H+1:T}$ in the calibration set to fit a linear model $\widehat{Q}_\alpha(\vh_t)$  as a quantile predictor. 
To do so, we train the readout to minimize the pinball loss for the target quantiles $\{\beta_1, \dots, \beta_M\}$ of the conformity scores:
\begin{align*}
    \mathcal{L}_{\beta_i}(\widehat{Q}_{\beta_i}(\vh_t), r) = 
    \begin{cases}
        (1-\beta_i)(\widehat{Q}_{\beta_i}(\vh_t) - r), & \widehat{Q}_{\beta_i}(\vh_t) \geq r \\
        \beta_i(r - \widehat{Q}_{\beta_i}(\vh_t)), & \widehat{Q}_{\beta_i}(\vh_t) < r \\
    \end{cases}
\end{align*}
We can then construct \glspl{pi} as discussed in \autoref{sec:rescp}. As we will show in \autoref{sec:experiments}, \gls{rescqr} provides a practical methods approach that can work well in scenarios where the availability of enough calibration data allows for training the readout and informative exogenous variables are available. 

\begin{table*}[t]
\centering
\small
\caption{Performance comparison for $\alpha=0.1$. 
$\Delta$Cov values are color-coded for undercoverage cases: \textcolor{tabolive}{yellow} (1-2\%), \textcolor{taborange}{orange} (2-4\%), \textcolor{tabred}{red} ($>$4\%). 
The top three Winkler scores for each scenario are highlighted: \textbf{\underline{bold+underlined}} (1st), \underline{underlined} (2nd), \dotuline{dot-underlined} (3rd). }
\setlength{\tabcolsep}{2.5pt}
\setlength{\aboverulesep}{0pt}
\setlength{\belowrulesep}{0pt}
\renewcommand{\arraystretch}{1.05}
\resizebox{\textwidth}{!}{

\begin{tabular}{@{}l|l|l|cccc|ccc@{}}
\multicolumn{3}{c|}{}  & \multicolumn{4}{c|}{\textit{Learning}} & \multicolumn{3}{c}{\textit{Non-learning}} \\
\multicolumn{2}{c}{}    & Metric        & \gls{scpi} & \gls{hopcpt} & \gls{cornn} & \textbf{\gls{rescqr}} & \gls{scp} & \gls{nexcp} & \textbf{\gls{rescp}} \\
\midrule[1.5pt]
\multirow{9}{*}{\rotatebox{90}{Solar}}
 & \multirow{3}{*}{\rotatebox{90}{RNN}} & $\Delta$Cov   & 0.05{\tiny$\pm$0.17} & \textcolor{tabolive}{-1.64{\tiny$\pm$1.18}} & -0.26{\tiny$\pm$0.92} & \textcolor{tabolive}{-1.10{\tiny$\pm$0.91}} & 0.37 & 1.46 & 0.74{\tiny$\pm$0.24} \\
 &                             & PI-Width      & 70.41{\tiny$\pm$1.73} & 60.49{\tiny$\pm$2.10} & 55.74{\tiny$\pm$0.98} & 59.99{\tiny$\pm$1.72} & 79.15 & 89.56 & 62.25{\tiny$\pm$0.75} \\
 &                             & Winkler       & 151.50{\tiny$\pm$1.18} & 112.46{\tiny$\pm$9.34} & \textbf{\underline{78.42{\tiny$\pm$0.20}}} & \underline{82.76{\tiny$\pm$0.26}} & 171.41 & 164.96 & \dotuline{104.24{\tiny$\pm$0.79}} \\
\cmidrule[0.2pt]{3-10}
 & \multirow{3}{*}{\rotatebox{90}{Transf}} & $\Delta$Cov   & -0.16{\tiny$\pm$0.46} & 1.32{\tiny$\pm$0.62} & 1.37{\tiny$\pm$2.09} & \textcolor{taborange}{-3.51{\tiny$\pm$16.26}} & 0.35 & 1.46 & 3.09{\tiny$\pm$0.35} \\
 &                             & PI-Width      & 71.36{\tiny$\pm$3.21} & 61.49{\tiny$\pm$1.74} & 55.70{\tiny$\pm$0.95} & 59.56{\tiny$\pm$1.59} & 79.03 & 89.21 & 63.34{\tiny$\pm$1.11} \\
 &                             & Winkler       & 152.85{\tiny$\pm$0.67} & 107.59{\tiny$\pm$6.07} & \textbf{\underline{77.61{\tiny$\pm$0.25}}} & \underline{82.16{\tiny$\pm$0.32}} & 169.64 & 163.34 & \dotuline{103.13{\tiny$\pm$0.58}} \\
\cmidrule[0.2pt]{3-10}
 & \multirow{3}{*}{\rotatebox{90}{ARIMA}} & $\Delta$Cov   & 0.51{\tiny$\pm$0.36} & 2.88{\tiny$\pm$2.13} & -0.41{\tiny$\pm$0.42} & \textcolor{taborange}{-2.03{\tiny$\pm$0.62}} & 0.16 & 1.76 & 0.68{\tiny$\pm$0.95} \\
 &                             & PI-Width      & 91.71{\tiny$\pm$1.18} & 143.32{\tiny$\pm$7.36} & 59.17{\tiny$\pm$0.41} & 66.19{\tiny$\pm$0.81} & 124.19 & 137.53 & 77.17{\tiny$\pm$2.07} \\
 &                             & Winkler       & 148.86{\tiny$\pm$0.34} & 173.49{\tiny$\pm$5.15} & \textbf{\underline{77.34{\tiny$\pm$0.30}}} & \underline{85.38{\tiny$\pm$0.45}} & 215.77 & 207.58 & \dotuline{110.38{\tiny$\pm$4.03}} \\
\cmidrule[1.5pt]{1-10}
\multirow{9}{*}{\rotatebox{90}{Beijing}}
 & \multirow{3}{*}{\rotatebox{90}{RNN}} & $\Delta$Cov   & \textcolor{tabolive}{-1.73{\tiny$\pm$0.67}} & \textcolor{tabred}{-5.18{\tiny$\pm$12.67}} & \textcolor{tabolive}{-1.86{\tiny$\pm$2.16}} & \textcolor{tabolive}{-1.21{\tiny$\pm$1.65}} & -0.32 & 0.00 & -0.70{\tiny$\pm$0.77} \\
 &                             & PI-Width      & 67.93{\tiny$\pm$1.43} & 68.47{\tiny$\pm$13.30} & 61.71{\tiny$\pm$4.91} & 65.53{\tiny$\pm$4.09} & 67.99 & 69.79 & 65.96{\tiny$\pm$2.50} \\
 &                             & Winkler       & 124.51{\tiny$\pm$2.25} & 140.50{\tiny$\pm$43.64} & \textbf{\underline{104.03{\tiny$\pm$0.99}}} & \underline{105.43{\tiny$\pm$0.85}} & 126.41 & 124.10 & \dotuline{106.07{\tiny$\pm$0.47}} \\
\cmidrule[0.2pt]{3-10}
 & \multirow{3}{*}{\rotatebox{90}{Transf}} & $\Delta$Cov   & -0.98{\tiny$\pm$1.04} & \textcolor{tabred}{-8.05{\tiny$\pm$16.41}} & \textcolor{tabolive}{-1.07{\tiny$\pm$0.69}} & \textcolor{tabolive}{-1.43{\tiny$\pm$1.10}} & -0.41 & 0.03 & -0.49{\tiny$\pm$0.59} \\
 &                             & PI-Width      & 69.96{\tiny$\pm$4.53} & 61.76{\tiny$\pm$14.39} & 62.41{\tiny$\pm$1.66} & 64.41{\tiny$\pm$2.72} & 67.46 & 69.64 & 64.06{\tiny$\pm$1.74} \\
 &                             & Winkler       & 125.41{\tiny$\pm$1.43} & 140.30{\tiny$\pm$36.01} & \textbf{\underline{102.81{\tiny$\pm$0.44}}} & \dotuline{105.97{\tiny$\pm$1.21}} & 126.63 & 124.35 & \underline{103.64{\tiny$\pm$0.21}} \\
\cmidrule[0.2pt]{3-10}
 & \multirow{3}{*}{\rotatebox{90}{ARIMA}} & $\Delta$Cov   & -0.23{\tiny$\pm$0.40} & \textcolor{tabolive}{-1.37{\tiny$\pm$0.26}} & \textcolor{tabolive}{-1.54{\tiny$\pm$0.77}} & \textcolor{tabolive}{-1.42{\tiny$\pm$1.15}} & -0.24 & -0.16 & 0.63{\tiny$\pm$0.22} \\
 &                             & PI-Width      & 74.68{\tiny$\pm$1.21} & 67.78{\tiny$\pm$0.50} & 61.80{\tiny$\pm$1.68} & 66.01{\tiny$\pm$3.01} & 75.72 & 76.45 & 70.43{\tiny$\pm$0.86} \\
 &                             & Winkler       & 130.59{\tiny$\pm$0.59} & 122.48{\tiny$\pm$4.36} & \textbf{\underline{101.84{\tiny$\pm$0.67}}} & \underline{107.20{\tiny$\pm$1.21}} & 135.07 & 132.03 & \dotuline{108.75{\tiny$\pm$0.31}} \\
\cmidrule[1.5pt]{1-10}
\multirow{9}{*}{\rotatebox{90}{Exchange}}
 & \multirow{3}{*}{\rotatebox{90}{RNN}} & $\Delta$Cov   & 2.98{\tiny$\pm$0.65} & 2.75{\tiny$\pm$0.08} & \textcolor{tabolive}{-1.07{\tiny$\pm$2.52}} & 3.18{\tiny$\pm$1.25} & 2.29 & 1.64 & 1.13{\tiny$\pm$0.27} \\
 &                             & PI-Width      & 0.0241{\tiny$\pm$0.0007} & 0.0404{\tiny$\pm$0.0001} & 0.0341{\tiny$\pm$0.0018} & 0.0383{\tiny$\pm$0.0008} & 0.0444 & 0.0405 & 0.0210{\tiny$\pm$0.0001} \\
 &                             & Winkler       & \underline{0.0287{\tiny$\pm$0.0007}} & 0.0482{\tiny$\pm$0.0001} & \dotuline{0.0461{\tiny$\pm$0.0005}} & 0.0464{\tiny$\pm$0.0005} & 0.0517 & 0.0492 & \textbf{\underline{0.0264{\tiny$\pm$0.0002}}} \\
\cmidrule[0.2pt]{3-10}
 & \multirow{3}{*}{\rotatebox{90}{Transf}} & $\Delta$Cov   & 4.44{\tiny$\pm$0.35} & 2.98{\tiny$\pm$0.07} & -0.57{\tiny$\pm$1.58} & 0.82{\tiny$\pm$1.89} & 4.57 & 3.25 & 1.46{\tiny$\pm$0.18} \\
 &                             & PI-Width      & 0.0255{\tiny$\pm$0.0005} & 0.0399{\tiny$\pm$0.0001} & 0.0337{\tiny$\pm$0.0016} & 0.0365{\tiny$\pm$0.0013} & 0.0544 & 0.0509 & 0.0229{\tiny$\pm$0.0001} \\
 &                             & Winkler       & \underline{0.0300{\tiny$\pm$0.0005}} & 0.0479{\tiny$\pm$0.0001} & 0.0480{\tiny$\pm$0.0009} & \dotuline{0.0475{\tiny$\pm$0.0008}} & 0.0620 & 0.0602 & \textbf{\underline{0.0294{\tiny$\pm$0.0001}}} \\
\cmidrule[0.2pt]{3-10}
 & \multirow{3}{*}{\rotatebox{90}{ARIMA}} & $\Delta$Cov   & 3.49{\tiny$\pm$0.41} & 2.07{\tiny$\pm$0.08} & \textcolor{tabolive}{-1.22{\tiny$\pm$1.78}} & 0.68{\tiny$\pm$1.58} & 3.08 & 2.13 & 0.38{\tiny$\pm$0.41} \\
 &                             & PI-Width      & 0.0242{\tiny$\pm$0.0006} & 0.0379{\tiny$\pm$0.0000} & 0.0330{\tiny$\pm$0.0007} & 0.0351{\tiny$\pm$0.0009} & 0.0387 & 0.0356 & 0.0207{\tiny$\pm$0.0001} \\
 &                             & Winkler       & \underline{0.0289{\tiny$\pm$0.0003}} & 0.0456{\tiny$\pm$0.0001} & 0.0455{\tiny$\pm$0.0003} & 0.0455{\tiny$\pm$0.0006} & 0.0462 & \dotuline{0.0447} & \textbf{\underline{0.0268{\tiny$\pm$0.0001}}} \\
\cmidrule[1.5pt]{1-10}
\multirow{9}{*}{\rotatebox{90}{ACEA}}
 & \multirow{3}{*}{\rotatebox{90}{RNN}} & $\Delta$Cov   & -0.78{\tiny$\pm$1.88} & \textcolor{taborange}{-2.18{\tiny$\pm$0.00}} & \textcolor{tabred}{-12.37{\tiny$\pm$8.98}} & \textcolor{tabred}{-18.86{\tiny$\pm$7.44}} & -0.99 & -0.33 & 1.56{\tiny$\pm$0.62} \\
 &                             & PI-Width      & 8.99{\tiny$\pm$0.68} & 18.90{\tiny$\pm$0.00} & 15.86{\tiny$\pm$1.99} & 15.23{\tiny$\pm$1.96} & 19.63 & 20.15 & 9.61{\tiny$\pm$0.26} \\
 &                             & Winkler       & \underline{14.27{\tiny$\pm$0.19}} & 27.56{\tiny$\pm$0.00} & 32.61{\tiny$\pm$5.69} & 34.61{\tiny$\pm$3.53} & 27.60 & \dotuline{26.83} & \textbf{\underline{12.91{\tiny$\pm$0.23}}} \\
\cmidrule[0.2pt]{3-10}
 & \multirow{3}{*}{\rotatebox{90}{Transf}} & $\Delta$Cov   & \textcolor{tabolive}{-1.41{\tiny$\pm$1.29}} & \textcolor{taborange}{-2.51{\tiny$\pm$0.00}} & \textcolor{tabred}{-13.35{\tiny$\pm$9.85}} & \textcolor{tabred}{-26.92{\tiny$\pm$7.68}} & \textcolor{tabred}{-5.52} & -0.45 & 3.54{\tiny$\pm$0.32} \\
 &                             & PI-Width      & 9.10{\tiny$\pm$0.23} & 18.29{\tiny$\pm$0.00} & 14.82{\tiny$\pm$2.02} & 13.20{\tiny$\pm$1.47} & 16.53 & 20.20 & 10.10{\tiny$\pm$0.16} \\
 &                             & Winkler       & \underline{14.58{\tiny$\pm$0.36}} & 27.53{\tiny$\pm$0.00} & 33.47{\tiny$\pm$7.18} & 39.98{\tiny$\pm$4.27} & 29.24 & \dotuline{27.47} & \textbf{\underline{12.90{\tiny$\pm$0.16}}} \\
\cmidrule[0.2pt]{3-10}
 & \multirow{3}{*}{\rotatebox{90}{ARIMA}} & $\Delta$Cov   & 1.41{\tiny$\pm$0.90} & \textcolor{taborange}{-3.58{\tiny$\pm$0.00}} & \textcolor{tabred}{-29.35{\tiny$\pm$11.01}} & \textcolor{tabred}{-27.10{\tiny$\pm$8.65}} & -0.75 & -0.40 & 5.02{\tiny$\pm$0.40} \\
 &                             & PI-Width      & 12.46{\tiny$\pm$0.35} & 34.84{\tiny$\pm$0.00} & 18.16{\tiny$\pm$2.43} & 17.39{\tiny$\pm$2.20} & 38.13 & 36.08 & 13.63{\tiny$\pm$0.55} \\
 &                             & Winkler       & \underline{17.36{\tiny$\pm$0.13}} & 44.69{\tiny$\pm$0.00} & 53.89{\tiny$\pm$9.33} & 48.49{\tiny$\pm$6.89} & 45.99 & \dotuline{43.70} & \textbf{\underline{16.21{\tiny$\pm$0.53}}} \\
\bottomrule[1.5pt]
\end{tabular}
}
\label{tab:multi_results_combined}
\end{table*}

\section{Experiments}\label{sec:experiments} 
We compare the performances of \gls{rescp} against state-of-the-art conformal prediction baselines on time series data coming from several applications.

\paragraph{Datasets and baselines}
We evaluated our method across four datasets. 
1) The \textbf{Solar} dataset comes from the US National Solar Radiation Database~\citep{sengupta2018the}. 
We used the dataset containing 50 time series from different locations over a period of 3 years, as done in previous work~\citep{auer2023conformal}. 
2) The \textbf{Beijing} dataset contains air quality measurements taken over a period of 4 years from 12 locations in the city of Beijing, China~\citep{zhang2017cautionary}. 
3) The \textbf{Exchange} dataset consists of a collection of the daily exchange rates of eight countries from 1990 to 2016~\citep{lai2017modeling}. 
4) \textbf{ACEA} contains electricity consumption data coming from the backbone of the energy supply network in the city of Rome~\citep{bianchi2015shortterm}. 
More details are reported in \autoref{app:datasets}. 
We compare \gls{rescp} and its \gls{rescqr} variant against the competitors presented in \autoref{sec:background}: 
1) vanilla \textbf{\gls{scp}}~\citep{vovk2005algorithmic}, 2) \textbf{\gls{nexcp}}~\citep{barber2023conformal}, 3) \textbf{\gls{scpi}}~\citep{chen2023sequential}, and 4) \textbf{\gls{hopcpt}}~\citep{auer2023conformal}.
We also include 5) \textbf{\gls{cornn}}, an \gls{rnn} with a multi-dimensional output trained to perform quantile regression on the calibration set using the pinball loss. The architecture is analogous to the model called \textsc{CoRNN} introduced in \cite{cini2025relational}~(more details in \autoref{app:hyperparameters}).
For \gls{rescp}, we use time-dependent weights~(with a linear decay schedule) and the cosine similarity between reservoir states as the similarity score. 

\paragraph{Experimental setup and evaluation metrics} 
\label{sec:exp_setup}
In our experiments, we adopted a $40\%/40\%/20\%$ split for training, calibration, and testing sets, respectively. 
As base models, we consider three different point predictors: a simple \gls{rnn} with gated recurrent cells~\citep{cho2014learning}, a decoder-only Transformer~\citep{vaswani2017attention}, and an ARIMA model~\citep{box1970time}. 
After training, we evaluated each model on the calibration set and saved the residuals, which we then used in all the baselines to compute the \glspl{pi}. 
As evaluation metrics, we considered the $\Delta\text{Cov}$, i.e., the difference between the specified confidence level $1-\alpha$ and the achieved coverage on the test set, the width of the \glspl{pi}, and the Winkler score~\citep{winkler1972decision}, which penalizes the \gls{pi} width whenever the observed value falls outside the computed interval, with the penalty scaled proportionally to the magnitude of the deviation. 
Model selection for all methods is done by minimizing the Winkler score over a validation set, except for \gls{hopcpt}, which follows a custom procedure~\citep{auer2023conformal}. 
More details on model selection for our methods are reported in \autoref{app:hyperparameters}.
For \gls{rescp}, when approximating the quantiles via Monte Carlo sampling, we set the number of observations used for calibration as the sample size. 
As mentioned in \autoref{sec:rescp}, we account for skewed distributions by building \glspl{pi} as in \autoref{eq:rescw_int_beta}. The optimal $\beta$ is chosen from $100$ linearly spaced values between $0$ to $\alpha$.

\subsection{Results} 


\begin{table*}[t]
\centering
\small
\caption{Runtime (in seconds) using the \gls{rnn} point forecasting baseline.
}

\setlength{\tabcolsep}{2.5pt}
\setlength{\aboverulesep}{0pt}
\setlength{\belowrulesep}{0pt}
\renewcommand{\arraystretch}{1.05}

\begin{tabular}{@{}l|cccc|ccc@{}}
\multicolumn{1}{c|}{}  & \multicolumn{4}{c|}{\textit{Learning}} & \multicolumn{3}{c}{\textit{Non-learning}} \\
Dataset & \gls{scpi} & \gls{hopcpt} & \gls{cornn} & \textbf{\gls{rescqr}} & \gls{scp} & \gls{nexcp} & \textbf{\gls{rescp}} \\
\midrule[1.5pt]
Solar & 1039.8{\tiny$\pm$2} & 4574.6{\tiny$\pm$1356.5} & 172.4{\tiny$\pm$17} & 82.1{\tiny$\pm$6.7} & 18 & 66 & 52.9{\tiny$\pm$14.7} \\
\cmidrule[0.2pt]{1-8}
Beijing & 351.4{\tiny$\pm$2.1} & 1838.5{\tiny$\pm$202.4} & 81.6{\tiny$\pm$3} & 46{\tiny$\pm$2.1} & 9 & 29 & 34.6{\tiny$\pm$1.4} \\
\cmidrule[0.2pt]{1-8}
Exchange & 50.6{\tiny$\pm$0.5} & 318{\tiny$\pm$0} & 36.9{\tiny$\pm$1.2} & 16{\tiny$\pm$0.9} & 2 & 2 & 6.5{\tiny$\pm$0.5} \\
\cmidrule[0.2pt]{1-8}
ACEA & 227.6{\tiny$\pm$5.8} & 2262.8{\tiny$\pm$918.9} & 95.3{\tiny$\pm$9.5} & 56.6{\tiny$\pm$2.9} & 7 & 57 & 70.5{\tiny$\pm$0.5} \\
\bottomrule[1.5pt]
\end{tabular}

\label{tab:simplified_runtime_rnn}
\end{table*}

\begin{figure}[b]
    \centering
    \includegraphics[width=0.9\linewidth, trim=50 20 50 30]{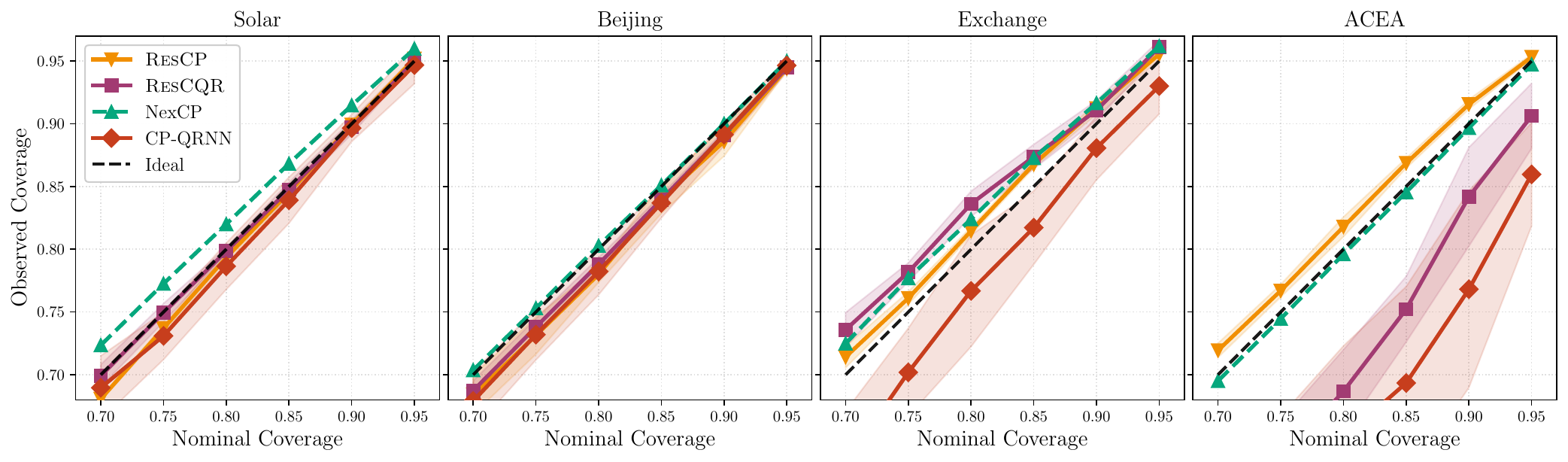}
    \caption{{Calibration curves of \gls{rescp}, \gls{rescqr}, \gls{cornn} and \gls{nexcp} over all datasets with the RNN baseline.}}
    \label{fig:calib_grid}
\end{figure}

Results across the datasets and base models are summarized in \autoref{tab:multi_results_combined}, where \gls{cp} methods are grouped based on whether they rely on learning or not. Additional results for different miscoverage levels can be found in \autoref{app:calib_curves} (see \autoref{tab:results_alpha_0.05} and \autoref{tab:results_alpha_0.15}).
\gls{rescp} achieves competitive performance across all settings and datasets, both in terms of coverage and Winkler score, while remaining highly scalable. 
Notably, \glspl{rescp} outperforms HopCPT in almost all scenarios. 
This is particularly remarkable since both methods use weighted empirical distributions to model uncertainty, but HopCPT requires training an attention-based architecture end-to-end, while \glspl{rescp} does not perform any form of training. 
Moreover, similarly to NexCP and differently from most of the other methods, \gls{rescp} provides approximately valid coverage across all scenarios, but with large improvements in \gls{pi} width, which is reduced up to 60\%. 
The \gls{cornn} baseline achieves strong performance on the large datasets with informative exogenous variables, outperforming all methods in Solar, but fails in achieving good results in the smaller datasets, such as ACEA and Exchange. 
Moreover, on those datasets, trainable methods obtain worse performance, likely due to distribution changes that require the trained models to be updated over time. 
\gls{rescqr} achieves competitive performance against the competitors, including more complex models such as \gls{cornn} in Solar, while being more scalable. 
All other baselines obtain good coverage in most settings, but produce prediction intervals that are much more conservative. 
The runtime of each \gls{cp} method is reported in \autoref{tab:simplified_runtime_rnn} and shows the advantage of \gls{rescp} in terms of scalability against methods that require fitting a model. 
Moreover, as \gls{rescp} does not need centralized training on a GPU: it can be easily scaled in a distributed setting.

\paragraph{Additional experiments and ablation studies} \autoref{fig:calib_grid} presents calibration curves for \gls{rescp} and \gls{rescqr} and the \gls{cornn} and \gls{nexcp} baselines, across all datasets and using an \gls{rnn} as the base point predictor. The curves show that \gls{rescp} provides accurate estimates at all the considered coverage levels. Note that while \gls{nexcp} is well calibrated, it  produces much wider intervals~(see \autoref{tab:multi_results_combined}, \autoref{tab:results_alpha_0.05} and \autoref{tab:results_alpha_0.15}). 
Additional results are provided in \autoref{app:calib_curves}, including an experiment of \gls{rescp} on a non-stationary synthetic dataset.
In \autoref{tab:ablation_study_rnn}, we also report an ablation study of \gls{rescp}. In particular, we evaluated the impact of 1) removing  time-dependent weights~(\textbf{No decay}), 2) using all the available samples for calibration rather than only the most recent ones~(\textbf{No window}), and 3) the combination of the previous two ablations~(\textbf{No window, no decay}). The results clearly show the impcat of the proposed designs. More comprehensive ablation results for all base models are presented in \autoref{app:ablation}, which also include 
an ablation study on the use of exogenous variables.
Finally, \autoref{app:sensitivity} reports additional results on the sensitivity of \gls{rescp} to different hyperparameter configurations. 

\begin{table}[t]
\centering
\caption{{Ablation of \textbf{\gls{rescp}}, \gls{rescp} without decay, \gls{rescp} without sliding window, and the combination of the two, using the \gls{rnn} base model. Best Winkler scores are shown in bold.}}
\setlength{\tabcolsep}{3pt}
\setlength{\aboverulesep}{0pt}
\setlength{\belowrulesep}{0pt}
\renewcommand{\arraystretch}{1.05}

\begin{tabular}{@{}l|l|c|ccc@{}}
\multicolumn{1}{c}{} & Metric & \textbf{\gls{rescp}} & {\scriptsize No decay} & {\scriptsize No window} & {\scriptsize No window, no decay} \\
\midrule[1.5pt]
\multirow{3}{*}{\rotatebox{90}{Solar}} & $\Delta$Cov & 0.74{\tiny$\pm$0.24} & -0.10{\tiny$\pm$0.26} & 0.89{\tiny$\pm$0.20} & 1.70{\tiny$\pm$0.10} \\
 & PI-Width & 62.25{\tiny$\pm$0.75} & 60.59{\tiny$\pm$0.84} & 59.81{\tiny$\pm$0.87} & 60.90{\tiny$\pm$0.30} \\
 & Winkler & \textbf{104.24{\tiny$\pm$0.79}} & 107.46{\tiny$\pm$0.49} & 104.70{\tiny$\pm$0.20} & 104.46{\tiny$\pm$0.40} \\
\midrule
\multirow{3}{*}{\rotatebox{90}{Beijing}} & $\Delta$Cov & -0.70{\tiny$\pm$0.77} & -1.64{\tiny$\pm$0.61} & 0.10{\tiny$\pm$1.36} & 1.64{\tiny$\pm$0.88} \\
 & PI-Width & 65.96{\tiny$\pm$2.50} & 64.04{\tiny$\pm$1.69} & 64.91{\tiny$\pm$3.36} & 69.53{\tiny$\pm$3.02} \\
 & Winkler & 106.07{\tiny$\pm$0.47} & 114.85{\tiny$\pm$0.29} & \textbf{97.99{\tiny$\pm$0.31}} & 98.25{\tiny$\pm$0.66} \\
\midrule
\multirow{3}{*}{\rotatebox{90}{Exch.}} & $\Delta$Cov & 1.13{\tiny$\pm$0.27} & 2.01{\tiny$\pm$0.19} & 4.19{\tiny$\pm$0.05} & 4.20{\tiny$\pm$0.12} \\
 & PI-Width & 0.0210{\tiny$\pm$0.0001} & 0.0219{\tiny$\pm$0.0001} & 0.0249{\tiny$\pm$0.0002} & 0.0254{\tiny$\pm$0.0003} \\
 & Winkler & \textbf{0.0264{\tiny$\pm$0.0002}} & 0.0269{\tiny$\pm$0.0002} & 0.0284{\tiny$\pm$0.0001} & 0.0291{\tiny$\pm$0.0002} \\
\midrule
\multirow{3}{*}{\rotatebox{90}{ACEA}} & $\Delta$Cov & 1.56{\tiny$\pm$0.62} & 2.79{\tiny$\pm$0.45} & 5.34{\tiny$\pm$0.38} & 4.96{\tiny$\pm$0.63} \\
 & PI-Width & 9.61{\tiny$\pm$0.26} & 10.15{\tiny$\pm$0.09} & 11.88{\tiny$\pm$0.09} & 12.15{\tiny$\pm$0.17} \\
 & Winkler & \textbf{12.91{\tiny$\pm$0.23}} & 13.41{\tiny$\pm$0.08} & 14.80{\tiny$\pm$0.17} & 15.25{\tiny$\pm$0.40} \\
\bottomrule
\end{tabular}

\label{tab:ablation_study_rnn}
\end{table}

\section{Conclusion and future works}\label{sec:conclusion} 

We introduced \acrfull{rescp}, a simple, fast, and effective method for uncertainty quantification in time series
forecasting that provides accurate uncertainty estimation across several tasks. 
We showed that \gls{rescp} is principled and theoretically sound.
In particular, assuming strong mixing conditions and reasonable regularity conditions, we demonstrated that \gls{rescp} provides asymptotically valid conditional coverage guarantees. 
Results on a diverse suite of benchmarks show that \gls{rescp} achieves state-of-the-art performance while being more scalable and computationally efficient than its competitors. 
There are several directions for future work. In particular, it would be interesting to explore extensions to modeling joint distributions in multistep forecasting~\citep{sun2024copula}, treat multidimensional time series~\citep{feldman2023calibrated, xu2024conformal}, and spatiotemporal data~\citep{cini2025relational}.

\subsubsection*{Acknowledgments}
This work is supported by the Research Council of Norway through \textit{RELAY:~Relational Deep Learning for Energy Analytics} (project no. 345017) and through its Centre of Excellence \textit{Integreat - The Norwegian Centre for knowledge-driven machine learning} (project no. 332645), the Swiss National Science Foundation grant no.~225351~(\textit{Relational Deep Learning for Reliable Time Series Forecasting at Scale}), the EPSRC Turing AI World-Leading Research Fellowship No. EP/X040062/1 and EPSRC AI Hub No. EP/Y028872/1.
The authors wish to thank Nvidia Corporation for donating some of the GPUs used in this project and Carlo Abate for the help in checking the proof.

\bibliography{biblio}

@book{vovk2005algorithmic,
    title     = {Algorithmic learning in a random world},
    author    = {Vovk, Vladimir and Gammerman, Alexander and Shafer, Glenn},
    volume    = {29},
    year      = {2005},
    publisher = {Springer}
}

@article{angelopoulos2023conformal,
    title     = {{Conformal prediction: A gentle introduction}},
    author    = {Angelopoulos, Anastasios N and Bates, Stephen and others},
    year      = 2023,
    journal   = {Foundations and Trends{\textregistered} in Machine Learning},
    publisher = {Now Publishers, Inc.},
    volume    = 16,
    number    = 4,
    pages     = {494--591}
}

@article{barber2023conformal,
    author = {Rina Foygel Barber and Emmanuel J. Cand{\`e}s and Aaditya Ramdas and Ryan J. Tibshirani},
    title = {{Conformal prediction beyond exchangeability}},
    volume = {51},
    journal = {The Annals of Statistics},
    number = {2},
    publisher = {Institute of Mathematical Statistics},
    pages = {816 -- 845},
    keywords = {conformal prediction, distribution-free inference, exchangeability, jackknife, robust statistics},
    year = {2023},
    doi = {10.1214/23-AOS2276},
}

@article{guan2022localized,
    author = {Guan, Leying},
    title = {Localized conformal prediction: a generalized inference framework for conformal prediction},
    journal = {Biometrika},
    volume = {110},
    number = {1},
    pages = {33-50},
    year = {2022},
    month = {07},
    abstract = {We propose a new inference framework called localized conformal prediction. It generalizes the framework of conformal prediction by offering a single-test-sample adaptive construction that emphasizes a local region around this test sample, and can be combined with different conformal scores. The proposed framework enjoys an assumption-free finite sample marginal coverage guarantee, and it also offers additional local coverage guarantees under suitable assumptions. We demonstrate how to change from conformal prediction to localized conformal prediction using several conformal scores, and we illustrate a potential gain via numerical examples.},
    issn = {1464-3510},
    doi = {10.1093/biomet/asac040},
    eprint = {https://academic.oup.com/biomet/article-pdf/110/1/33/49160126/asac040.pdf},
}

@inproceedings{auer2023conformal,
    author = {Auer, Andreas and Gauch, Martin and Klotz, Daniel and Hochreiter, Sepp},
    booktitle = {Advances in Neural Information Processing Systems},
    editor = {A. Oh and T. Naumann and A. Globerson and K. Saenko and M. Hardt and S. Levine},
    pages = {56027--56074},
    publisher = {Curran Associates, Inc.},
    title = {Conformal Prediction for Time Series with Modern Hopfield Networks},
    volume = {36},
    year = {2023}
}

@InProceedings{chen2023sequential,
    title = 	 {Sequential Predictive Conformal Inference for Time Series},
    author =       {Xu, Chen and Xie, Yao},
    booktitle = 	 {Proceedings of the 40th International Conference on Machine Learning},
    pages = 	 {38707--38727},
    year = 	 {2023},
    editor = 	 {Krause, Andreas and Brunskill, Emma and Cho, Kyunghyun and Engelhardt, Barbara and Sabato, Sivan and Scarlett, Jonathan},
    volume = 	 {202},
    series = 	 {Proceedings of Machine Learning Research},
    month = 	 {23--29 Jul},
    publisher =    {PMLR},
    pdf = 	 {https://proceedings.mlr.press/v202/xu23r/xu23r.pdf},
    abstract = 	 {We present a new distribution-free conformal prediction algorithm for sequential data (e.g., time series), called the <em>sequential predictive conformal inference</em> (SPCI). We specifically account for the nature that time series data are non-exchangeable, and thus many existing conformal prediction algorithms are not applicable. The main idea is to adaptively re-estimate the conditional quantile of non-conformity scores (e.g., prediction residuals), upon exploiting the temporal dependence among them. More precisely, we cast the problem of conformal prediction interval as predicting the quantile of a future residual, given a user-specified point prediction algorithm. Theoretically, we establish asymptotic valid conditional coverage upon extending consistency analyses in quantile regression. Using simulation and real-data experiments, we demonstrate a significant reduction in interval width of SPCI compared to other existing methods under the desired empirical coverage.}
}

@article{cini2025relational,
    title        = {{Relational Conformal Prediction for Correlated Time Series}},
    author       = {Cini, Andrea and Jenkins, Alexander and Mandic, Danilo and Alippi, Cesare and Bianchi, Filippo Maria},
    journal      = {International Conference on Machine Learning},
    year         = {2025}
}

@article{lee2025kernelbased,
    title={Kernel-based Optimally Weighted Conformal Time-Series Prediction},
    author={Jonghyeok Lee and Chen Xu and Yao Xie},
    journal={The Thirteenth International Conference on Learning Representations},
    year={2025},
    url={https://openreview.net/forum?id=oP7arLOWix}
}

@article{lukosevicius2009reservoir,
    title = {Reservoir computing approaches to recurrent neural network training},
    journal = {Computer Science Review},
    volume = {3},
    number = {3},
    pages = {127-149},
    year = {2009},
    issn = {1574-0137},
    doi = {https://doi.org/10.1016/j.cosrev.2009.03.005},
    url = {https://www.sciencedirect.com/science/article/pii/S1574013709000173},
    author = {Mantas Lukoševičius and Herbert Jaeger},
    abstract = {Echo State Networks and Liquid State Machines introduced a new paradigm in artificial recurrent neural network (RNN) training, where an RNN (the reservoir) is generated randomly and only a readout is trained. The paradigm, becoming known as reservoir computing, greatly facilitated the practical application of RNNs and outperformed classical fully trained RNNs in many tasks. It has lately become a vivid research field with numerous extensions of the basic idea, including reservoir adaptation, thus broadening the initial paradigm to using different methods for training the reservoir and the readout. This review systematically surveys both current ways of generating/adapting the reservoirs and training different types of readouts. It offers a natural conceptual classification of the techniques, which transcends boundaries of the current “brand-names” of reservoir methods, and thus aims to help in unifying the field and providing the reader with a detailed “map” of it.}
}

@techreport{jaeger2001esn,
    added-at = {2008-03-11T14:52:34.000+0100},
    author = {Jaeger, Herbert},
    biburl = {https://www.bibsonomy.org/bibtex/23d434b04cf1479acf45be9af65f8bc78/idsia},
    institution = {GMD - German National Research Institute for Computer Science},
    interhash = {c24261c15e88abe24a39d13339cc7697},
    intrahash = {3d434b04cf1479acf45be9af65f8bc78},
    keywords = {imported},
    number = 148,
    timestamp = {2008-03-11T14:52:36.000+0100},
    title = {The "echo state" approach to analysing and training recurrent neural networks},
    type = {GMD Report},
    url = {http://www.faculty.jacobs-university.de/hjaeger/pubs/EchoStatesTechRep.pdf},
    year = 2001
}

@article{bianchi2020reservoir,
    title={Reservoir computing approaches for representation and classification of multivariate time series},
    author={Bianchi, Filippo Maria and Scardapane, Simone and L{\o}kse, Sigurd and Jenssen, Robert},
    journal={IEEE Transactions on Neural Networks and Learning Systems},
    year={2020},
    publisher={IEEE}
}

@Inbook{gallicchio2020deep,
    author="Gallicchio, Claudio
    and Scardapane, Simone",
    editor="Oneto, Luca
    and Navarin, Nicol{\`o}
    and Sperduti, Alessandro
    and Anguita, Davide",
    title="Deep Randomized Neural Networks",
    bookTitle="Recent Trends in Learning From Data: Tutorials from the INNS Big Data and Deep Learning Conference (INNSBDDL2019)",
    year="2020",
    publisher="Springer International Publishing",
    address="Cham",
    pages="43--68",
    abstract="Randomized Neural Networks explore the behavior of neural systems where the majority of connections are fixed, either in a stochastic or a deterministic fashion. Typical examples of such systems consist of multi-layered neural network architectures where the connections to the hidden layer(s) are left untrained after initialization. Limiting the training algorithms to operate on a reduced set of weights inherently characterizes the class of Randomized Neural Networks with a number of intriguing features. Among them, the extreme efficiency of the resulting learning processes is undoubtedly a striking advantage with respect to fully trained architectures. Besides, despite the involved simplifications, randomized neural systems possess remarkable properties both in practice, achieving state-of-the-art results in multiple domains, and theoretically, allowing to analyze intrinsic properties of neural architectures (e.g. before training of the hidden layers' connections). In recent years, the study of Randomized Neural Networks has been extended towards deep architectures, opening new research directions to the design of effective yet extremely efficient deep learning models in vectorial as well as in more complex data domains. This chapter surveys all the major aspects regarding the design and analysis of Randomized Neural Networks, and some of the key results with respect to their approximation capabilities. In particular, we first introduce the fundamentals of randomized neural models in the context of feed-forward networks (i.e., Random Vector Functional Link and equivalent models) and convolutional filters, before moving to the case of recurrent systems (i.e., Reservoir Computing networks). For both, we focus specifically on recent results in the domain of deep randomized systems, and (for recurrent models) their application to structured domains.",
    isbn="978-3-030-43883-8",
    doi="10.1007/978-3-030-43883-8_3",
    url="https://doi.org/10.1007/978-3-030-43883-8_3"
}

@article{lei2013distributionfree,
    author = {Lei, Jing and Wasserman, Larry},
    title = {Distribution-free Prediction Bands for Non-parametric Regression},
    journal = {Journal of the Royal Statistical Society Series B: Statistical Methodology},
    volume = {76},
    number = {1},
    pages = {71-96},
    year = {2013},
    month = {07},
    abstract = {We study distribution-free, non-parametric prediction bands with a focus on their finite sample behaviour. First we investigate and develop different notions of finite sample coverage guarantees. Then we give a new prediction band by combining the idea of ‘conformal prediction’ with non-parametric conditional density estimation. The proposed estimator, called COPS (conformal optimized prediction set), always has a finite sample guarantee. Under regularity conditions the estimator converges to an oracle band at a minimax optimal rate. A fast approximation algorithm and a data-driven method for selecting the bandwidth are developed. The method is illustrated in simulated and real data examples.},
    issn = {1369-7412},
    doi = {10.1111/rssb.12021},
    url = {https://doi.org/10.1111/rssb.12021},
    eprint = {https://academic.oup.com/jrsssb/article-pdf/76/1/71/49514328/jrsssb_76_1_71.pdf},
}

@article{lei2018distribution,
  title={Distribution-free predictive inference for regression},
  author={Lei, Jing and G’Sell, Max and Rinaldo, Alessandro and Tibshirani, Ryan J and Wasserman, Larry},
  journal={Journal of the American Statistical Association},
  volume={113},
  number={523},
  pages={1094--1111},
  year={2018},
  publisher={Taylor \& Francis}
}

@article{romano2019conformalized,
  title={Conformalized quantile regression},
  author={Romano, Yaniv and Patterson, Evan and Candes, Emmanuel},
  journal={Advances in neural information processing systems},
  volume={32},
  year={2019}
}

@article{yildiz2012re,
  title={Re-visiting the echo state property},
  author={Yildiz, Izzet B and Jaeger, Herbert and Kiebel, Stefan J},
  journal={Neural networks},
  volume={35},
  pages={1--9},
  year={2012},
  publisher={Elsevier}
}

@article{lee2024transformer,
  title={Transformer conformal prediction for time series},
  author={Lee, Junghwan and Xu, Chen and Xie, Yao},
  journal={arXiv preprint arXiv:2406.05332},
  year={2024}
}

@article{feldman2023calibrated,
  title={Calibrated multiple-output quantile regression with representation learning},
  author={Feldman, Shai and Bates, Stephen and Romano, Yaniv},
  journal={Journal of Machine Learning Research},
  volume={24},
  number={24},
  pages={1--48},
  year={2023}
}

@article{bacciu2020gentle,
  title={A gentle introduction to deep learning for graphs},
  author={Bacciu, Davide and Errica, Federico and Micheli, Alessio and Podda, Marco},
  journal={Neural Networks},
  volume={129},
  pages={203--221},
  year={2020},
  publisher={Elsevier}
}

@article{hore2023conformal,
  title={Conformal prediction with local weights: randomization enables local guarantees},
  author={Hore, Rohan and Barber, Rina Foygel},
  journal={arXiv preprint arXiv:2310.07850},
  year={2023}
}

@article{barber2025unifying,
  title={Unifying different theories of conformal prediction},
  author={Barber, Rina Foygel and Tibshirani, Ryan J},
  journal={arXiv preprint arXiv:2504.02292},
  year={2025}
}

@article{bronstein2021geometric,
  title={Geometric deep learning: Grids, groups, graphs, geodesics, and gauges},
  author={Bronstein, Michael M and Bruna, Joan and Cohen, Taco and Veli{\v{c}}kovi{\'c}, Petar},
  journal={arXiv preprint arXiv:2104.13478},
  year={2021}
}

@inproceedings{
sun2024copula,
title={Copula Conformal prediction for multi-step time series prediction},
author={Sophia Huiwen Sun and Rose Yu},
booktitle={The Twelfth International Conference on Learning Representations},
year={2024},
url={https://openreview.net/forum?id=ojIJZDNIBj}
}

@incollection{lukovsevivcius2012practical,
  title={A practical guide to applying echo state networks},
  author={Luko{\v{s}}evi{\v{c}}ius, Mantas},
  booktitle={Neural Networks: Tricks of the Trade: Second Edition},
  pages={659--686},
  year={2012},
  publisher={Springer}
}

@inproceedings{
xu2024conformal,
title={Conformal prediction for multi-dimensional time series by ellipsoidal sets},
author={Chen Xu and Hanyang Jiang and Yao Xie},
booktitle={Forty-first International Conference on Machine Learning},
year={2024},
url={https://openreview.net/forum?id=uN39Tt9P8b}
}

@InProceedings{vovk2012conditional,
    title = 	 {Conditional Validity of Inductive Conformal Predictors},
    author = 	 {Vovk, Vladimir},
    booktitle = 	 {Proceedings of the Asian Conference on Machine Learning},
    pages = 	 {475--490},
    year = 	 {2012},
    editor = 	 {Hoi, Steven C. H. and Buntine, Wray},
    volume = 	 {25},
    series = 	 {Proceedings of Machine Learning Research},
    address = 	 {Singapore Management University, Singapore},
    month = 	 {04--06 Nov},
    publisher =    {PMLR},
    pdf = 	 {http://proceedings.mlr.press/v25/vovk12/vovk12.pdf},
    url = 	 {https://proceedings.mlr.press/v25/vovk12.html},
    abstract = 	 {Conformal predictors are set predictors that are automatically valid in the sense of having coverage probability equal to or exceeding a given confidence level. Inductive conformal predictors are a computationally efficient version of conformal predictors satisfying the same property of validity. However, inductive conformal predictors have been only known to control unconditional coverage probability. This paper explores various versions of conditional validity and various ways to achieve them using inductive conformal predictors and their modifications.}
}

@article{jensen2022ensemble,
  title={Ensemble conformalized quantile regression for probabilistic time series forecasting},
  author={Jensen, Vilde and Bianchi, Filippo Maria and Anfinsen, Stian Normann},
  journal={IEEE Transactions on Neural Networks and Learning Systems},
  volume={35},
  number={7},
  pages={9014--9025},
  year={2022},
  publisher={IEEE}
}

@inproceedings{tibshirani2019conformal,
    author = {Tibshirani, Ryan J and Foygel Barber, Rina and Candes, Emmanuel and Ramdas, Aaditya},
    booktitle = {Advances in Neural Information Processing Systems},
    editor = {H. Wallach and H. Larochelle and A. Beygelzimer and F. d\textquotesingle Alch\'{e}-Buc and E. Fox and R. Garnett},
    pages = {},
    publisher = {Curran Associates, Inc.},
    title = {Conformal Prediction Under Covariate Shift},
    url = {https://proceedings.neurips.cc/paper_files/paper/2019/file/8fb21ee7a2207526da55a679f0332de2-Paper.pdf},
    volume = {32},
    year = {2019}
}

@article{gallicchio2011architectural,
    title = {Architectural and Markovian factors of echo state networks},
    volume = {24},
    ISSN = {0893-6080},
    url = {http://dx.doi.org/10.1016/j.neunet.2011.02.002},
    DOI = {10.1016/j.neunet.2011.02.002},
    number = {5},
    journal = {Neural Networks},
    publisher = {Elsevier BV},
    author = {Gallicchio,  Claudio and Micheli,  Alessio},
    year = {2011},
    month = jun,
    pages = {440–456}
}

@book{doukhan1994mixing,
    title =     {Mixing: Properties and Examples},
    author =    {Doukhan, Paul},
    publisher = {Springer},
    isbn =      {1461226422; 9780387942148; 0387942149; 9781461226420},
    year =      {1994},
    series =    {Information Systems Development: Reflections, Challenges and New Directions №85},
    edition =   {1},
}

@article{xu2023conformal,
  author={Xu, Chen and Xie, Yao},
  journal={IEEE Transactions on Pattern Analysis and Machine Intelligence}, 
  title={Conformal Prediction for Time Series}, 
  year={2023},
  volume={45},
  number={10},
  pages={11575-11587},
  keywords={Predictive models;Time series analysis;Computational modeling;Data models;Prediction algorithms;Training;Uncertainty;Time series predictive inference;conformal prediction},
  doi={10.1109/TPAMI.2023.3272339}}

@article{kong1994sequential,
    ISSN = {01621459, 1537274X},
    URL = {http://www.jstor.org/stable/2291224},
    abstract = {For missing data problems, Tanner and Wong have described a data augmentation procedure that approximates the actual posterior distribution of the parameter vector by a mixture of complete data posteriors. Their method of constructing the complete data sets is closely related to the Gibbs sampler. Both required iterations, and, similar to the EM algorithm, convergence can be slow. We introduce in this article an alternative procedure that involves imputing the missing data sequentially and computing appropriate importance sampling weights. In many applications this new procedure works very well without the need for iterations. Sensitivity analysis, influence analysis, and updating with new data can be performed cheaply. Bayesian prediction and model selection can also be incorporated. Examples taken from a wide range of applications are used for illustration.},
    author = {Kong, Augustine and Liu, Jun S. and Wong, Wing Hung},
    journal = {Journal of the American Statistical Association},
    number = {425},
    pages = {278--288},
    publisher = {[American Statistical Association, Taylor & Francis, Ltd.]},
    title = {Sequential Imputations and Bayesian Missing Data Problems},
    urldate = {2025-09-17},
    volume = {89},
    year = {1994}
}

@book{liu2008monte,
    author = {Liu, Jun S.},
    title = {Monte Carlo Strategies in Scientific Computing},
    year = {2008},
    isbn = {0387763694},
    publisher = {Springer Publishing Company, Incorporated},
    abstract = {This paperback edition is a reprint of the 2001 Springer edition. This book provides a self-contained and up-to-date treatment of the Monte Carlo method and develops a common framework under which various Monte Carlo techniques can be "standardized" and compared. Given the interdisciplinary nature of the topics and a moderate prerequisite for the reader, this book should be of interest to a broad audience of quantitative researchers such as computational biologists, computer scientists, econometricians, engineers, probabilists, and statisticians. It can also be used as the textbook for a graduate-level course on Monte Carlo methods. Many problems discussed in the alter chapters can be potential thesis topics for masters or Ph.D. students in statistics or computer science departments. Jun Liu is Professor of Statistics at Harvard University, with a courtesy Professor appointment at Harvard Biostatistics Department. Professor Liu was the recipient of the 2002 COPSS Presidents' Award, the most prestigious one for statisticians and given annually by five leading statistical associations to one individual under age 40. He was selected as a Terman Fellow by Stanford University in 1995, as a Medallion Lecturer by the Institute of Mathematical Statistics (IMS) in 2002, and as a Bernoulli Lecturer by the International Bernoulli Society in 2004. He was elected to the IMS Fellow in 2004 and Fellow of the American Statistical Association in 2005. He and co-workers have published more than 130 research articles and book chapters on Bayesian modeling and computation, bioinformatics, genetics, signal processing, stochastic dynamic systems, Monte Carlo methods, and theoretical statistics. "An excellent survey of current Monte Carlo methods. The applications amply demonstrate the relevance of this approach to modern computing. The book is highly recommended." (Mathematical Reviews) "This book provides comprehensive coverage of Monte Carlo methods, and in the process uncovers and discusses commonalities among seemingly disparate techniques that arose in various areas of application. The book is well organized; the flow of topics follows a logical development. The coverage is up-to-date and comprehensive, and so the book is a good resource for people conducting research on Monte Carlo methods. The book would be an excellent supplementary text for a course in scientific computing ." (SIAM Review) "The strength of this book is in bringing together advanced Monte Carlo (MC) methods developed in many disciplines. Throughout the book are examples of techniques invented, or reinvented, in different fields that may be applied elsewhere. Those interested in using MC to solve difficult problems will find many ideas, collected from a variety of disciplines, and references for further study." (Technometrics)}
}

@article{makridakis2019forecasting,
    title = {Forecasting, uncertainty and risk; perspectives on clinical decision-making in preventive and curative medicine},
    journal = {International Journal of Forecasting},
    volume = {35},
    number = {2},
    pages = {659-666},
    year = {2019},
    issn = {0169-2070},
    doi = {https://doi.org/10.1016/j.ijforecast.2017.11.003},
    url = {https://www.sciencedirect.com/science/article/pii/S0169207017301346},
    author = {Spyros Makridakis and Richard Kirkham and Ann Wakefield and Maria Papadaki and Joanne Kirkham and Lisa Long},
    keywords = {Uncertainty, Risk, Diagnostic accuracy, Medical errors, Preventive screening, Medical research, Correct treatment, Medical reversals, Medical predictions},
}

@book{palmer2006predictability,
  title     = {Predictability of Weather and Climate},
  editor    = {Palmer, Tim and Hagedorn, Renate},
  year      = {2006},
  publisher = {Cambridge University Press},
  address   = {Cambridge, United Kingdom},
}

@article{hong2016probabilistic,
    title = {Probabilistic electric load forecasting: A tutorial review},
    journal = {International Journal of Forecasting},
    volume = {32},
    number = {3},
    pages = {914-938},
    year = {2016},
    issn = {0169-2070},
    doi = {https://doi.org/10.1016/j.ijforecast.2015.11.011},
    url = {https://www.sciencedirect.com/science/article/pii/S0169207015001508},
    author = {Hong, Tao and Fan, Shu},
    keywords = {Short term load forecasting, Long term load forecasting, Probabilistic load forecasting, Regression analysis, Artificial neural networks, Forecast evaluation},
    abstract = {Load forecasting has been a fundamental business problem since the inception of the electric power industry. Over the past 100 plus years, both research efforts and industry practices in this area have focused primarily on point load forecasting. In the most recent decade, though, the increased market competition, aging infrastructure and renewable integration requirements mean that probabilistic load forecasting has become more and more important to energy systems planning and operations. This paper offers a tutorial review of probabilistic electric load forecasting, including notable techniques, methodologies and evaluation methods, and common misunderstandings. We also underline the need to invest in additional research, such as reproducible case studies, probabilistic load forecast evaluation and valuation, and a consideration of emerging technologies and energy policies in the probabilistic load forecasting process.}
}

@article{benidis2022deep,
	title        = {{Deep Learning for Time Series Forecasting: Tutorial and Literature Survey}},
	author       = {Benidis, Konstantinos and Rangapuram, Syama Sundar and Flunkert, Valentin and Wang, Yuyang and Maddix, Danielle and Turkmen, Caner and Gasthaus, Jan and Bohlke-Schneider, Michael and Salinas, David and Stella, Lorenzo and Aubet, Fran\c{c}ois-Xavier and Callot, Laurent and Januschowski, Tim},
	year         = 2022,
	month        = {dec},
	journal      = {ACM Comput. Surv.},
	publisher    = {Association for Computing Machinery},
	address      = {New York, NY, USA},
	volume       = 55,
	number       = 6,
	doi          = {10.1145/3533382},
	issn         = {0360-0300},
	url          = {https://doi.org/10.1145/3533382},
	issue_date   = {June 2023},
	articleno    = 121,
	numpages     = 36,
	keywords     = {forecasting, Time series, neural networks}
}

@article{salinas2020deepar,
	title        = {{DeepAR: Probabilistic forecasting with autoregressive recurrent networks}},
	author       = {Salinas, David and Flunkert, Valentin and Gasthaus, Jan and Januschowski, Tim},
	year         = 2020,
	journal      = {International Journal of Forecasting},
	publisher    = {Elsevier},
	volume       = 36,
	number       = 3,
	pages        = {1181--1191}
}

@article{grigoryeva2019differentiable,
  author  = {Lyudmila Grigoryeva and Juan-Pablo Ortega},
  title   = {Differentiable reservoir computing},
  journal = {Journal of Machine Learning Research},
  year    = {2019},
  volume  = {20},
  number  = {179},
  pages   = {1--62},
  url     = {http://jmlr.org/papers/v20/19-150.html}
}

@article{grigoryeva2018echo,
    title = {Echo state networks are universal},
    journal = {Neural Networks},
    volume = {108},
    pages = {495-508},
    year = {2018},
    issn = {0893-6080},
    doi = {https://doi.org/10.1016/j.neunet.2018.08.025},
    url = {https://www.sciencedirect.com/science/article/pii/S089360801830251X},
    author = {Grigoryeva, Lyudmila and Ortega, Juan-Pablo},
    keywords = {Reservoir computing (RC), Universality, Echo state networks (ESN), Machine learning, Fading memory filters, Uniform system approximation},
    abstract = {This paper shows that echo state networks are universal uniform approximants in the context of discrete-time fading memory filters with uniformly bounded inputs defined on negative infinite times. This result guarantees that any fading memory input/output system in discrete time can be realized as a simple finite-dimensional neural network-type state-space model with a static linear readout map. This approximation is valid for infinite time intervals. The proof of this statement is based on fundamental results, also presented in this work, about the topological nature of the fading memory property and about reservoir computing systems generated by continuous reservoir maps.}
}

@article{grigoryeva2018universal,
    author = {Grigoryeva, Lyudmila and Ortega, Juan-Pablo},
    title = {Universal discrete-time reservoir computers with stochastic inputs and linear readouts using non-homogeneous state-affine systems},
    year = {2018},
    issue_date = {January 2018},
    publisher = {JMLR.org},
    volume = {19},
    number = {1},
    issn = {1532-4435},
    abstract = {A new class of non-homogeneous state-affine systems is introduced for use in reservoir computing. Sufficient conditions are identified that guarantee first, that the associated reservoir computers with linear readouts are causal, time-invariant, and satisfy the fading memory property and second, that a subset of this class is universal in the category of fading memory filters with stochastic almost surely uniformly bounded inputs. This means that any discrete-time filter that satisfies the fading memory property with random inputs of that type can be uniformly approximated by elements in the non-homogeneous state-affine family.},
    journal = {Journal of Machine Learning Research},
    month = jan,
    pages = {892–931},
    numpages = {40},
    keywords = {ESN, SAS, echo state affine systems, echo state networks, fading memory property, linear training, machine learning, reservoir computing, state-affine systems, stochastic signal treatment, universality}
}

@article{li2025universality,
  author={Li, Zhen and Yang, Yunfei},
  journal={IEEE Transactions on Neural Networks and Learning Systems}, 
  title={Universality and Approximation Bounds for Echo State Networks With Random Weights}, 
  year={2025},
  volume={36},
  number={2},
  pages={2720-2732},
  keywords={Topology;Reservoirs;Network topology;Fading channels;Task analysis;Symmetric matrices;Supervised learning;Approximation error;echo state networks (ESNs);reservoir computing;universality},
  doi={10.1109/TNNLS.2023.3339512}}

@article{sengupta2018the,
    title = {The National Solar Radiation Data Base (NSRDB)},
    journal = {Renewable and Sustainable Energy Reviews},
    volume = {89},
    pages = {51-60},
    year = {2018},
    issn = {1364-0321},
    doi = {https://doi.org/10.1016/j.rser.2018.03.003},
    url = {https://www.sciencedirect.com/science/article/pii/S136403211830087X},
    author = {Sengupta, Manajit and Xie, Yu and Lopez, Anthony and Habte, Aron and Maclaurin, Galen and Shelby, James},
    keywords = {Solar Radiation, Satellite},
    abstract = {The National Solar Radiation Data Base (NSRDB), consisting of solar radiation and meteorological data over the United States and regions of the surrounding countries, is a publicly open dataset that has been created and disseminated during the last 23 years. This paper briefly reviews the complete package of surface observations, models, and satellite data used for the latest version of the NSRDB as well as improvements in the measurement and modeling technologies deployed in the NSRDB over the years. The current NSRDB provides solar irradiance at a 4-km horizontal resolution for each 30-min interval from 1998 to 2016 computed by the National Renewable Energy Laboratory's (NREL's) Physical Solar Model (PSM) and products from the National Oceanic and Atmospheric Administration's (NOAA's) Geostationary Operational Environmental Satellite (GOES), the National Ice Center's (NIC's) Interactive Multisensor Snow and Ice Mapping System (IMS), and the National Aeronautics and Space Administration's (NASA's) Moderate Resolution Imaging Spectroradiometer (MODIS) and Modern Era Retrospective analysis for Research and Applications, version 2 (MERRA-2). The NSRDB irradiance data have been validated and shown to agree with surface observations with mean percentage biases within 5% and 10% for global horizontal irradiance (GHI) and direct normal irradiance (DNI), respectively. The data can be freely accessed via https://nsrdb.nrel.gov or through an application programming interface (API). During the last 23 years, the NSRDB has been widely used by an ever-growing group of researchers and industry both directly and through tools such as NREL's System Advisor Model.}
}

@article{zhang2017cautionary,
  title     = "Cautionary tales on air-quality improvement in Beijing",
  author    = "Zhang, Shuyi and Guo, Bin and Dong, Anlan and He, Jing and Xu,
               Ziping and Chen, Song Xi",
  abstract  = "The official air-quality statistic reported that Beijing had a
               9.9\% decline in the annual concentration of PM 2.5 in 2016.
               While this statistic offered some relief for the inhabitants of
               the capital, we present several analyses based on Beijing's PM
               2.5 data of the past 4 years at 36 monitoring sites along with
               meteorological data of the past 7 years. The analyses reveal the
               air pollution situation in 2016 was not as rosy as the 9.9\%
               decline would convey, and improvement if any was rather
               uncertain. The paper also provides an assessment on the city's
               PM 2.5 situation in the past 4 years.",
  journal   = "Proc. Math. Phys. Eng. Sci.",
  publisher = "The Royal Society",
  volume    =  473,
  number    =  2205,
  pages     = "20170457",
  month     =  sep,
  year      =  2017,
  copyright = "https://royalsociety.org/-/media/journals/author/Licence-to-Publish-20062019-final.pdf",
  language  = "en"
}

@article{lai2017modeling,
    title        = "Modeling long- and short-term temporal patterns with deep
                  neural networks",
    author       = "Lai, Guokun and Chang, Wei-Cheng and Yang, Yiming and Liu,
                  Hanxiao",
    abstract     = "Multivariate time series forecasting is an important machine
                  learning problem across many domains, including predictions
                  of solar plant energy output, electricity consumption, and
                  traffic jam situation. Temporal data arise in these
                  real-world applications often involves a mixture of long-term
                  and short-term patterns, for which traditional approaches
                  such as Autoregressive models and Gaussian Process may fail.
                  In this paper, we proposed a novel deep learning framework,
                  namely Long- and Short-term Time-series network (LSTNet), to
                  address this open challenge. LSTNet uses the Convolution
                  Neural Network (CNN) and the Recurrent Neural Network (RNN)
                  to extract short-term local dependency patterns among
                  variables and to discover long-term patterns for time series
                  trends. Furthermore, we leverage traditional autoregressive
                  model to tackle the scale insensitive problem of the neural
                  network model. In our evaluation on real-world data with
                  complex mixtures of repetitive patterns, LSTNet achieved
                  significant performance improvements over that of several
                  state-of-the-art baseline methods. All the data and
                  experiment codes are available online.",
    year         =  2017,
    primaryClass = "cs.LG",
    eprint       = "1703.07015",
    journal      = "arXiv preprint arXiv:1703.07015"
}

@article{bianchi2015shortterm,
    author={Bianchi, Filippo Maria and De Santis, Enrico and Rizzi, Antonello and Sadeghian, Alireza},
    journal={IEEE Access}, 
    title={Short-Term Electric Load Forecasting Using Echo State Networks and PCA Decomposition}, 
    year={2015},
    volume={3},
    number={},
    pages={1931-1943},
    keywords={Forecasting;Time series analysis;Load management;Predictive models;Genetic algorithms;Smart grids;Time-Series;Forecasting;Electric Load Prediction;Echo State Network;Genetic Algorithm;PCA;PCA;Dimensionality Reduction;Smart Grid;Time-series;forecasting;electric load prediction;echo state network;genetic algorithm;PCA;dimensionality reduction;smart grid},
    doi={10.1109/ACCESS.2015.2485943}
}

@inproceedings{vaswani2017attention,
    title={Attention Is All You Need},
    author={Vaswani, Ashish and Shazeer, Noam and Parmar, Niki and Uszkoreit, Jakob and Jones, Llion and Gomez, Aidan N. and Kaiser, {\L}ukasz and Polosukhin, Illia},
    booktitle={Advances in Neural Information Processing Systems (NeurIPS)},
    volume={30},
    year={2017}
}

@article{cho2014learning,
    title={Learning Phrase Representations using RNN Encoder--Decoder for Statistical Machine Translation},
    author={Cho, Kyunghyun and van Merri{\"e}nboer, Bart and Bahdanau, Dzmitry and Bengio, Yoshua},
    journal={arXiv preprint arXiv:1406.1078},
    year={2014}
}

@book{box1970time,
  title={Time Series Analysis: Forecasting and Control},
  author={Box, George EP and Jenkins, Gwilym M},
  year={1970},
  publisher={Holden-Day}
}

@article{winkler1972decision,
  title={A Decision-Theoretic Approach to Interval Estimation},
  author={Winkler, Robert L.},
  journal={Journal of the American Statistical Association},
  volume={67},
  number={338},
  pages={187--191},
  year={1972},
  publisher={Taylor \& Francis}
}

@inproceedings{
    loshchilov2018decoupled,
    title={Decoupled Weight Decay Regularization},
    author={Ilya Loshchilov and Frank Hutter},
    booktitle={International Conference on Learning Representations},
    year={2019},
    url={https://openreview.net/forum?id=Bkg6RiCqY7},
}

@inproceedings{kingma2015adam,
    author = {Kingma, Diederik P. and Ba, Jimmy},
    biburl = {https://www.bibsonomy.org/bibtex/2cf6db5e77b957676e0cbbd221f16c0fc/jaymt},
    booktitle = {ICLR (Poster)},
    editor = {Bengio, Yoshua and LeCun, Yann},
    ee = {http://arxiv.org/abs/1412.6980},
    interhash = {c14f3bd32b4636eff1d0234f08025bd5},
    intrahash = {cf6db5e77b957676e0cbbd221f16c0fc},
    keywords = {final thema:attentionisallyouneed},
    timestamp = {2021-11-20T12:34:31.000+0100},
    title = {Adam: A Method for Stochastic Optimization.},
    year = {2015}
}

@INPROCEEDINGS{dong2022asymptotic,
  title           = "Asymptotic stability in reservoir computing",
  booktitle       = "2022 International Joint Conference on Neural Networks
                     ({IJCNN})",
  author          = "Dong, Jonathan and B{\"o}rve, Erik and Rafayelyan, Mushegh
                     and Unser, Michael",
  publisher       = "IEEE",
  volume          = "148.34",
  pages           = "01--08",
  month           =  jul,
  year            =  2022,
  conference      = "2022 International Joint Conference on Neural Networks
                     (IJCNN)",
  location        = "Padua, Italy"
}

@ARTICLE{ceni2025edge,
  title     = "Edge of stability echo state network",
  author    = "Ceni, Andrea and Gallicchio, Claudio",
  abstract  = "Echo state networks (ESNs) are time series processing models
               working under the echo state property (ESP) principle. The ESP
               is a notion of stability that imposes an asymptotic fading of
               the memory of the input. On the other hand, the resulting
               inherent architectural bias of ESNs may lead to an excessive
               loss of information, which in turn harms the performance in
               certain tasks with long short-term memory requirements. To bring
               together the fading memory property and the ability to retain as
               much memory as possible, in this article, we introduce a new ESN
               architecture called the Edge of Stability ESN (ES2N). The
               introduced ES2N model is based on defining the reservoir layer
               as a convex combination of a nonlinear reservoir (as in the
               standard ESN), and a linear reservoir that implements an
               orthogonal transformation. In virtue of a thorough mathematical
               analysis, we prove that the whole eigenspectrum of the Jacobian
               of the ES2N map can be contained in an annular neighborhood of a
               complex circle of controllable radius. This property is
               exploited to tune the ES2N's dynamics close to the edge-of-chaos
               regime by design. Remarkably, our experimental analysis shows
               that ES2N model can reach the theoretical maximum short-term
               memory capacity (MC). At the same time, in comparison to
               conventional reservoir approaches, ES2N is shown to offer an
               excellent trade-off between memory and nonlinearity, as well as
               a significant improvement of performance in autoregressive
               nonlinear modeling and real-world time series modeling.",
  journal   = "IEEE Trans. Neural Netw. Learn. Syst.",
  publisher = "Institute of Electrical and Electronics Engineers (IEEE)",
  volume    =  36,
  number    =  4,
  pages     = "7555--7564",
  month     =  apr,
  year      =  2025,
  copyright = "https://ieeexplore.ieee.org/Xplorehelp/downloads/license-information/IEEE.html",
  language  = "en"
}

@article{harris2020array,
    title         = {Array programming with {NumPy}},
    author        = {Charles R. Harris and K. Jarrod Millman and St{\'{e}}fan J.
                 van der Walt and Ralf Gommers and Pauli Virtanen and David
                 Cournapeau and Eric Wieser and Julian Taylor and Sebastian
                 Berg and Nathaniel J. Smith and Robert Kern and Matti Picus
                 and Stephan Hoyer and Marten H. van Kerkwijk and Matthew
                 Brett and Allan Haldane and Jaime Fern{\'{a}}ndez del
                 R{\'{i}}o and Mark Wiebe and Pearu Peterson and Pierre
                 G{\'{e}}rard-Marchant and Kevin Sheppard and Tyler Reddy and
                 Warren Weckesser and Hameer Abbasi and Christoph Gohlke and
                 Travis E. Oliphant},
    year          = {2020},
    month         = sep,
    journal       = {Nature},
    volume        = {585},
    number        = {7825},
    pages         = {357--362},
    doi           = {10.1038/s41586-020-2649-2},
    publisher     = {Springer Science and Business Media {LLC}},
    url           = {https://doi.org/10.1038/s41586-020-2649-2}
}

@misc{reback2020pandas,
    author       = {{The pandas development team}},
    title        = {pandas-dev/pandas: Pandas},
    month        = feb,
    year         = 2020,
    publisher    = {Zenodo},
    version      = {latest},
    doi          = {10.5281/zenodo.3509134},
    url          = {https://doi.org/10.5281/zenodo.3509134}
}

@InProceedings{mckinney2010data,
    author    = { {W}es {M}c{K}inney },
    title     = { {D}ata {S}tructures for {S}tatistical {C}omputing in {P}ython },
    booktitle = { {P}roceedings of the 9th {P}ython in {S}cience {C}onference },
    pages     = { 56 - 61 },
    year      = { 2010 },
    editor    = { {S}t\'efan van der {W}alt and {J}arrod {M}illman },
    doi       = { 10.25080/Majora-92bf1922-00a }
}

@book{van2009python,
    author = {Van Rossum, Guido and Drake, Fred L.},
    title = {Python 3 Reference Manual},
    year = {2009},
    isbn = {1441412697},
    publisher = {CreateSpace},
    address = {Scotts Valley, CA}
}

@incollection{paszke2019pytorch,
    title = {PyTorch: An Imperative Style, High-Performance Deep Learning Library},
    author = {Paszke, Adam and Gross, Sam and Massa, Francisco and Lerer, Adam and Bradbury, James and Chanan, Gregory and Killeen, Trevor and Lin, Zeming and Gimelshein, Natalia and Antiga, Luca and Desmaison, Alban and Kopf, Andreas and Yang, Edward and DeVito, Zachary and Raison, Martin and Tejani, Alykhan and Chilamkurthy, Sasank and Steiner, Benoit and Fang, Lu and Bai, Junjie and Chintala, Soumith},
    booktitle = {Advances in Neural Information Processing Systems 32},
    pages = {8024--8035},
    year = {2019},
    publisher = {Curran Associates, Inc.},
    url = {http://papers.neurips.cc/paper/9015-pytorch-an-imperative-style-high-performance-deep-learning-library.pdf}
}

@misc{falcon2019pytorch,
	title        = {{PyTorch Lightning}},
	author       = {Falcon, William and {The PyTorch Lightning team}},
	year         = 2019,
	month        = 3,
	doi          = {10.5281/zenodo.3828935},
	url          = {https://github.com/PyTorchLightning/pytorch-lightning},
	license      = {Apache-2.0},
	version      = {1.4}
}

@misc{cini2022torch,
    author = {Cini, Andrea and Marisca, Ivan},
    license = {MIT},
    month = {3},
    title = {{Torch Spatiotemporal}},
    url = {https://github.com/TorchSpatiotemporal/tsl},
    year = {2022}
}
\bibliographystyle{biblio}
\newpage

\appendix
\section*{\Large Appendix}
\section{Proofs for the asymptotic conditional coverage}\label{proof:consistency}

We start the proof with the following lemma.

\begin{lem}[Regularity of the conditional CDF]
\label{lem:regularity}
Under Assumptions~\ref{ass:esn}--\ref{ass:continuity}, the conditional \gls{cdf} $F(r\mid \vh_t)$
is continuous in $\vh_t \in\mathbb R^{d_h}$, uniformly in $r\in\mathbb R$. 
\end{lem}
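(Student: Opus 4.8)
The plan is to transport the regularity in Assumption~\ref{ass:continuity}, which lives on the space of histories, onto the reservoir-state space, using the encoder $\text{ESN}_{\bm\theta}$ as a bridge. First I would record the easy direction: since $\vh_t=\text{ESN}_{\bm\theta}(\vx_{\leq t})$ is a deterministic, causal map that by Assumption~\ref{ass:esn} is $L_X$-Lipschitz w.r.t.\ $d_{\mathrm{fm}}$, we have $\sigma(\vh_t)\subseteq\sigma(\vx_{\leq t})$, so by the tower rule $F(r\mid\vh_t)=\mathbb E\big[F(r\mid\vx_{\leq t})\mid\vh_t\big]$. Writing $\psi_r(\vx_{\leq t}):=F(r\mid\vx_{\leq t})$, Assumption~\ref{ass:continuity} is precisely the statement that $\psi_r$ admits a modulus of continuity $\omega$ w.r.t.\ $d_{\mathrm{fm}}$ that is uniform in $r$: $\sup_r|\psi_r(\vx_{\leq t})-\psi_r(\vx_{\leq t}')|\le\omega\big(d_{\mathrm{fm}}(\vx_{\leq t},\vx_{\leq t}')\big)$ with $\omega(\eta)\to0$ as $\eta\to0$.

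The crux is the reverse control: if two states $\vh,\vh'$ are close in $\mathbb R^{D_h}$, must the histories producing them be close in $d_{\mathrm{fm}}$? I would argue this on the set $\mathcal X$ of histories realized by the process. If the residuals are essentially bounded (or, by stationarity, after discarding a set of histories of arbitrarily small mass), $\mathcal X$ lies in a $d_{\mathrm{fm}}$-compact set --- a geometrically weighted product of bounded balls, compact by Tychonoff since $d_{\mathrm{fm}}$ metrizes the product topology on it. If moreover $\text{ESN}_{\bm\theta}$ is injective on $\mathcal X$ --- the ``faithful encoding'' regime flagged in the Remark --- then a continuous injection from a compact metric space is a homeomorphism onto its image, so $\text{ESN}_{\bm\theta}^{-1}$ is uniformly continuous and there is a modulus $\varepsilon$ with $d_{\mathrm{fm}}(\vx_{\leq t},\vx_{\leq t}')\le\varepsilon(\|\vh-\vh'\|)$ whenever $\vh=\text{ESN}_{\bm\theta}(\vx_{\leq t})$ and $\vh'=\text{ESN}_{\bm\theta}(\vx_{\leq t}')$. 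Injectivity also makes the fibers singletons, so $\sigma(\vh_t)=\sigma(\vx_{\leq t})$ and $F(r\mid\vh_t)=\psi_r\big(\text{ESN}_{\bm\theta}^{-1}(\vh_t)\big)$; composing the two moduli then yields $\sup_r|F(r\mid\vh)-F(r\mid\vh')|\le\omega\big(\varepsilon(\|\vh-\vh'\|)\big)\to0$, which is the lemma. (Without injectivity one would instead need the $d_{\mathrm{fm}}$-diameter of each fiber, and of nearby fibers, to vanish, which again reduces to a quantitative embedding property of the encoder and renders the tower-rule average harmless.)

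I expect the compactness and tower-rule bookkeeping to be routine; the genuine obstacle is the step ``close states $\Rightarrow$ close histories,'' which is not implied by the forward Lipschitz bound of Assumption~\ref{ass:esn} alone. I would therefore either make explicit a bi-Lipschitz / topological-embedding hypothesis on $\text{ESN}_{\bm\theta}$ restricted to the realized histories, or verify that injectivity on $\mathcal X$ together with compactness is what the standing assumptions are meant to deliver; absent some such input, $F(\cdot\mid\vh_t)$ is merely a conditional expectation, defined up to null sets, and need not possess a continuous representative.
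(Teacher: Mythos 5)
Your proposal is essentially a careful, repaired version of the argument the paper actually gives, and your diagnosis of the weak point is accurate. The paper's proof of Lemma~\ref{lem:regularity} is exactly the naive composition you decline to accept: it observes that $\vx_{\leq t}\mapsto\vh_t$ is Lipschitz (Assumption~\ref{ass:esn}) and $\vx_{\leq t}\mapsto F(r\mid\vx_{\leq t})$ is continuous (Assumption~\ref{ass:continuity}), and concludes that ``the composition $\vx_{\leq t}\mapsto\vh_t\mapsto F(r\mid\vh_t)$ is continuous in $\vh_t$.'' As you point out, both cited facts control the wrong direction: they say close histories give close states and close conditional laws, whereas continuity in $\vh_t$ requires that close states force close conditional laws, i.e.\ that the history-level map factors continuously through the encoder. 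The forward Lipschitz bound alone does not deliver this, and $F(r\mid\vh_t)$ is a priori only a conditional expectation defined up to null sets, so your tower-rule bookkeeping $F(r\mid\vh_t)=\mathbb E[F(r\mid\vx_{\leq t})\mid\vh_t]$ and the explicit identification of the ``close states $\Rightarrow$ close histories'' obstruction are exactly the steps the paper skips. Your proposed repair --- restrict to the realized, essentially bounded histories (a $d_{\mathrm{fm}}$-compact set, since $d_{\mathrm{fm}}$ metrizes the product topology there), assume $\text{ESN}_{\bm\theta}$ is injective on that set, and use that a continuous injection from a compact metric space is a homeomorphism onto its image to obtain a modulus for $\text{ESN}_{\bm\theta}^{-1}$ --- is sound and yields the lemma on the realized state set, at the price of an embedding/injectivity hypothesis that is nowhere among the paper's stated assumptions (it is only gestured at informally in the paper's Remark about the reservoir state ``capturing all relevant information''). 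So there is no gap in your reasoning; rather, you have located a genuine lacuna in the paper's own proof, and an honest fix requires either your added hypothesis (or a bi-Lipschitz/vanishing-fiber-diameter variant), or simply assuming continuity of the conditional law in $\vh$ directly, which is closer to what the downstream use in Theorem~\ref{thm:consistency} (Term~(II)) actually needs.
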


\begin{proof}
By Assumption~\ref{ass:esn}, the map $\vx_{\leq t}\mapsto \vh_t$ is Lipschitz
with respect to $d_{\mathrm{fm}}$. By Assumption~\ref{ass:continuity}, the map
$\vx_{\leq t}\mapsto F(r\mid \vx_{\leq t})$ is continuous.
Therefore the composition ${\vx_{\leq t}\mapsto \vh_t\mapsto F(r\mid \vh_t)}$
is continuous in $\vh_t$. 
\end{proof}

We then continue with the proof of Theorem~\ref{thm:consistency}, which we start by proving point wise convergence. 

\begin{proof}[Proof of Theorem~\ref{thm:consistency}]
Define
\begin{equation*}
    \bar{F}_n(r \mid\vh_t) \coloneqq \sum_{i=1}^n w^{(n)}_i(\vh_t)F(r\mid\vh_i).
\end{equation*}
Decompose
\begin{equation*}
    \widehat F_n(r\mid \vh_t)-F(r\mid \vh_t)
    = \underbrace{(\widehat{F}_n(r \mid\vh_t) - \bar{F}_n(r \mid\vh_t))}_{\mathrm{(I)}}
    + \underbrace{(\bar{F}_n(r \mid\vh_t) - {F}(r \mid\vh_t))}_{\mathrm{(II)}}.
\end{equation*}
We handle the two terms separately and show both going uniformly to 0 in probability.

{
\paragraph{Term (I):}
\begin{equation*}
    \widehat{F}_n(r \mid\vh_t) - \bar{F}_n(r \mid\vh_t) = \sum_{i=1}^n w^{(n)}_i(\vh_t)\big(\mathds{1}\{r_{i+H} \leq r \} - F(r\mid \vh_i)\big) = \sum_{i=1}^n w^{(n)}_i(\vh_t)Y_i(r) = S_n(r)
\end{equation*}
with $Y_i(r) = \mathds{1}\{r_{i+H} \leq r \} - F(r\mid \vh_i)$.
We can see that $\mathbb{E}\left[ Y_i(r) \mid \vh_i \right] = 0$, since
\begin{align*}
    \mathbb{E}[\mathds{1}\{ r_i \leq r\} \mid \vh_i] &= 1\cdot\mathbb{P}(r_i \leq r \mid \vh_i) + 0 \cdot \mathbb{P}(r_i > r \mid \vh_i) = \mathbb{P}(r_i \leq r \mid \vh_i) \eqqcolon F(r\mid\vh_i). 
\end{align*}
Therefore, ${\mathbb{E}\left[ S_n(r) \right] = 0}$. Moreover, $\vert Y_i(r) \vert \leq 1$. To prove that $S_n(r) \xrightarrow[n\to\infty]{\mathbb{P}} 0$, we leverage Chebyshev's inequality
\begin{equation*}
    \mathbb{P}(\vert S_n(r) \vert > \epsilon) \leq \frac{\Var(S_n(r))}{\epsilon^2}.
\end{equation*}
and then expand the variance as 
\begin{align}\label{eq:stochastic_variance}
    \Var(S_n(r)) = \Var\left( \sum_{i=1}^n w^{(n)}_iY_i \right) & = \sum_{i=1}^n (w^{(n)}_i)^2 \Var(Y_i) + \sum_{i \neq j } w^{(n)}_iw^{(n)}_j\Cov(Y_i, Y_j)\\
    & \le \sum_{i=1}^n (w^{(n)}_i)^2 \Var(Y_i) + \sum_{i \neq j } w^{(n)}_iw^{(n)}_j\vert \Cov(Y_i, Y_j)\vert. \nonumber
\end{align}
Starting by the variances term, since $\vert Y_i \vert \leq 1$, we have $\Var(Y_i) \leq 1$, thus the first term of \autoref{eq:stochastic_variance} is bounded $\sum_{i=1}^n (w^{(n)}_i)^2 \Var(Y_i) \leq \sum_{i=1}^n (w^{(n)}_i)^2 \to 0$ thanks to Assumption~\ref{ass:weights}~(i).
We now move to the covariances. From Lemma 3 of \cite{doukhan1994mixing}, we know that since the process generating $(\vx_i, r_{i+H})$ is mixing by Assumption~\ref{ass:mixing}, for bounded variables $Y_i, Y_j$ that are function of this process the following holds
\begin{equation*}
    \left\vert \Cov(Y_i, Y_j) \right\vert \leq 4 \alpha(\vert i-j \vert),
\end{equation*}
where $\alpha(k)$ is the mixing coefficient. Hence, in our case, 
\begin{align*}
     \sum_{i\neq j} w^{(n)}_iw^{(n)}_j\vert \Cov(Y_i, Y_j) \vert &\leq 4  \sum_{i\neq j} w^{(n)}_iw^{(n)}_j \alpha(\vert i-j \vert).
\end{align*}
We can split the sum in two terms 
\begin{equation*}
    4 \underbrace{\sum_{1 \leq \vert i - j \vert \leq K} w^{(n)}_iw^{(n)}_j \alpha(\vert i-j \vert)}_{(*)} + 4 \underbrace{\sum_{\vert i - j \vert > K} w^{(n)}_iw^{(n)}_j \alpha(\vert i-j \vert)}_{(**)} 
\end{equation*}
such that for all lags $k > K$ the mixing coefficient is arbitrary small, i.e., for any $\epsilon > 0, \alpha(k) < \epsilon$~(this is always possible). 
\begin{align*}
    (*)  & \leq \left(\max_{1\leq k \leq K} \alpha(k)\right) \sum_{1 \leq \vert i - j \vert \leq K} w^{(n)}_iw^{(n)}_j   \\
    & \le A_{max} \sum_{1 \le |i-j| \le K} w_i^{(n)} w_j^{(n)}\\
    & = 2 A_{max} \sum_{k=1}^K \sum_{i=1}^{n-k} w_i^{(n)} w_{i+k}^{(n)} 
\end{align*}

Using the inequality $2ab \le a^2 + b^2$:
\begin{align*}
    2 A_{max} \sum_{k=1}^K \sum_{i=1}^{n-k} w_i^{(n)} w_{i+k}^{(n)} 
    & \le A_{max} \sum_{k=1}^K \left(\sum_{i=1}^{n-k} (w_i^{(n)})^2 + \sum_{i=1}^{n-k} (w_{i+k}^{(n)})^2 \right)\\
    & \le 2 A_{max} \sum_{k=1}^K \sum_{i=1}^{n} (w_i^{(n)})^2\\
    & = 2 A_{max} K \sum_{i=1}^n (w_i^{(n)})^2 \\
    &= \frac{2 A_{max} K}{m_n} \to 0.
\end{align*}

For the second term
\begin{align*}
    (**) &= \sum_{\vert i - j \vert > K} w^{(n)}_iw^{(n)}_j \alpha(\vert i-j \vert)\\
    &\leq \epsilon \sum_{\vert i - j \vert > K} w^{(n)}_iw^{(n)}_j \\
    &\leq \epsilon
\end{align*}
where $\epsilon$ is arbitrarily small.
}

{
\paragraph{Term (II):} 
\begin{equation*}
    \bar{F}_n(r \mid \vh_t) - {F}(r \mid \vh_t) = \sum_{i=1}^n w^{(n)}_i(\vh_t)\big(F(r\mid \vh_i)-F(r\mid \vh_t)\big)
\end{equation*}
Because weights are a convex combination (nonnegative, sum to 1),
\begin{equation*}
    |\bar{F}_n(r \mid\vh_t) - {F}(r \mid\vh_t)| \leq \sum_{i=1}^n w^{(n)}_i(\vh_t)\big|F(r\mid \vh_i)-F(r\mid \vh_t)\big|
\end{equation*}
Now we use Lemma~\ref{lem:regularity}. For any $\epsilon > 0$ we can choose $\delta > 0$ so that $\Vert \vh_i - \vh_t \Vert < \delta$ implies ${\sup_b\big|F(r\mid \vh_i)-F(r\mid \vh_t)\big| < \epsilon}$. 
We can split the sum into indices with $\Vert \vh_i - \vh_t \Vert < \delta$ and ${\Vert \vh_i - \vh_t \Vert \geq \delta}$:
\begin{align*}
    \sum_{i=1}^n w^{(n)}_i(\vh_t)\big|F(r\mid \vh_i)-F(r\mid \vh_t)\big| &= \sum_{i:\Vert \vh_i - \vh_t \Vert < \delta} w^{(n)}_i(\vh_t)\big|F(r\mid \vh_i)-F(r\mid \vh_t)\big| \\
    &+ \sum_{i:\Vert \vh_i - \vh_t \Vert \geq \delta} w^{(n)}_i(\vh_t)\big|F(r\mid \vh_i)-F(r\mid \vh_t)\big|
\end{align*}
The first term is smaller than $\epsilon$,
\begin{equation*}
    \sum_{i:\Vert \vh_i - \vh_t \Vert < \delta} w^{(n)}_i(\vh_t)\big|F(r\mid \vh_i)-F(r\mid \vh_t)\big| < \epsilon \sum_{i:\Vert \vh_i - \vh_t \Vert < \delta} w^{(n)}_i(\vh_t) < \epsilon
\end{equation*}
which we can make arbitrarily small.
The second term goes to 0 in probability under Assumptions~\ref{ass:weights}-\ref{subass:temperature_cooling} and ~\ref{ass:weights}-\ref{subass:vanish_weights}: as the temperature goes to zero, the weight concentrates on observations $\vh_i$ close to $\vh_t$, and the weight of points not in the neighborhood of $\vh_t$ vanishes
\begin{equation*}
    \sum_{i:\Vert \vh_i - \vh_t \Vert \geq \delta} w^{(n)}_i(\vh_t)\big|F(r\mid \vh_i)-F(r\mid \vh_t)\big| \leq 2 \sum_{i:\Vert \vh_i - \vh_t \Vert \geq \delta} w^{(n)}_i(\vh_t) \xrightarrow[n\to \infty]{\mathbb{P}} 0.
\end{equation*}
}

{
\paragraph{Uniform convergence:} 

The proof here is analogous to proofs of the Glivenko–Cantelli theorem. Consider points
\begin{equation*}
-\infty = r_0 < r_1 < ... < r_{K-1} < r_K = \infty
\end{equation*}
such that for all $\epsilon > 0$ 
\begin{equation*}
F(r_k\mid \vh_t) - F(r_{k-1}\mid \vh_t) \leq \epsilon.
\end{equation*}

This implies
\begin{equation*}
\sup_{r\in \sR}\big|\widehat F_n(r\mid \vh_t)-F(r\mid \vh_t)\big| \leq \max_{1\leq k \leq K} \big|\widehat F_n(r_k\mid \vh_t)-F(r_k\mid \vh_t)\big|\ + \epsilon
\end{equation*}
Since we already proven pointwise convergence, we have that
\begin{equation*}
\max_{1\leq k \leq K} \big|\widehat F_n(r_k\mid \vh_t)-F(r_k\mid \vh_t)\big|\ \xrightarrow[n\to\infty]{\mathbb P}\; 0
\end{equation*}
and since $\epsilon$ can be made arbitrary small 
\begin{equation*}
 \sup_{r\in\mathbb R} \,\big|\widehat F_n(r\mid \vh_t)-F(r\mid \vh_t)\big|
    \;\xrightarrow[n\to\infty]{\mathbb P}\; 0.
\end{equation*}
}
\end{proof}

\begin{proof}[Proof of Corollary~\ref{cor:asym_cond_cov}]
Starting from
\begin{equation*}
    \mathbb{P}\left( y_{t+H} \in \hat{C}_{t}^\alpha (\hat{y}_{t+H}) \mid \vh_t \right) = \mathbb{P}\left( r_{t+H} \in \left[ \widehat{Q}_{\alpha/2}(\vh_t), \widehat{Q}_{1-\alpha/2}(\vh_t) \right] \mid \vh_t \right)
\end{equation*}
so we have
\begin{equation*}
    \mathbb{P}\left( r_{t+H} \in \left[\widehat{Q}_{\alpha/2}(\vh_t), \widehat{Q}_{1-\alpha/2}(\vh_t)\right] \mid \vh_t \right) = F\left(\widehat{Q}_{1-\alpha/2}(\vh_t) \big\vert\; \vh_t\right) - F\left(\widehat{Q}_{\alpha/2}(\vh_t) \big\vert\; \vh_t\right)
\end{equation*}
and since Theorem~\ref{thm:consistency} gives us the consistency of the quantile estimator to the true conditional quantile, we have that
\begin{align*}
    &F\left(\widehat{Q}_{1-\alpha/2}(\vh_t) \big\vert\; \vh_t\right) - F\left(\widehat{Q}_{\alpha/2}(\vh_t) \big\vert\; \vh_t\right)  \xrightarrow[n\to\infty]{\mathbb{P}} 1-\alpha.
\end{align*}
\end{proof}

\section{Datasets and point predictors}\label{app:datasets}

\subsection{Datasets}
In the experiments that we performed, we considered four datasets coming from different real-world scenarios and application domains.

\paragraph{Solar data} The solar data comes from the US National Solar Radiation Database~\citep{sengupta2018the}. We worked with the dataset of 50 time series generated by~\cite{auer2023conformal}, each consisting of 52,609 half-hourly measurements. These were resampled to obtain 26,304 hourly observations. Solar radiation (\texttt{dhi}, Diffuse Horizontal Irradiance) was designated as the target variable, and the dataset includes eight additional environmental features:
\begin{itemize}
    \item[-] \texttt{dni}, Direct Normal Irradiance: amount of solar radiation received per unit area from the sun’s direct beam on a surface kept perpendicular (normal) to the sun’s rays.
    \item[-] \texttt{dew\_point}: the temperature at which air becomes saturated with water vapor, causing condensation (dew) to form.
    \item[-] \texttt{air\_temperature}: the ambient temperature of the air.
    \item[-] \texttt{wind\_speed}: the rate at which air is moving horizontally.
    \item[-] \texttt{total\_precipitable\_water}: the depth of water in a column of the atmosphere if all its water vapor were condensed and precipitated.
    \item[-] \texttt{relative\_humidity}: the percentage ratio of the current water vapor in the air to the maximum amount the air can hold at that temperature.
    \item[-] \texttt{solar\_zenith\_angle}: the angle between the sun’s rays and the vertical direction at a given location.
    \item[-] \texttt{surface\_albedo}: the fraction of incoming solar radiation reflected by the Earth’s surface.
\end{itemize}

\paragraph{Air Quality data} The dataset was first introduced in~\cite{zhang2017cautionary} and comprises 35,064 measurements from 12 locations in Beijing, China. It includes two air-quality measurements (PM10 and PM2.5); for our experiments, we used PM10 as the target and excluded PM2.5 from the feature set. The dataset also contains ten additional environmental variables:
\begin{itemize}
    \item[-] \texttt{SO$_2$}: sulfur dioxide.
    \item[-] \texttt{NO$_2$}: nitrogen dioxide.
    \item[-] \texttt{CO}: carbon monoxide.
    \item[-] \texttt{O$_3$} ozone.
    \item[-] \texttt{TEMP}: ambient air temperature.
    \item[-] \texttt{PRES}: atmospheric pressure.
    \item[-] \texttt{DEWP}: dew point temperature.
    \item[-] \texttt{RAIN}: amount of precipitation.
    \item[-] \texttt{wd}: the direction from which the wind is blowing. This was encoded following the method in~\cite{auer2023conformal}.
    \item[-] \texttt{WSPM}: horizontal wind speed.
\end{itemize}

\paragraph{Exchange rate data} The collection of the daily exchange rates of eight foreign countries, including Australia, Britain, Canada, Switzerland, China, Japan, New Zealand, and Singapore. The data span 1990–2016, yielding 7,588 samples.

\paragraph{ACEA data} The data was first introduced in~\cite{bianchi2015shortterm}. The time series represents electricity load provided by ACEA (\textit{Azienda Comunale Energia e Ambiente}), the company that manages the electric and hydraulic distribution in the city of Rome, Italy. It was sampled every 10 minutes and contains 137,376 observations.

\subsection{Point predictors}
For each dataset, we trained three distinct point-forecasting models that span different modelling paradigms. First, a \gls{rnn} built with \gls{gru} cells and a single layer with a hidden state size of 32. Second, a decoder-only Transformer configured with hidden size 32, a feed-forward size of 64, two attention heads, three stacked layers, and a dropout rate of 0.1. Third, an \gls{arima} model of order ($3,1,3$). 
The first two models were trained and configured similarly to~\cite{cini2025relational}, i.e. by minimizing the \gls{mae} loss with the \gls{adam} optimizer for 200 epochs and a batch size of 32.

For each dataset, every model was trained independently using $40\%$ of that dataset’s samples as the training set. The first $25\%$ of the calibration split is used as a validation set for early stopping.
\section{Hyperparameters and experimental setup}\label{app:hyperparameters}

\subsection{Evaluation metrics}
Given a prediction interval $\widehat{C}^\alpha(\hat{y}_{t}) = \left[\hat{y}_{t} + \hat{q}^{\alpha/2}_{t}, \hat{y}_{t} + \hat{q}^{1-\alpha/2}_{t}\right]$ computed for a desired confidence level $\alpha$ and true observation $y_t$, we evaluated our method against the baselines using three fundamental metrics in uncertainty quantification.

\paragraph{Coverage gap} The coverage gap is defined as
\begin{equation*}
    \Delta\text{Cov} = 100 \;\left(\mathds{1}(y_t\in \widehat{C}^\alpha(\hat{y}_{t})) - (1-\alpha)\right).
\end{equation*}
and indicates the difference between the target coverage $1-\alpha$ and the observed one. 

\paragraph{Prediction interval width} This metric quantifies the width of the prediction intervals, i.e., how uncertain the model is about prediction $\hat{y}_t$
\begin{equation*}
    \text{PI-Width} = \hat{q}^{1-\alpha/2}_{t} - \hat{q}^{\alpha/2}_{t}.
\end{equation*}

\paragraph{Winkler score} This is a combined metric that adds to the $\text{PI-Width}$ a penalty if the actual value $y_t$ falls outside the \gls{pi}. The penalty is proportional by a factor $\frac{2}{\alpha}$ to the degree of error of the \gls{pi}, i.e. how far the actual value is from the closest bound of the interval.
The Winkler score is defined as:
\begin{equation*}
    W = \begin{cases}
        (\hat{q}^{1-\alpha/2}_t - \hat{q}^{\alpha/2}_t) + \frac{2}{\alpha}(\hat{q}^{\alpha/2}_t - y_t) & \text{if } y_t < \hat{q}^{\alpha/2}_t, \\
        (\hat{q}^{1-\alpha/2}_t - \hat{q}^{\alpha/2}_t) & \text{if } \hat{q}^{\alpha/2}_t \leq y_t \leq \hat{q}^{1-\alpha/2}_t, \\
        (\hat{q}^{1-\alpha/2}_t - \hat{q}^{\alpha/2}_t) + \frac{2}{\alpha}(y_t - \hat{q}^{1-\alpha/2}_t) & \text{if } y_t > \hat{q}^{1-\alpha/2}_t.
    \end{cases}
\end{equation*}

\subsection{\gls{esn} Hyperparameters}
If properly configured, \glspl{esn} can provide rich and stable representations of the temporal dependencies in the data. The main hyperparameters that we considered are: the \textbf{reservoir size}, which represents the dimension $D_h$ of the high-dimensional space where reservoir's states evolve (the ‘‘capacity'' of the reservoir); the \textbf{spectral radius} $\rho(\mW_h)$ of the state update weight matrix $\mW_h$, which governs the stability of the internal dynamics of the reservoir; the \textbf{leak rate}, which controls the proportion of information from the previous reservoir state that is preserved and added to the new state after the non-linearity; and the \textbf{input scaling}, which regulates the magnitude of the input signal before it is projected into the reservoir, and thereby control the degree of non-linearity that will be applied to the state update.

Other hyperparameters that define how the \gls{esn} behaves are the connectivity of the reservoirs, which denotes the proportion of non-zero connections in the state update weight matrix and thereby determines the sparsity structure of the reservoir, the amount and magnitude of noise injected during state update, and the activation function for the non-linearity. Sometimes, also the topology of the reservoir can lead to meaningful dynamical properties (a ring-like arrangement of the reservoir's internal connection as opposed to random connections).

\subsection{{\gls{cornn} architecture}}
\label{app:cornn_arch}
The architecture of \gls{cornn} is identical to the model called \textsc{CoRNN} introduced by \cite{cini2025relational}. It consists in a encoder-decoder architecture, in which the encoder is implemented as multiple \gls{gru} layers, and the decoder is a \gls{mlp} that maps the learned representations to the predictions of the quantiles. The model is trained end-to-end by minimizing the pinball loss.

\subsection{Model selection}
We performed hyperparameter tuning for all combinations of point predictors and datasets on a validation set, depending on the model.

\paragraph{\gls{rescp}} For our method, we used $10\%$ of the calibration set for validation. To perform model selection, we run a grid search over the spectral radius for values in the interval $[0.5, 1.5]$, for the leak rate $[0.5, 1]$ ($1$ meaning no leak), for the input scaling $[0.1, 3]$, for the temperature $[0.01, 2]$ and for the size of the calibration window $[100, \text{‘‘all''}]$, where ‘‘$\text{all}$'' considers all states in the calibration set. 
The remaining \gls{esn} hyperparameters where kept fixed (reservoir size $512$, connectivity $0.2$, $\tanh$ activation function, no injected noise during state update, random connections).
Given the high scalability of the method, performing such a grid search was very fast.
The same was done for \textbf{\gls{rescqr}}, but only on the \gls{esn} hyperparameters (reservoir's size, spectral radius, leak rate and input scaling). In this setting, to train the readout we used the \gls{adam} optimizer~\citep{kingma2015adam} with an initial learning rate of 0.003, which was reduced by $50\%$ every time the loss did not decrease for 10 epochs in a row. Each epoch consisted of mini-batches of 64 samples per batch.

\paragraph{\gls{nexcp}} The search was made for the parameter $\rho$ used to control the decay rate for the weights. The search was over the values $\{ 0.999, 0.99, 0.95, 0.9 \}$.

\paragraph{\gls{scpi}} Since \gls{scpi}~\citep{chen2023sequential} required fitting a quantile random forest at each time step, the computational demand to use the official implementation was limiting. Therefore, we used the implementation provided by~\cite{auer2023conformal}, and followed the same protocol for the training. \gls{scpi} was run using a fixed window of $100$ time steps, which corresponds to the longest window considered in~\cite{chen2023sequential}.

\paragraph{\gls{hopcpt}} We followed the same model selection procedure and hyperparameter search of the original paper~\citep{auer2023conformal}. The model was trained for 3,000 epochs in each experiment, validating every 5 epochs on $50\%$ of the calibration data. AdamW~\citep{loshchilov2018decoupled} with standard parameters ($\beta_1 = 0.9$, $\beta_2 = 0.999$, $\delta=0.01$) was used to optimize the model. A batch size of 4 time series was used. The model with smallest PI-Width and non-negative $\Delta$Cov was selected. If no models achieved non-negative $\Delta$Cov, then the one with the highest $\Delta$Cov was chosen.

\paragraph{\gls{cornn}} Also for this baseline, we followed the same model selection procedure as in the original paper~\citep{cini2025relational}. The number of neurons and the number of \gls{gru} layers was tuned with a small grid search on $10\%$ of the calibration data.

\subsection{Resulting hyperparameters}

\begin{table*}[t]
\centering
\caption{Best hyperparameter values found with the grid search described in the previous section for each dataset and base model combination.}
\vspace{5pt}
\setlength{\tabcolsep}{2.5pt}
\setlength{\aboverulesep}{0pt}
\setlength{\belowrulesep}{0pt}
\renewcommand{\arraystretch}{1.05}
\resizebox{\textwidth}{!}{

\begin{tabular}{l|l!{\vrule width 1pt}c|c!{\vrule width 1pt}c|c!{\vrule width 1pt}c|c!{\vrule width 1pt}c!{\vrule width 1pt}c}
\multicolumn{2}{c!{\vrule width 1pt}}{}  & \multicolumn{2}{c!{\vrule width 1pt}}{\textbf{Spectral Radius}} & \multicolumn{2}{c!{\vrule width 1pt}}{\textbf{Leak Rate}} & \multicolumn{2}{c!{\vrule width 1pt}}{\textbf{Input Scaling}} & \textbf{Temperature} & \textbf{Sliding Window} \\
    & Model        & \gls{rescp} & \gls{rescqr} & \gls{rescp} & \gls{rescqr} & \gls{rescp} & \gls{rescqr} & \gls{rescp} & \gls{rescp} \\
\midrule[1.5pt]
\multirow{3}{*}{\rotatebox{90}{Solar}}
 &  RNN   & 0.9 & 0.9 & 0.75 & 1.0 & 0.7 & 0.1 & 0.1 & 3900 \\
\cmidrule{2-10}
 & Transformer   & 0.9 & 0.9 & 0.8 & 1.0 & 0.4 & 0.1 & 0.1 & 7600 \\
\cmidrule{2-10}
 & ARIMA   & 1.0 & 0.95 & 0.8 & 0.9 & 0.2 & 0.1 & 0.1 & 1500 \\
\midrule[1.5pt]
\multirow{3}{*}{\rotatebox{90}{Beijing}}
 & RNN   & 1.3 & 0.99 & 0.95 & 1.0 & 0.25 & 0.1 & 0.1 & 3200 \\
\cmidrule{2-10}
 & Transformer   & 1.0 & 0.9 & 0.8 & 1.0 & 0.25 & 0.1 & 0.1 & 10500 \\
\cmidrule{2-10}
 & ARIMA   & 1.45 & 0.99 & 0.65 & 1.0 & 0.75 & 0.1 & 0.15 & 3800 \\
\midrule[1.5pt]
\multirow{3}{*}{\rotatebox{90}{Exch.}}
 & RNN   & 0.95 & 0.9 & 0.8 & 0.9 & 0.5 & 0.25 & 0.1 & 1000 \\
\cmidrule{2-10}
 & Transformer   & 0.99 & 0.99 & 0.8 & 0.9 & 0.5 & 0.1 & 0.1 & 1000 \\
\cmidrule{2-10}
 & ARIMA   & 0.95 & 0.99 & 0.8 & 0.8 & 0.5 & 0.1 & 0.1 & 1000 \\
\midrule[1.5pt]
\multirow{3}{*}{\rotatebox{90}{ACEA}}
 & RNN   & 0.95 & 0.99 & 0.8 & 0.8 & 0.25 & 0.25 & 0.1 & 1000 \\
\cmidrule{2-10}
 & Transformer   & 0.99 & 0.99 & 0.8 & 0.8 & 0.25 & 0.25 & 0.1 & 3000 \\
\cmidrule{2-10}
 & ARIMA   & 0.95 & 0.9 & 0.8 & 1.0 & 0.25 & 0.1 & 0.1 & 1000 \\
\bottomrule[1.5pt]
\end{tabular}
}

\label{tab:hyperparameters}
\end{table*}

The best hyperparameters found with the grid search described in the previous section can be found in~\autoref{tab:hyperparameters}.
\section{{Additional results}}\label{app:calib_curves}
\begin{table*}[t]
\centering
\small
\caption{{Performance comparison for $\alpha=0.05$. 
$\Delta$Cov values are color-coded for undercoverage cases: \textcolor{tabolive}{yellow} (1-2\%), \textcolor{taborange}{orange} (2-4\%), \textcolor{tabred}{red} ($>$4\%). 
The top three Winkler scores for each scenario are highlighted: \textbf{\underline{bold+underlined}} (1st), \underline{underlined} (2nd), \dotuline{dot-underlined} (3rd).}}
\setlength{\tabcolsep}{2.5pt}
\setlength{\aboverulesep}{0pt}
\setlength{\belowrulesep}{0pt}
\renewcommand{\arraystretch}{1.05}
\resizebox{\textwidth}{!}{

\begin{tabular}{@{}l|l|l|cccc|ccc@{}}
\multicolumn{3}{c|}{}  & \multicolumn{4}{c|}{\textit{Learning}} & \multicolumn{3}{c}{\textit{Non-learning}} \\
\multicolumn{2}{c}{}    & Metric        & \gls{scpi} & \gls{hopcpt} & \gls{cornn} & \textbf{\gls{rescqr}} & \gls{scp} & \gls{nexcp} & \textbf{\gls{rescp}} \\
\midrule[1.5pt]
\multirow{9}{*}{\rotatebox{90}{Solar}}
 & \multirow{3}{*}{\rotatebox{90}{RNN}} & $\Delta$Cov   & -0.19{\tiny$\pm$0.26} & -0.90{\tiny$\pm$0.22} & -0.33{\tiny$\pm$1.48} & -0.38{\tiny$\pm$0.39} & 0.11 & 0.97 & 0.18{\tiny$\pm$0.22} \\
 &                             & PI-Width      & 124.25{\tiny$\pm$4.19} & 82.29{\tiny$\pm$0.63} & 71.14{\tiny$\pm$1.02} & 77.76{\tiny$\pm$0.87} & 141.23 & 154.71 & 86.56{\tiny$\pm$0.78} \\
 &                             & Winkler       & 214.33{\tiny$\pm$2.03} & 149.88{\tiny$\pm$1.15} & \textbf{\underline{94.73{\tiny$\pm$0.46}}} & \underline{108.42{\tiny$\pm$0.34}} & 239.24 & 225.85 & \dotuline{138.56{\tiny$\pm$1.27}} \\
\cmidrule{2-10}
 & \multirow{3}{*}{\rotatebox{90}{Transf}} & $\Delta$Cov   & -0.40{\tiny$\pm$0.26} & 0.50{\tiny$\pm$0.64} & 0.48{\tiny$\pm$0.78} & 1.05{\tiny$\pm$0.12} & 0.06 & 1.00 & \textcolor{tabolive}{-1.29{\tiny$\pm$0.61}} \\
 &                             & PI-Width      & 123.48{\tiny$\pm$5.82} & 85.55{\tiny$\pm$4.56} & 70.54{\tiny$\pm$1.49} & 76.15{\tiny$\pm$0.46} & 139.52 & 153.24 & 62.61{\tiny$\pm$1.31} \\
 &                             & Winkler       & 215.15{\tiny$\pm$3.01} & 133.84{\tiny$\pm$7.86} & \textbf{\underline{93.31{\tiny$\pm$0.16}}} & \dotuline{107.31{\tiny$\pm$0.35}} & 236.43 & 223.17 & \underline{105.80{\tiny$\pm$2.16}} \\
\cmidrule{2-10}
 & \multirow{3}{*}{\rotatebox{90}{ARIMA}} & $\Delta$Cov   & -0.52{\tiny$\pm$0.17} & 0.75{\tiny$\pm$1.04} & -0.40{\tiny$\pm$0.37} & -0.62{\tiny$\pm$0.15} & 0.07 & 1.21 & 0.21{\tiny$\pm$0.51} \\
 &                             & PI-Width      & 124.61{\tiny$\pm$1.37} & 188.97{\tiny$\pm$6.52} & 70.67{\tiny$\pm$0.55} & 87.34{\tiny$\pm$1.33} & 188.57 & 202.58 & 99.68{\tiny$\pm$1.20} \\
 &                             & Winkler       & 193.31{\tiny$\pm$0.59} & 255.20{\tiny$\pm$16.79} & \textbf{\underline{88.30{\tiny$\pm$0.15}}} & \underline{115.66{\tiny$\pm$0.42}} & 281.61 & 265.15 & \dotuline{133.94{\tiny$\pm$3.25}} \\
\midrule
\multirow{9}{*}{\rotatebox{90}{Beijing}}
 & \multirow{3}{*}{\rotatebox{90}{RNN}} & $\Delta$Cov   & \textcolor{tabolive}{-1.38{\tiny$\pm$1.13}} & \textcolor{tabolive}{-1.06{\tiny$\pm$1.40}} & -0.36{\tiny$\pm$0.55} & -0.58{\tiny$\pm$0.38} & -0.18 & -0.01 & -0.60{\tiny$\pm$0.17} \\
 &                             & PI-Width      & 97.22{\tiny$\pm$6.98} & 112.10{\tiny$\pm$31.41} & 84.51{\tiny$\pm$2.90} & 87.50{\tiny$\pm$3.98} & 98.51 & 100.48 & 86.14{\tiny$\pm$0.95} \\
 &                             & Winkler       & 167.81{\tiny$\pm$2.37} & 183.15{\tiny$\pm$20.62} & \underline{130.62{\tiny$\pm$0.69}} & \dotuline{136.65{\tiny$\pm$1.07}} & 170.36 & 166.04 & \textbf{\underline{124.24{\tiny$\pm$0.11}}} \\
\cmidrule{2-10}
 & \multirow{3}{*}{\rotatebox{90}{Transf}} & $\Delta$Cov   & \textcolor{tabolive}{-1.27{\tiny$\pm$0.46}} & -0.46{\tiny$\pm$0.56} & -0.43{\tiny$\pm$0.63} & \textcolor{tabolive}{-1.05{\tiny$\pm$0.83}} & -0.27 & -0.03 & -0.52{\tiny$\pm$0.32} \\
 &                             & PI-Width      & 95.82{\tiny$\pm$5.28} & 100.94{\tiny$\pm$6.20} & 84.24{\tiny$\pm$3.84} & 87.66{\tiny$\pm$6.01} & 97.75 & 100.17 & 82.11{\tiny$\pm$1.88} \\
 &                             & Winkler       & 167.34{\tiny$\pm$2.11} & 167.90{\tiny$\pm$2.74} & \underline{131.27{\tiny$\pm$1.09}} & \dotuline{137.80{\tiny$\pm$2.26}} & 171.09 & 166.74 & \textbf{\underline{119.46{\tiny$\pm$0.35}}} \\
\cmidrule{2-10}
 & \multirow{3}{*}{\rotatebox{90}{ARIMA}} & $\Delta$Cov   & -0.53{\tiny$\pm$0.36} & \textcolor{tabolive}{-1.20{\tiny$\pm$0.45}} & \textcolor{tabolive}{-1.30{\tiny$\pm$0.36}} & \textcolor{tabolive}{-1.13{\tiny$\pm$0.49}} & -0.23 & -0.14 & -0.29{\tiny$\pm$0.22} \\
 &                             & PI-Width      & 104.81{\tiny$\pm$3.69} & 94.47{\tiny$\pm$2.13} & 81.64{\tiny$\pm$3.94} & 85.17{\tiny$\pm$1.77} & 106.81 & 107.73 & 94.54{\tiny$\pm$1.67} \\
 &                             & Winkler       & 175.14{\tiny$\pm$2.04} & 168.88{\tiny$\pm$1.21} & \textbf{\underline{128.63{\tiny$\pm$1.34}}} & \underline{136.63{\tiny$\pm$1.26}} & 180.18 & 174.82 & \dotuline{154.79{\tiny$\pm$0.33}} \\
\midrule
\multirow{9}{*}{\rotatebox{90}{Exchange}}
 & \multirow{3}{*}{\rotatebox{90}{RNN}} & $\Delta$Cov   & 0.58{\tiny$\pm$0.31} & 1.72{\tiny$\pm$0.00} & \textcolor{tabolive}{-2.00{\tiny$\pm$2.22}} & -0.52{\tiny$\pm$3.06} & 1.35 & 1.18 & 0.54{\tiny$\pm$0.20} \\
 &                             & PI-Width      & 0.0290{\tiny$\pm$0.0008} & 0.0513{\tiny$\pm$0.0000} & 0.0404{\tiny$\pm$0.0011} & 0.0479{\tiny$\pm$0.0019} & 0.0549 & 0.0497 & 0.0260{\tiny$\pm$0.0002} \\
 &                             & Winkler       & \underline{0.0356{\tiny$\pm$0.0005}} & 0.0608{\tiny$\pm$0.0000} & \dotuline{0.0568{\tiny$\pm$0.0002}} & 0.0578{\tiny$\pm$0.0009} & 0.0645 & 0.0604 & \textbf{\underline{0.0321{\tiny$\pm$0.0004}}} \\
\cmidrule{2-10}
 & \multirow{3}{*}{\rotatebox{90}{Transf}} & $\Delta$Cov   & 1.98{\tiny$\pm$0.38} & 1.81{\tiny$\pm$0.00} & 0.94{\tiny$\pm$1.00} & -0.01{\tiny$\pm$2.96} & 2.67 & 1.93 & 0.67{\tiny$\pm$0.11} \\
 &                             & PI-Width      & 0.0310{\tiny$\pm$0.0008} & 0.0501{\tiny$\pm$0.0000} & 0.0441{\tiny$\pm$0.0015} & 0.0463{\tiny$\pm$0.0044} & 0.0646 & 0.0601 & 0.0281{\tiny$\pm$0.0001} \\
 &                             & Winkler       & \underline{0.0375{\tiny$\pm$0.0006}} & 0.0600{\tiny$\pm$0.0000} & \dotuline{0.0585{\tiny$\pm$0.0006}} & 0.0595{\tiny$\pm$0.0011} & 0.0745 & 0.0712 & \textbf{\underline{0.0355{\tiny$\pm$0.0002}}} \\
\cmidrule{2-10}
 & \multirow{3}{*}{\rotatebox{90}{ARIMA}} & $\Delta$Cov   & 1.01{\tiny$\pm$0.45} & 1.39{\tiny$\pm$0.00} & \textcolor{tabolive}{-1.50{\tiny$\pm$0.75}} & \textcolor{tabolive}{-1.76{\tiny$\pm$0.63}} & 1.87 & 1.32 & -0.03{\tiny$\pm$0.15} \\
 &                             & PI-Width      & 0.0293{\tiny$\pm$0.0005} & 0.0482{\tiny$\pm$0.0000} & 0.0404{\tiny$\pm$0.0006} & 0.0419{\tiny$\pm$0.0016} & 0.0489 & 0.0442 & 0.0258{\tiny$\pm$0.0000} \\
 &                             & Winkler       & \underline{0.0357{\tiny$\pm$0.0004}} & 0.0576{\tiny$\pm$0.0000} & 0.0560{\tiny$\pm$0.0004} & \dotuline{0.0547{\tiny$\pm$0.0007}} & 0.0585 & 0.0552 & \textbf{\underline{0.0328{\tiny$\pm$0.0001}}} \\
\midrule
\multirow{9}{*}{\rotatebox{90}{ACEA}}
 & \multirow{3}{*}{\rotatebox{90}{RNN}} & $\Delta$Cov   & \textcolor{tabolive}{-1.20{\tiny$\pm$0.96}} & \textcolor{tabolive}{-1.92{\tiny$\pm$0.00}} & \textcolor{tabred}{-8.29{\tiny$\pm$5.64}} & \textcolor{tabred}{-15.32{\tiny$\pm$4.57}} & -0.89 & -0.29 & 0.32{\tiny$\pm$0.22} \\
 &                             & PI-Width      & 11.13{\tiny$\pm$0.88} & 23.20{\tiny$\pm$0.00} & 19.85{\tiny$\pm$2.59} & 18.18{\tiny$\pm$1.58} & 24.31 & 24.48 & 11.90{\tiny$\pm$0.30} \\
 &                             & Winkler       & \underline{18.38{\tiny$\pm$0.40}} & 32.48{\tiny$\pm$0.00} & 39.72{\tiny$\pm$4.66} & 46.11{\tiny$\pm$6.34} & 32.57 & \dotuline{31.23} & \textbf{\underline{16.31{\tiny$\pm$0.23}}} \\
\cmidrule{2-10}
 & \multirow{3}{*}{\rotatebox{90}{Transf}} & $\Delta$Cov   & \textcolor{tabolive}{-1.75{\tiny$\pm$1.19}} & \textcolor{taborange}{-2.04{\tiny$\pm$0.00}} & \textcolor{tabred}{-10.72{\tiny$\pm$7.28}} & \textcolor{tabred}{-21.13{\tiny$\pm$7.72}} & \textcolor{taborange}{-3.49} & -0.27 & 1.72{\tiny$\pm$0.06} \\
 &                             & PI-Width      & 11.22{\tiny$\pm$0.48} & 22.80{\tiny$\pm$0.00} & 18.44{\tiny$\pm$2.29} & 16.25{\tiny$\pm$2.03} & 21.99 & 25.34 & 12.62{\tiny$\pm$0.37} \\
 &                             & Winkler       & \underline{18.84{\tiny$\pm$0.67}} & 32.54{\tiny$\pm$0.00} & 41.67{\tiny$\pm$9.46} & 52.93{\tiny$\pm$8.53} & 34.43 & \dotuline{32.01} & \textbf{\underline{16.19{\tiny$\pm$0.40}}} \\
\cmidrule{2-10}
 & \multirow{3}{*}{\rotatebox{90}{ARIMA}} & $\Delta$Cov   & 0.92{\tiny$\pm$0.37} & \textcolor{tabolive}{-1.87{\tiny$\pm$0.00}} & \textcolor{tabred}{-17.36{\tiny$\pm$11.98}} & \textcolor{tabred}{-23.84{\tiny$\pm$7.89}} & -0.13 & -0.24 & \textcolor{taborange}{-2.70{\tiny$\pm$0.76}} \\
 &                             & PI-Width      & 14.86{\tiny$\pm$0.40} & 39.74{\tiny$\pm$0.00} & 25.26{\tiny$\pm$4.44} & 20.67{\tiny$\pm$2.38} & 43.65 & 41.42 & 11.68{\tiny$\pm$0.76} \\
 &                             & Winkler       & \underline{21.83{\tiny$\pm$0.19}} & 49.57{\tiny$\pm$0.00} & 63.53{\tiny$\pm$17.63} & 67.04{\tiny$\pm$12.20} & 50.52 & \dotuline{48.69} & \textbf{\underline{18.01{\tiny$\pm$0.53}}} \\
\bottomrule[1.5pt]
\end{tabular}
}

\label{tab:results_alpha_0.05}
\end{table*}

\subsection{{Results with different significance levels}}
{
In addition to the results presented in the main text for miscoverage level $\alpha=0.1$, we also evaluate the performance of \gls{rescp} and \gls{rescqr} at significance levels $\alpha=0.05$ and $\alpha=0.15$.
Tables \ref{tab:results_alpha_0.05} and \ref{tab:results_alpha_0.15} summarize the results for these additional significance levels across all datasets and base forecasters and against all the baselines considered in the main text.
The results are consistent with those discussed in the main text for $\alpha=0.1$, confirming the effectiveness of \gls{rescp} in producing well-calibrated and accurate \glspl{pi} across different significance levels, datasets, and base point predictors.
\begin{table*}[t]
\centering
\small
\caption{{Performance comparison for $\alpha=0.15$. 
$\Delta$Cov values are color-coded for undercoverage cases: \textcolor{tabolive}{yellow} (1-2\%), \textcolor{taborange}{orange} (2-4\%), \textcolor{tabred}{red} ($>$4\%). 
The top three Winkler scores for each scenario are highlighted: \textbf{\underline{bold+underlined}} (1st), \underline{underlined} (2nd), \dotuline{dot-underlined} (3rd).}}
\setlength{\tabcolsep}{2.5pt}
\setlength{\aboverulesep}{0pt}
\setlength{\belowrulesep}{0pt}
\renewcommand{\arraystretch}{1.05}
\resizebox{\textwidth}{!}{

\begin{tabular}{@{}l|l|l|cccc|ccc@{}}
\multicolumn{3}{c|}{}  & \multicolumn{4}{c|}{\textit{Learning}} & \multicolumn{3}{c}{\textit{Non-learning}} \\
\multicolumn{2}{c}{}    & Metric        & \gls{scpi} & \gls{hopcpt} & \gls{cornn} & \textbf{\gls{rescqr}} & \gls{scp} & \gls{nexcp} & \textbf{\gls{rescp}} \\
\midrule[1.5pt]
\multirow{9}{*}{\rotatebox{90}{Solar}}
 & \multirow{3}{*}{\rotatebox{90}{RNN}} & $\Delta$Cov   & 0.72{\tiny$\pm$0.23} & \textcolor{tabolive}{-1.77{\tiny$\pm$0.33}} & \textcolor{tabolive}{-1.10{\tiny$\pm$1.83}} & 0.10{\tiny$\pm$1.03} & 0.64 & 1.82 & -0.47{\tiny$\pm$0.32} \\
 &                             & PI-Width      & 46.78{\tiny$\pm$0.81} & 47.68{\tiny$\pm$0.46} & 46.84{\tiny$\pm$0.46} & 47.43{\tiny$\pm$0.56} & 47.85 & 55.73 & 46.37{\tiny$\pm$0.36} \\
 &                             & Winkler       & 119.10{\tiny$\pm$0.51} & 96.70{\tiny$\pm$0.32} & \textbf{\underline{69.15{\tiny$\pm$0.14}}} & \underline{75.01{\tiny$\pm$0.11}} & 134.09 & 130.47 & \dotuline{87.44{\tiny$\pm$0.22}} \\
\cmidrule{2-10}
 & \multirow{3}{*}{\rotatebox{90}{Transf}} & $\Delta$Cov   & 0.34{\tiny$\pm$0.31} & 2.88{\tiny$\pm$0.72} & 2.54{\tiny$\pm$4.23} & \textcolor{taborange}{-3.19{\tiny$\pm$16.62}} & 0.60 & 1.81 & 0.18{\tiny$\pm$0.66} \\
 &                             & PI-Width      & 46.56{\tiny$\pm$1.69} & 46.78{\tiny$\pm$2.76} & 46.47{\tiny$\pm$1.25} & 47.06{\tiny$\pm$0.36} & 48.72 & 56.72 & 39.37{\tiny$\pm$0.51} \\
 &                             & Winkler       & 119.81{\tiny$\pm$0.62} & 89.44{\tiny$\pm$3.30} & \textbf{\underline{68.48{\tiny$\pm$0.14}}} & \underline{74.44{\tiny$\pm$0.21}} & 133.18 & 129.62 & \dotuline{75.69{\tiny$\pm$0.39}} \\
\cmidrule{2-10}
 & \multirow{3}{*}{\rotatebox{90}{ARIMA}} & $\Delta$Cov   & 0.87{\tiny$\pm$0.43} & 1.98{\tiny$\pm$2.56} & \textcolor{tabolive}{-1.06{\tiny$\pm$0.43}} & -0.43{\tiny$\pm$0.40} & -0.07 & 2.29 & 0.30{\tiny$\pm$0.61} \\
 &                             & PI-Width      & 71.25{\tiny$\pm$1.42} & 111.82{\tiny$\pm$7.83} & 50.18{\tiny$\pm$0.44} & 60.41{\tiny$\pm$1.26} & 93.98 & 102.74 & 62.24{\tiny$\pm$0.94} \\
 &                             & Winkler       & 124.75{\tiny$\pm$0.35} & 171.47{\tiny$\pm$11.32} & \textbf{\underline{68.52{\tiny$\pm$0.09}}} & \underline{87.26{\tiny$\pm$0.24}} & 179.83 & 174.48 & \dotuline{98.54{\tiny$\pm$4.21}} \\
\midrule
\multirow{9}{*}{\rotatebox{90}{Beijing}}
 & \multirow{3}{*}{\rotatebox{90}{RNN}} & $\Delta$Cov   & -0.94{\tiny$\pm$1.14} & 0.56{\tiny$\pm$4.00} & \textcolor{tabolive}{-1.31{\tiny$\pm$1.19}} & -0.26{\tiny$\pm$1.93} & -0.48 & 0.10 & -0.98{\tiny$\pm$1.17} \\
 &                             & PI-Width      & 55.46{\tiny$\pm$2.03} & 67.60{\tiny$\pm$18.87} & 51.21{\tiny$\pm$1.80} & 54.58{\tiny$\pm$3.94} & 52.93 & 55.07 & 52.21{\tiny$\pm$2.06} \\
 &                             & Winkler       & 103.43{\tiny$\pm$0.64} & 112.44{\tiny$\pm$11.23} & \underline{87.49{\tiny$\pm$0.38}} & \dotuline{90.42{\tiny$\pm$0.38}} & 104.56 & 103.16 & \textbf{\underline{86.72{\tiny$\pm$0.12}}} \\
\cmidrule{2-10}
 & \multirow{3}{*}{\rotatebox{90}{Transf}} & $\Delta$Cov   & -0.69{\tiny$\pm$0.57} & -0.45{\tiny$\pm$1.23} & \textcolor{tabolive}{-1.32{\tiny$\pm$1.71}} & \textcolor{tabolive}{-1.68{\tiny$\pm$1.81}} & -0.46 & 0.11 & -0.75{\tiny$\pm$1.35} \\
 &                             & PI-Width      & 54.51{\tiny$\pm$1.07} & 55.14{\tiny$\pm$3.17} & 50.67{\tiny$\pm$2.82} & 52.98{\tiny$\pm$3.10} & 52.42 & 54.64 & 50.31{\tiny$\pm$2.16} \\
 &                             & Winkler       & 102.74{\tiny$\pm$0.45} & 103.53{\tiny$\pm$0.85} & \underline{87.71{\tiny$\pm$0.41}} & \dotuline{90.97{\tiny$\pm$1.12}} & 104.60 & 103.21 & \textbf{\underline{84.47{\tiny$\pm$0.23}}} \\
\cmidrule{2-10}
 & \multirow{3}{*}{\rotatebox{90}{ARIMA}} & $\Delta$Cov   & 0.74{\tiny$\pm$0.30} & \textcolor{taborange}{-3.08{\tiny$\pm$0.97}} & \textcolor{taborange}{-2.04{\tiny$\pm$0.89}} & \textcolor{taborange}{-2.07{\tiny$\pm$0.79}} & -0.25 & -0.09 & -0.32{\tiny$\pm$0.46} \\
 &                             & PI-Width      & 60.45{\tiny$\pm$0.85} & 53.46{\tiny$\pm$1.10} & 50.58{\tiny$\pm$1.85} & 51.64{\tiny$\pm$1.31} & 60.29 & 60.73 & 54.70{\tiny$\pm$1.13} \\
 &                             & Winkler       & 108.16{\tiny$\pm$0.47} & 107.50{\tiny$\pm$0.68} & \textbf{\underline{86.46{\tiny$\pm$0.55}}} & \underline{90.85{\tiny$\pm$0.38}} & 112.79 & 110.66 & \dotuline{99.59{\tiny$\pm$0.30}} \\
\midrule
\multirow{9}{*}{\rotatebox{90}{Exchange}}
 & \multirow{3}{*}{\rotatebox{90}{RNN}} & $\Delta$Cov   & 5.13{\tiny$\pm$0.59} & 3.05{\tiny$\pm$0.00} & \textcolor{taborange}{-3.28{\tiny$\pm$2.93}} & 2.40{\tiny$\pm$1.73} & 3.06 & 2.26 & 1.77{\tiny$\pm$0.29} \\
 &                             & PI-Width      & 0.0207{\tiny$\pm$0.0004} & 0.0342{\tiny$\pm$0.0000} & 0.0284{\tiny$\pm$0.0009} & 0.0323{\tiny$\pm$0.0010} & 0.0380 & 0.0351 & 0.0180{\tiny$\pm$0.0001} \\
 &                             & Winkler       & \underline{0.0247{\tiny$\pm$0.0002}} & 0.0420{\tiny$\pm$0.0000} & 0.0413{\tiny$\pm$0.0005} & \dotuline{0.0410{\tiny$\pm$0.0006}} & 0.0452 & 0.0436 & \textbf{\underline{0.0234{\tiny$\pm$0.0002}}} \\
\cmidrule{2-10}
 & \multirow{3}{*}{\rotatebox{90}{Transf}} & $\Delta$Cov   & 6.43{\tiny$\pm$0.46} & 3.78{\tiny$\pm$0.00} & 0.25{\tiny$\pm$1.86} & \textcolor{taborange}{-2.20{\tiny$\pm$5.75}} & 6.03 & 4.08 & 2.37{\tiny$\pm$0.15} \\
 &                             & PI-Width      & 0.0221{\tiny$\pm$0.0005} & 0.0341{\tiny$\pm$0.0000} & 0.0301{\tiny$\pm$0.0007} & 0.0313{\tiny$\pm$0.0016} & 0.0483 & 0.0455 & 0.0195{\tiny$\pm$0.0000} \\
 &                             & Winkler       & \textbf{\underline{0.0261{\tiny$\pm$0.0004}}} & \dotuline{0.0421{\tiny$\pm$0.0000}} & 0.0425{\tiny$\pm$0.0006} & 0.0422{\tiny$\pm$0.0003} & 0.0558 & 0.0545 & \underline{0.0262{\tiny$\pm$0.0001}} \\
\cmidrule{2-10}
 & \multirow{3}{*}{\rotatebox{90}{ARIMA}} & $\Delta$Cov   & 6.23{\tiny$\pm$0.72} & 2.96{\tiny$\pm$0.00} & \textcolor{taborange}{-2.26{\tiny$\pm$1.49}} & \textcolor{taborange}{-2.04{\tiny$\pm$3.06}} & 4.25 & 2.77 & 1.03{\tiny$\pm$0.36} \\
 &                             & PI-Width      & 0.0213{\tiny$\pm$0.0009} & 0.0322{\tiny$\pm$0.0000} & 0.0279{\tiny$\pm$0.0006} & 0.0294{\tiny$\pm$0.0016} & 0.0330 & 0.0303 & 0.0175{\tiny$\pm$0.0001} \\
 &                             & Winkler       & \underline{0.0253{\tiny$\pm$0.0006}} & 0.0399{\tiny$\pm$0.0000} & 0.0405{\tiny$\pm$0.0005} & 0.0401{\tiny$\pm$0.0003} & 0.0404 & \dotuline{0.0392} & \textbf{\underline{0.0234{\tiny$\pm$0.0001}}} \\
\midrule
\multirow{9}{*}{\rotatebox{90}{ACEA}}
 & \multirow{3}{*}{\rotatebox{90}{RNN}} & $\Delta$Cov   & \textcolor{tabolive}{-1.84{\tiny$\pm$4.08}} & \textcolor{taborange}{-2.43{\tiny$\pm$0.00}} & \textcolor{tabred}{-16.15{\tiny$\pm$9.30}} & \textcolor{tabred}{-20.50{\tiny$\pm$5.38}} & \textcolor{tabolive}{-1.55} & -0.46 & 1.86{\tiny$\pm$0.40} \\
 &                             & PI-Width      & 7.42{\tiny$\pm$0.71} & 15.93{\tiny$\pm$0.00} & 13.09{\tiny$\pm$1.44} & 13.29{\tiny$\pm$1.31} & 16.34 & 17.07 & 8.06{\tiny$\pm$0.13} \\
 &                             & Winkler       & \underline{12.33{\tiny$\pm$0.15}} & 24.51{\tiny$\pm$0.00} & 28.71{\tiny$\pm$3.73} & 30.47{\tiny$\pm$2.40} & 24.66 & \dotuline{24.15} & \textbf{\underline{11.14{\tiny$\pm$0.11}}} \\
\cmidrule{2-10}
 & \multirow{3}{*}{\rotatebox{90}{Transf}} & $\Delta$Cov   & \textcolor{taborange}{-2.58{\tiny$\pm$0.78}} & \textcolor{taborange}{-2.98{\tiny$\pm$0.00}} & \textcolor{tabred}{-17.58{\tiny$\pm$10.86}} & \textcolor{tabred}{-29.97{\tiny$\pm$8.62}} & \textcolor{tabred}{-5.76} & -0.56 & 4.20{\tiny$\pm$0.23} \\
 &                             & PI-Width      & 7.34{\tiny$\pm$0.12} & 15.26{\tiny$\pm$0.00} & 11.98{\tiny$\pm$1.39} & 10.99{\tiny$\pm$1.51} & 13.66 & 16.73 & 8.58{\tiny$\pm$0.11} \\
 &                             & Winkler       & \underline{12.69{\tiny$\pm$0.10}} & \dotuline{24.40{\tiny$\pm$0.00}} & 29.59{\tiny$\pm$5.71} & 33.91{\tiny$\pm$3.65} & 25.63 & 24.52 & \textbf{\underline{11.42{\tiny$\pm$0.20}}} \\
\cmidrule{2-10}
 & \multirow{3}{*}{\rotatebox{90}{ARIMA}} & $\Delta$Cov   & 0.39{\tiny$\pm$1.05} & \textcolor{tabred}{-4.81{\tiny$\pm$0.00}} & \textcolor{tabred}{-36.21{\tiny$\pm$8.89}} & \textcolor{tabred}{-34.84{\tiny$\pm$9.41}} & \textcolor{tabolive}{-1.27} & -0.72 & 1.16{\tiny$\pm$1.10} \\
 &                             & PI-Width      & 10.92{\tiny$\pm$0.29} & 30.89{\tiny$\pm$0.00} & 14.51{\tiny$\pm$1.64} & 13.39{\tiny$\pm$1.91} & 33.68 & 32.72 & 8.31{\tiny$\pm$0.52} \\
 &                             & Winkler       & \underline{15.51{\tiny$\pm$0.12}} & 41.82{\tiny$\pm$0.00} & 47.54{\tiny$\pm$5.69} & 44.95{\tiny$\pm$6.32} & 43.00 & \dotuline{40.68} & \textbf{\underline{11.66{\tiny$\pm$0.51}}} \\
\bottomrule[1.5pt]
\end{tabular}
}
\label{tab:results_alpha_0.15}
\end{table*}

}
\subsection{{Calibration curves}}
{
    To better evaluate the calibration performance of \gls{rescp} and \gls{rescqr}, we plot their calibration curves for all datasets and base models.
    The results are shown in \autoref{fig:calib_grid}. The calibration curve plots the target coverage against the observed coverage, where a perfectly calibrated method would follow the diagonal line $y=x$, which is depicted as a black dashed line in the plots.
    The target coverages $\{ 0.7, 0.75, 0.8, 0.85, 0.9, 0.95 \}$, are shown on the $x$-axis, and the corresponding observed coverages are shown on the $y$-axis.
    Each row of the grid corresponds to a dataset, while each column to a base point forecaster. 
    The results confirm the findings discussed in the main text for $\alpha=0.1$, showing that \gls{rescp} produces well-calibrated \glspl{pi} across different significance levels, datasets, and base models.
    \gls{rescqr} also generally generates accurate prediction intervals, but its performance are poor in some benchmarks such as ACEA.
    \begin{figure}
    \centering
    \includegraphics[width=0.9\linewidth, trim=50 20 50 0]{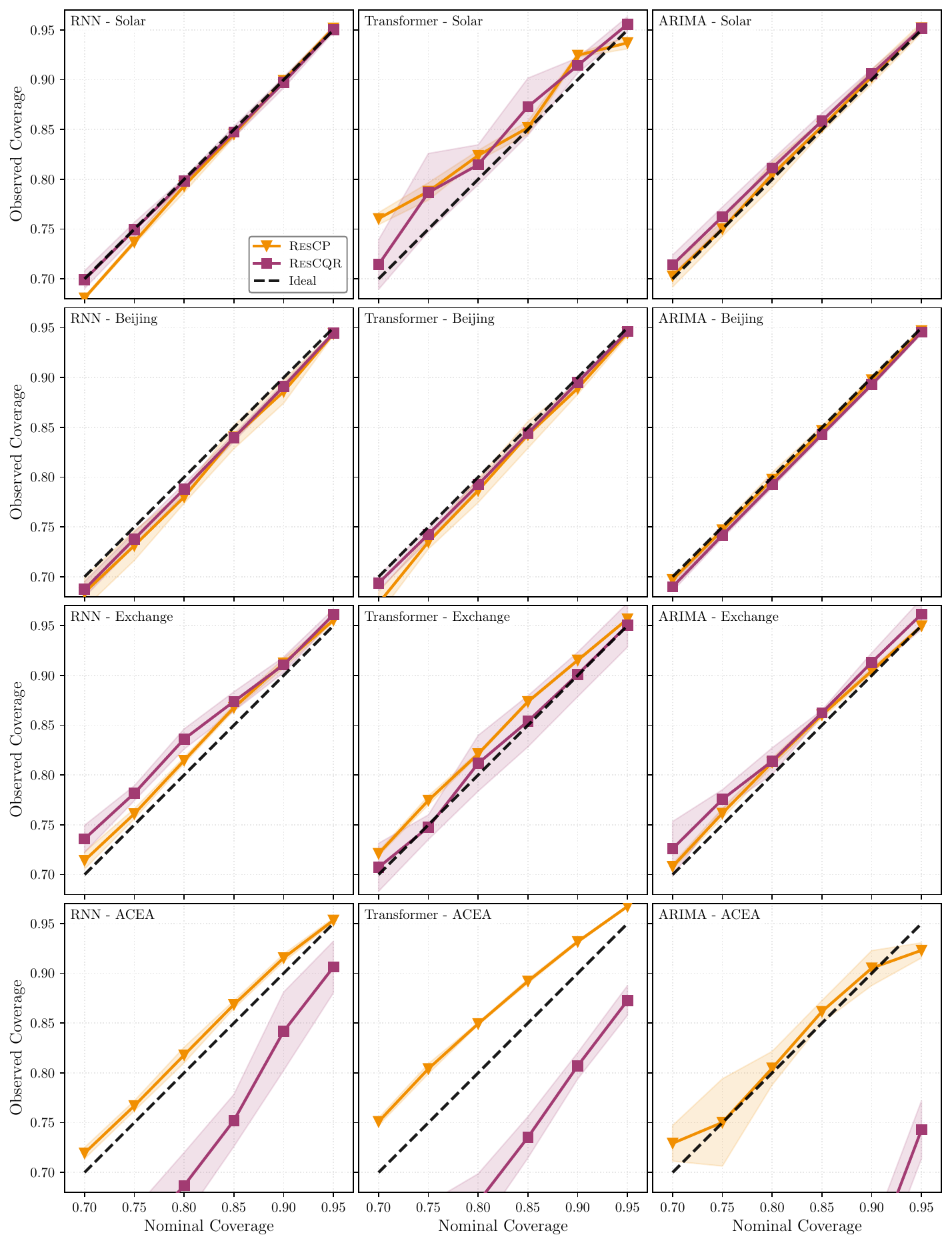}
    \caption{{Calibration curves of \gls{rescp} and \gls{rescqr} over all datasets and baselines.}}
    \label{fig:rescp_calib_grid}
\end{figure}
}
\subsection{{Synthetic dataset}}
{
    To further validate the effectiveness of \gls{rescp}, we conduct an ablation study on synthetic data where the ground truth is known. We generate $10000$ samples from a synthetic \gls{ar} process of order 1 
    \begin{equation*}
        y_t = \phi_ty_{t-1} + \epsilon_t \quad \text{with} \quad \epsilon_t \sim \mathcal{N}(0,1)
    \end{equation*}
    with time-varying coefficients $\phi_t$ across the calibration and test sets. Specifically, we introduce three change points in the coefficient sequence $\phi$ between the calibration and test sets. Overall, $\phi_t$ assumes the values $\{ -0.9, 0.3, -0.5, 0.7\}$. The first change point occurs at the midpoint of the calibration set, while the remaining two are placed at equal intervals over the rest of the series.
    The same setup as in \autoref{sec:exp_setup} is used: data is split into training, calibration, and test sets with proportions $40\%/40\%/20\%$, and an \gls{rnn} is employed as the base forecaster.
    We then compare the performance of \gls{rescp} by removing its two key components: (i) the time-dependent weights, (ii) the sliding window of past residuals used for calibration, and (iii) the combination of the two. The results, reported in \autoref{tab:synthetic_ablation}, demonstrate that both components significantly contribute to the performance of \gls{rescp}, with the full model achieving the best results in terms of coverage. This ablation study on synthetic data further confirms the robustness and effectiveness of \gls{rescp} in capturing the underlying dynamics of time series data.
    \begin{table}[t]
\centering
\small
\caption{{Ablation study on the AR synthetic dataset for three miscoverage levels ($\alpha \in \{0.05, 0.1, 0.15\}$).}}
\setlength{\tabcolsep}{4pt}
\setlength{\aboverulesep}{0pt}
\setlength{\belowrulesep}{0pt}
\renewcommand{\arraystretch}{1.2}

\begin{tabular}{@{}c|l|cccc||cc@{}}
$\alpha$ & Metric & \textbf{\gls{rescp}} & {\scriptsize No decay} & {\scriptsize No window} & {\scriptsize No decay, no window} & NExCP & SCP \\
\midrule[1.5pt]
\multirow{2}{*}{\rotatebox{90}{$0.05$}} & Coverage & 94.41{\tiny$\pm$0.21} & 93.50{\tiny$\pm$0.23} & 91.62{\tiny$\pm$0.27} & 88.39{\tiny$\pm$0.17} & 94.34 & 86.11 \\
 & $\Delta$Cov & -0.59{\tiny$\pm$0.21} & \textcolor{tabolive}{-1.50{\tiny$\pm$0.23}} & \textcolor{taborange}{-3.38{\tiny$\pm$0.27}} & \textcolor{tabred}{-6.61{\tiny$\pm$0.17}} & -0.66 & \textcolor{tabred}{-8.89} \\
\cmidrule(lr){1-8}
\multirow{2}{*}{\rotatebox{90}{$0.1$}} & Coverage & 89.75{\tiny$\pm$0.28} & 88.25{\tiny$\pm$0.41} & 84.45{\tiny$\pm$0.19} & 80.85{\tiny$\pm$0.17} & 88.84 & 78.48 \\
 & $\Delta$Cov & -0.25{\tiny$\pm$0.28} & \textcolor{tabolive}{-1.75{\tiny$\pm$0.41}} & \textcolor{tabred}{-5.55{\tiny$\pm$0.19}} & \textcolor{tabred}{-9.15{\tiny$\pm$0.17}} & \textcolor{tabolive}{-1.16} & \textcolor{tabred}{-11.52} \\
\cmidrule(lr){1-8}
\multirow{2}{*}{\rotatebox{90}{$0.15$}} & Coverage & 84.93{\tiny$\pm$0.34} & 83.11{\tiny$\pm$0.35} & 78.33{\tiny$\pm$0.20} & 73.99{\tiny$\pm$0.28} & 84.14 & 71.92 \\
 & $\Delta$Cov & -0.07{\tiny$\pm$0.34} & \textcolor{tabolive}{-1.89{\tiny$\pm$0.35}} & \textcolor{tabred}{-6.67{\tiny$\pm$0.20}} & \textcolor{tabred}{-11.01{\tiny$\pm$0.28}} & -0.86 & \textcolor{tabred}{-13.08} \\
\bottomrule
\end{tabular}
\label{tab:synthetic_ablation}
\end{table}
}
\subsection{{Visualization of the adaptive \glspl{pi}}}
{
    \begin{figure}
    \centering
    \includegraphics[width=0.9\linewidth, trim=50 20 50 0]{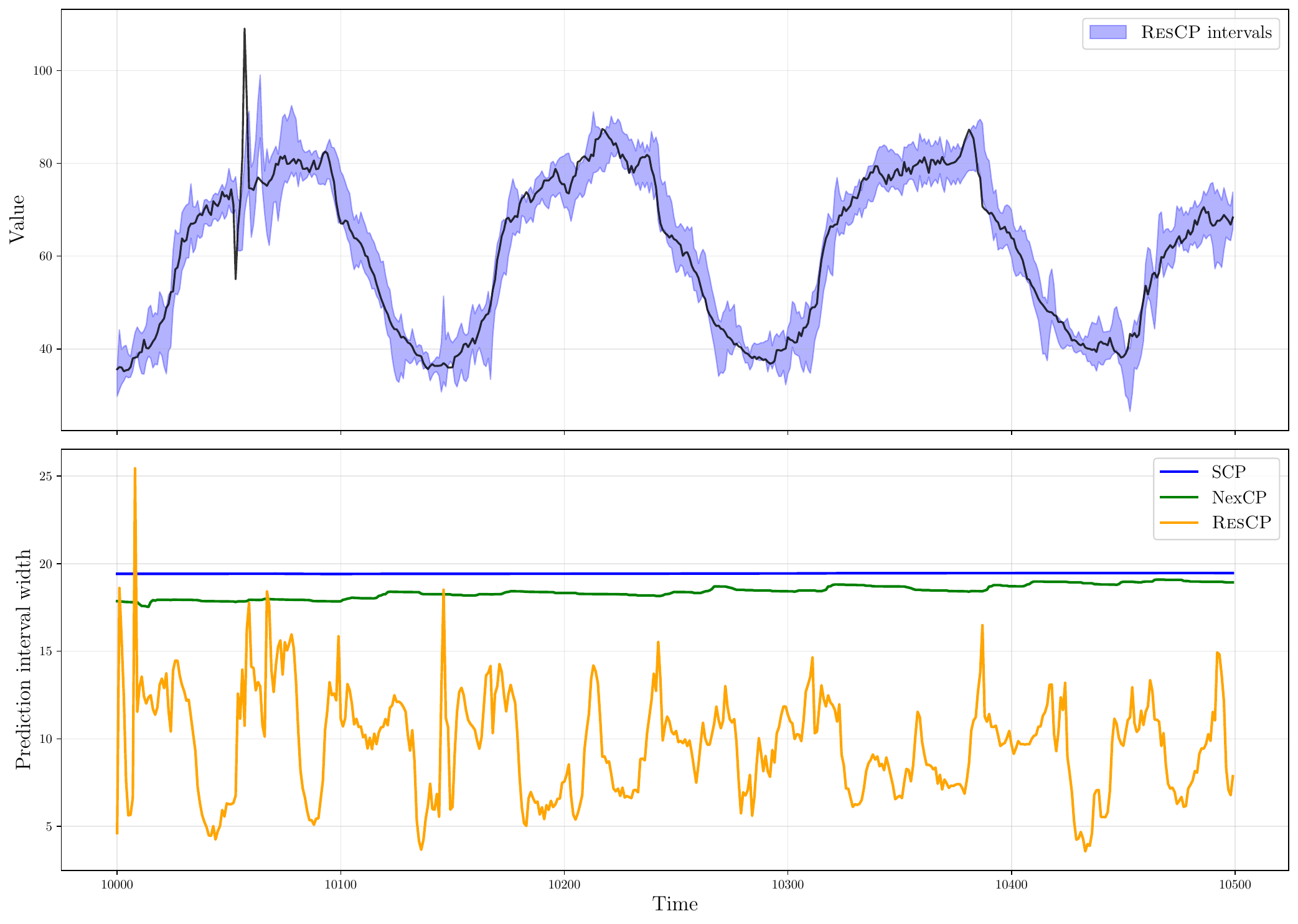}
    \caption{{Prediction intervals of \gls{rescp} (top) and prediction intervals' width of different methods (bottom) for the ACEA dataset}}
    \label{fig:elec_adaptive_pis}
\end{figure}
    To give a better intuition of how \gls{rescp} adapts the width of the \glspl{pi} over time in response to changes in the data distribution, we visualize some examples of adaptive \glspl{pi} produced by \gls{rescp} on the ACEA and Solar datasets in \autoref{fig:elec_adaptive_pis} and \autoref{fig:solar_adaptive_pis}, respectively.
    These plots illustrate how \gls{rescp} effectively adjusts the uncertainty quantification based on the observed patterns.
    \begin{figure}
    \centering
    \includegraphics[width=0.9\linewidth, trim=50 20 50 0]{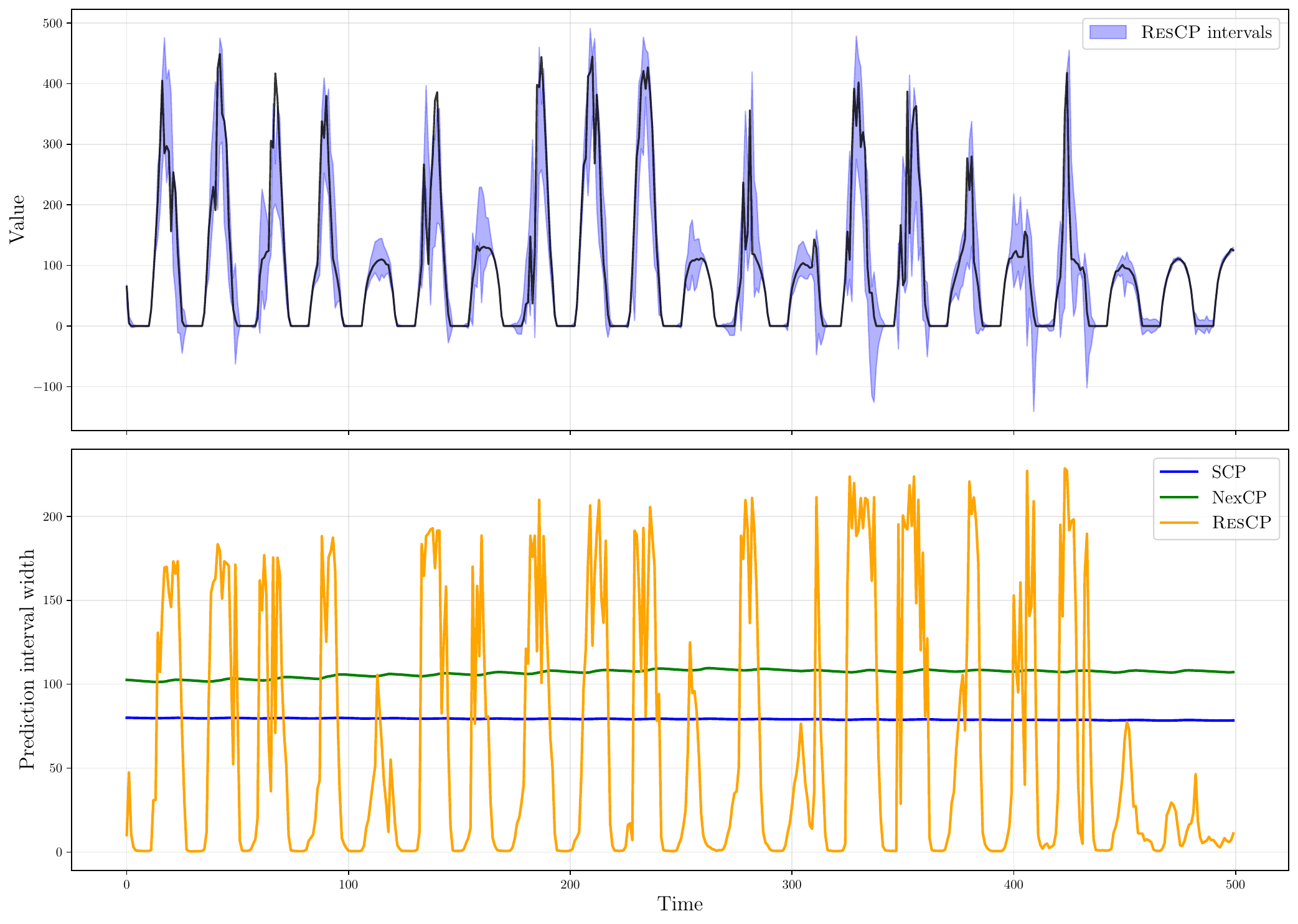}
    \caption{{Prediction intervals of \gls{rescp} (top) and prediction intervals' width of different methods (bottom) for the Solar dataset}}
    \label{fig:solar_adaptive_pis}
\end{figure}
}
\section{{\gls{rescp} ablation study}}
\label{app:ablation}
\subsection{{Ablation on \gls{rescp} components}}
\begin{table}[t]
\centering
\small
\caption{{Ablation of \textbf{\gls{rescp}}, \gls{rescp} without decay, \gls{rescp} without sliding window, and the combination of the two. Best Winkler scores are shown in bold.}}
\setlength{\tabcolsep}{3pt}
\setlength{\aboverulesep}{0pt}
\setlength{\belowrulesep}{0pt}
\renewcommand{\arraystretch}{1.05}

\begin{tabular}{@{}l|l|l|c|ccc@{}}
\multicolumn{2}{c}{} & Metric & \textbf{\gls{rescp}} & {\scriptsize No decay} & {\scriptsize No window} & {\scriptsize No window, no decay} \\
\midrule[1.5pt]
\multirow{9}{*}{\rotatebox{90}{Solar}}  & \multirow{3}{*}{\rotatebox{90}{RNN}} & $\Delta$Cov & 0.74{\tiny$\pm$0.24} & -0.10{\tiny$\pm$0.26} & 0.89{\tiny$\pm$0.20} & 1.70{\tiny$\pm$0.10} \\
 &                             & PI-Width & 62.25{\tiny$\pm$0.75} & 60.59{\tiny$\pm$0.84} & 59.81{\tiny$\pm$0.87} & 60.90{\tiny$\pm$0.30} \\
 &                             & Winkler & \textbf{104.24{\tiny$\pm$0.79}} & 107.46{\tiny$\pm$0.49} & 104.70{\tiny$\pm$0.20} & 104.46{\tiny$\pm$0.40} \\
\cmidrule{2-7}
 & \multirow{3}{*}{\rotatebox{90}{Transf}} & $\Delta$Cov & 3.09{\tiny$\pm$0.35} & 2.22{\tiny$\pm$0.37} & -0.37{\tiny$\pm$0.42} & 0.73{\tiny$\pm$0.54} \\
 &                             & PI-Width & 63.34{\tiny$\pm$1.11} & 58.49{\tiny$\pm$0.82} & 53.41{\tiny$\pm$0.74} & 56.37{\tiny$\pm$0.69} \\
 &                             & Winkler & 103.13{\tiny$\pm$0.58} & 103.35{\tiny$\pm$0.94} & 103.71{\tiny$\pm$0.39} & \textbf{101.36{\tiny$\pm$0.43}} \\
\cmidrule{2-7}
 & \multirow{3}{*}{\rotatebox{90}{ARIMA}} & $\Delta$Cov & 0.68{\tiny$\pm$0.95} & -1.06{\tiny$\pm$1.27} & 2.93{\tiny$\pm$0.34} & 2.97{\tiny$\pm$0.78} \\
 &                             & PI-Width & 77.17{\tiny$\pm$2.07} & 80.27{\tiny$\pm$2.90} & 82.68{\tiny$\pm$2.29} & 85.91{\tiny$\pm$2.82} \\
 &                             & Winkler & \textbf{110.38{\tiny$\pm$4.03}} & 113.09{\tiny$\pm$7.35} & 114.41{\tiny$\pm$1.98} & 118.20{\tiny$\pm$2.85} \\
\midrule
\multirow{9}{*}{\rotatebox{90}{Beijing}}  & \multirow{3}{*}{\rotatebox{90}{RNN}} & $\Delta$Cov & -0.70{\tiny$\pm$0.77} & -1.64{\tiny$\pm$0.61} & 0.10{\tiny$\pm$1.36} & 1.64{\tiny$\pm$0.88} \\
 &                             & PI-Width & 65.96{\tiny$\pm$2.50} & 64.04{\tiny$\pm$1.69} & 64.91{\tiny$\pm$3.36} & 69.53{\tiny$\pm$3.02} \\
 &                             & Winkler & 106.07{\tiny$\pm$0.47} & 114.85{\tiny$\pm$0.29} & \textbf{97.99{\tiny$\pm$0.31}} & 98.25{\tiny$\pm$0.66} \\
\cmidrule{2-7}
 & \multirow{3}{*}{\rotatebox{90}{Transf}} & $\Delta$Cov & -0.49{\tiny$\pm$0.59} & -1.53{\tiny$\pm$0.75} & -1.45{\tiny$\pm$0.73} & -1.09{\tiny$\pm$0.70} \\
 &                             & PI-Width & 64.06{\tiny$\pm$1.74} & 62.23{\tiny$\pm$2.17} & 62.37{\tiny$\pm$1.96} & 63.94{\tiny$\pm$2.01} \\
 &                             & Winkler & \textbf{103.64{\tiny$\pm$0.21}} & 112.31{\tiny$\pm$0.74} & 111.88{\tiny$\pm$0.50} & 112.40{\tiny$\pm$0.51} \\
\cmidrule{2-7}
 & \multirow{3}{*}{\rotatebox{90}{ARIMA}} & $\Delta$Cov & 0.63{\tiny$\pm$0.22} & 0.35{\tiny$\pm$0.23} & 1.31{\tiny$\pm$0.62} & 2.36{\tiny$\pm$0.41} \\
 &                             & PI-Width & 70.43{\tiny$\pm$0.86} & 70.90{\tiny$\pm$0.51} & 72.85{\tiny$\pm$2.17} & 77.26{\tiny$\pm$1.67} \\
 &                             & Winkler & \textbf{108.75{\tiny$\pm$0.31}} & 119.72{\tiny$\pm$0.44} & 118.14{\tiny$\pm$0.69} & 118.50{\tiny$\pm$0.46} \\
\midrule
\multirow{9}{*}{\rotatebox{90}{Exch.}}  & \multirow{3}{*}{\rotatebox{90}{RNN}} & $\Delta$Cov & 1.13{\tiny$\pm$0.27} & 2.01{\tiny$\pm$0.19} & 4.19{\tiny$\pm$0.05} & 4.20{\tiny$\pm$0.12} \\
 &                             & PI-Width & 0.0210{\tiny$\pm$0.0001} & 0.0219{\tiny$\pm$0.0001} & 0.0249{\tiny$\pm$0.0002} & 0.0254{\tiny$\pm$0.0003} \\
 &                             & Winkler & \textbf{0.0264{\tiny$\pm$0.0002}} & 0.0269{\tiny$\pm$0.0002} & 0.0284{\tiny$\pm$0.0001} & 0.0291{\tiny$\pm$0.0002} \\
\cmidrule{2-7}
 & \multirow{3}{*}{\rotatebox{90}{Transf}} & $\Delta$Cov & 1.46{\tiny$\pm$0.18} & 2.36{\tiny$\pm$0.07} & 4.45{\tiny$\pm$0.08} & 4.80{\tiny$\pm$0.14} \\
 &                             & PI-Width & 0.0229{\tiny$\pm$0.0001} & 0.0241{\tiny$\pm$0.0002} & 0.0265{\tiny$\pm$0.0003} & 0.0267{\tiny$\pm$0.0002} \\
 &                             & Winkler & \textbf{0.0294{\tiny$\pm$0.0001}} & 0.0298{\tiny$\pm$0.0002} & 0.0306{\tiny$\pm$0.0003} & 0.0307{\tiny$\pm$0.0003} \\
\cmidrule{2-7}
 & \multirow{3}{*}{\rotatebox{90}{ARIMA}} & $\Delta$Cov & 0.38{\tiny$\pm$0.41} & 1.61{\tiny$\pm$0.21} & 3.90{\tiny$\pm$0.21} & 3.40{\tiny$\pm$0.39} \\
 &                             & PI-Width & 0.0207{\tiny$\pm$0.0001} & 0.0215{\tiny$\pm$0.0001} & 0.0245{\tiny$\pm$0.0001} & 0.0248{\tiny$\pm$0.0002} \\
 &                             & Winkler & \textbf{0.0268{\tiny$\pm$0.0001}} & 0.0273{\tiny$\pm$0.0003} & 0.0288{\tiny$\pm$0.0002} & 0.0293{\tiny$\pm$0.0002} \\
\midrule
\multirow{9}{*}{\rotatebox{90}{ACEA}}  & \multirow{3}{*}{\rotatebox{90}{RNN}} & $\Delta$Cov & 1.56{\tiny$\pm$0.62} & 2.79{\tiny$\pm$0.45} & 5.34{\tiny$\pm$0.38} & 4.96{\tiny$\pm$0.63} \\
 &                             & PI-Width & 9.61{\tiny$\pm$0.26} & 10.15{\tiny$\pm$0.09} & 11.88{\tiny$\pm$0.09} & 12.15{\tiny$\pm$0.17} \\
 &                             & Winkler & \textbf{12.91{\tiny$\pm$0.23}} & 13.41{\tiny$\pm$0.08} & 14.80{\tiny$\pm$0.17} & 15.25{\tiny$\pm$0.40} \\
\cmidrule{2-7}
 & \multirow{3}{*}{\rotatebox{90}{Transf}} & $\Delta$Cov & 3.54{\tiny$\pm$0.32} & 3.29{\tiny$\pm$0.35} & 2.97{\tiny$\pm$0.47} & 3.04{\tiny$\pm$0.21} \\
 &                             & PI-Width & 10.10{\tiny$\pm$0.16} & 10.32{\tiny$\pm$0.05} & 9.34{\tiny$\pm$0.15} & 9.45{\tiny$\pm$0.07} \\
 &                             & Winkler & 12.90{\tiny$\pm$0.16} & 13.45{\tiny$\pm$0.15} & \textbf{12.27{\tiny$\pm$0.26}} & 12.49{\tiny$\pm$0.10} \\
\cmidrule{2-7}
 & \multirow{3}{*}{\rotatebox{90}{ARIMA}} & $\Delta$Cov & 5.02{\tiny$\pm$0.40} & 5.17{\tiny$\pm$0.33} & 7.68{\tiny$\pm$0.22} & 7.75{\tiny$\pm$0.24} \\
 &                             & PI-Width & 13.63{\tiny$\pm$0.55} & 13.50{\tiny$\pm$0.32} & 16.74{\tiny$\pm$0.72} & 16.91{\tiny$\pm$0.50} \\
 &                             & Winkler & \textbf{16.21{\tiny$\pm$0.53}} & 16.27{\tiny$\pm$0.45} & 19.24{\tiny$\pm$0.64} & 19.44{\tiny$\pm$0.45} \\
\bottomrule
\end{tabular}

\label{tab:ablation_study}
\end{table}
In this section, we report the remaining results of the ablation study on \gls{rescp} components presented in \autoref{sec:experiments} for the \gls{rnn} base model, extending them to all base models considered in our experiments. The results, summarized in \autoref{tab:ablation_study}, confirm the findings discussed in the main text.
Specifically, we see that both the time-dependent weights and the sliding window of past residuals significantly contribute to the performance of \gls{rescp} across all datasets and base models.
\subsection{{Use of exogenous variables}}
    \begin{table}[t]
\centering
\small
\caption{{Comparison of \textbf{\gls{rescp}} and \textbf{\gls{rescqr}} with and without exogenous variables for Solar and Beijing datasets. Best Winkler scores (lowest) within each method are shown in bold.}}
\setlength{\tabcolsep}{3pt}
\setlength{\aboverulesep}{0pt}
\setlength{\belowrulesep}{0pt}
\renewcommand{\arraystretch}{1.05}

\begin{tabular}{@{}l|l|l|cc|cc@{}}
\multicolumn{2}{c}{} & Metric & \textbf{\gls{rescp}} & {\scriptsize w/ exog} & \textbf{\gls{rescqr}} & {\scriptsize w/o exog} \\
\midrule[1.5pt]
\multirow{9}{*}{\rotatebox{90}{Solar}} & \multirow{3}{*}{\rotatebox{90}{RNN}} & $\Delta$Cov & 0.74{\tiny$\pm$0.24} & 1.34{\tiny$\pm$0.09} & -1.10{\tiny$\pm$0.91} & -0.70{\tiny$\pm$0.12} \\
 &  & PI-Width & 62.25{\tiny$\pm$0.75} & 53.87{\tiny$\pm$0.59} & 59.99{\tiny$\pm$1.72} & 56.71{\tiny$\pm$0.17} \\
 &  & Winkler & \textbf{104.24{\tiny$\pm$0.79}} & 105.55{\tiny$\pm$0.81} & \textbf{82.76{\tiny$\pm$0.26}} & 88.46{\tiny$\pm$0.42} \\
\cmidrule{3-7}
 & \multirow{3}{*}{\rotatebox{90}{Transf}} & $\Delta$Cov & 3.09{\tiny$\pm$0.35} & 3.57{\tiny$\pm$0.02} & -3.51{\tiny$\pm$16.26} & 1.37{\tiny$\pm$1.08} \\
 &  & PI-Width & 63.34{\tiny$\pm$1.11} & 56.96{\tiny$\pm$0.27} & 59.56{\tiny$\pm$1.59} & 57.84{\tiny$\pm$0.36} \\
 &  & Winkler & 103.13{\tiny$\pm$0.58} & \textbf{101.73{\tiny$\pm$0.33}} & \textbf{82.16{\tiny$\pm$0.32}} & 88.30{\tiny$\pm$0.53} \\
\cmidrule{3-7}
 & \multirow{3}{*}{\rotatebox{90}{ARIMA}} & $\Delta$Cov & 0.68{\tiny$\pm$0.95} & -1.12{\tiny$\pm$0.13} & -2.03{\tiny$\pm$0.62} & -1.03{\tiny$\pm$0.59} \\
 &  & PI-Width & 77.17{\tiny$\pm$2.07} & 78.09{\tiny$\pm$0.47} & 66.19{\tiny$\pm$0.81} & 66.85{\tiny$\pm$0.93} \\
 &  & Winkler & \textbf{110.38{\tiny$\pm$4.03}} & 127.79{\tiny$\pm$1.00} & \textbf{85.38{\tiny$\pm$0.45}} & 88.19{\tiny$\pm$0.89} \\
\midrule
\multirow{9}{*}{\rotatebox{90}{Beijing}} & \multirow{3}{*}{\rotatebox{90}{RNN}} & $\Delta$Cov & -0.70{\tiny$\pm$0.77} & -4.71{\tiny$\pm$0.14} & -1.21{\tiny$\pm$1.65} & -0.53{\tiny$\pm$0.26} \\
 &  & PI-Width & 65.96{\tiny$\pm$2.50} & 56.29{\tiny$\pm$0.30} & 65.53{\tiny$\pm$4.09} & 65.11{\tiny$\pm$0.54} \\
 &  & Winkler & \textbf{106.07{\tiny$\pm$0.47}} & 115.73{\tiny$\pm$0.20} & 105.43{\tiny$\pm$0.85} & \textbf{104.93{\tiny$\pm$0.19}} \\
\cmidrule{3-7}
 & \multirow{3}{*}{\rotatebox{90}{Transf}} & $\Delta$Cov & -0.49{\tiny$\pm$0.59} & -1.87{\tiny$\pm$0.09} & -1.43{\tiny$\pm$1.10} & 0.20{\tiny$\pm$0.56} \\
 &  & PI-Width & 64.06{\tiny$\pm$1.74} & 59.52{\tiny$\pm$0.22} & 64.41{\tiny$\pm$2.72} & 66.49{\tiny$\pm$1.17} \\
 &  & Winkler & \textbf{103.64{\tiny$\pm$0.21}} & 112.90{\tiny$\pm$0.07} & 105.97{\tiny$\pm$1.21} & \textbf{105.44{\tiny$\pm$0.78}} \\
\cmidrule{3-7}
 & \multirow{3}{*}{\rotatebox{90}{ARIMA}} & $\Delta$Cov & 0.63{\tiny$\pm$0.22} & -0.96{\tiny$\pm$0.05} & -1.42{\tiny$\pm$1.15} & -0.16{\tiny$\pm$0.95} \\
 &  & PI-Width & 70.43{\tiny$\pm$0.86} & 67.51{\tiny$\pm$0.19} & 66.01{\tiny$\pm$3.01} & 67.33{\tiny$\pm$2.28} \\
 &  & Winkler & \textbf{108.75{\tiny$\pm$0.31}} & 120.59{\tiny$\pm$0.13} & 107.20{\tiny$\pm$1.21} & \textbf{103.97{\tiny$\pm$0.22}} \\
\bottomrule
\end{tabular}

\label{tab:exog_comparison}
\end{table}
    We further investigate the impact of incorporating exogenous variables into the \gls{rescp} framework for the Solar and Beijing datasets. To this end, we conduct experiments comparing the performance of \gls{rescp} with and without exogenous inputs using all base models. The results, summarized in \autoref{tab:exog_comparison}, show that the inclusion of exogenous variables in \gls{rescp} generally leads to worse performance in terms of coverage and Winkler score. This supports our choice of excluding exogenous variables from \gls{rescp}. Removing exogenous variables from \gls{rescqr}, instead, leads to clear degraded performances on the Solar dataset, while on the Beijing dataset the results are similar~(exogenous inputs do not appear particularly relevant to quantify uncertainty in this dataset). 
\section{Sensitivity analysis on ESN hyperparameters and temperature}\label{app:sensitivity}
\begin{figure}[h]
    \centering
    \includegraphics[width=0.9\linewidth, trim=50 20 50 10]{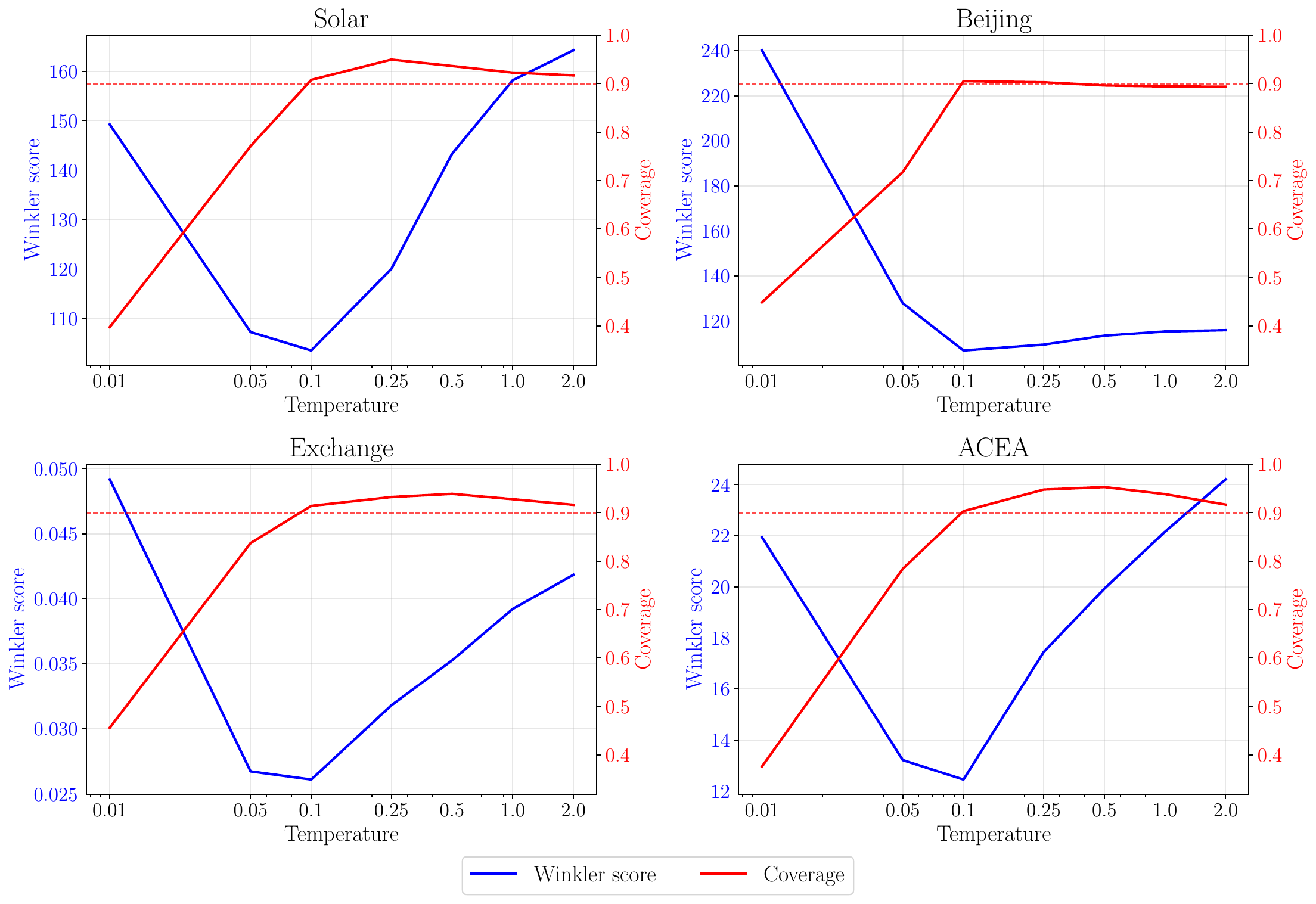}
    \caption{Sensitivity analysis of the temperature.}
    \label{fig:temp_sens}
\end{figure}


After performing model selection and evaluating our model with the resulting hyperparameters, we conducted a small sensitivity analysis to determine how each part of the reservoir influenced model performance in \gls{rescp}, allowing us to assess the robustness of our chosen hyperparameters and identify the key aspects of the reservoir architecture that contribute most to quantification accuracy. We considered the \gls{rnn} point forecasting baseline.

\autoref{fig:temp_sens} shows how the value of the Winkler score and of the coverage varies as the temperature used in the \textsc{SoftMax} in equation~\autoref{eq:softmax} increases. As can be seen, there is a well-defined optimal range for the temperature in which the Winkler score is minimal and the coverage is close to the targeted one. Notice that for a small temperature, the method is unable to achieve the nominal coverage. This is the \textit{bias-variace tradeoff} that we mentioned in Section~\ref{sec:theory}, more specifically in Assumption~\ref{ass:weights}: the temperature should be small enough to localize only the meaningful residuals close to the query one, yet high enough to avoid collapsing the neighborhood to a single or limited set of similar samples.

\begin{figure}
    \centering
    \includegraphics[width=0.9\linewidth, trim=50 20 50 10]{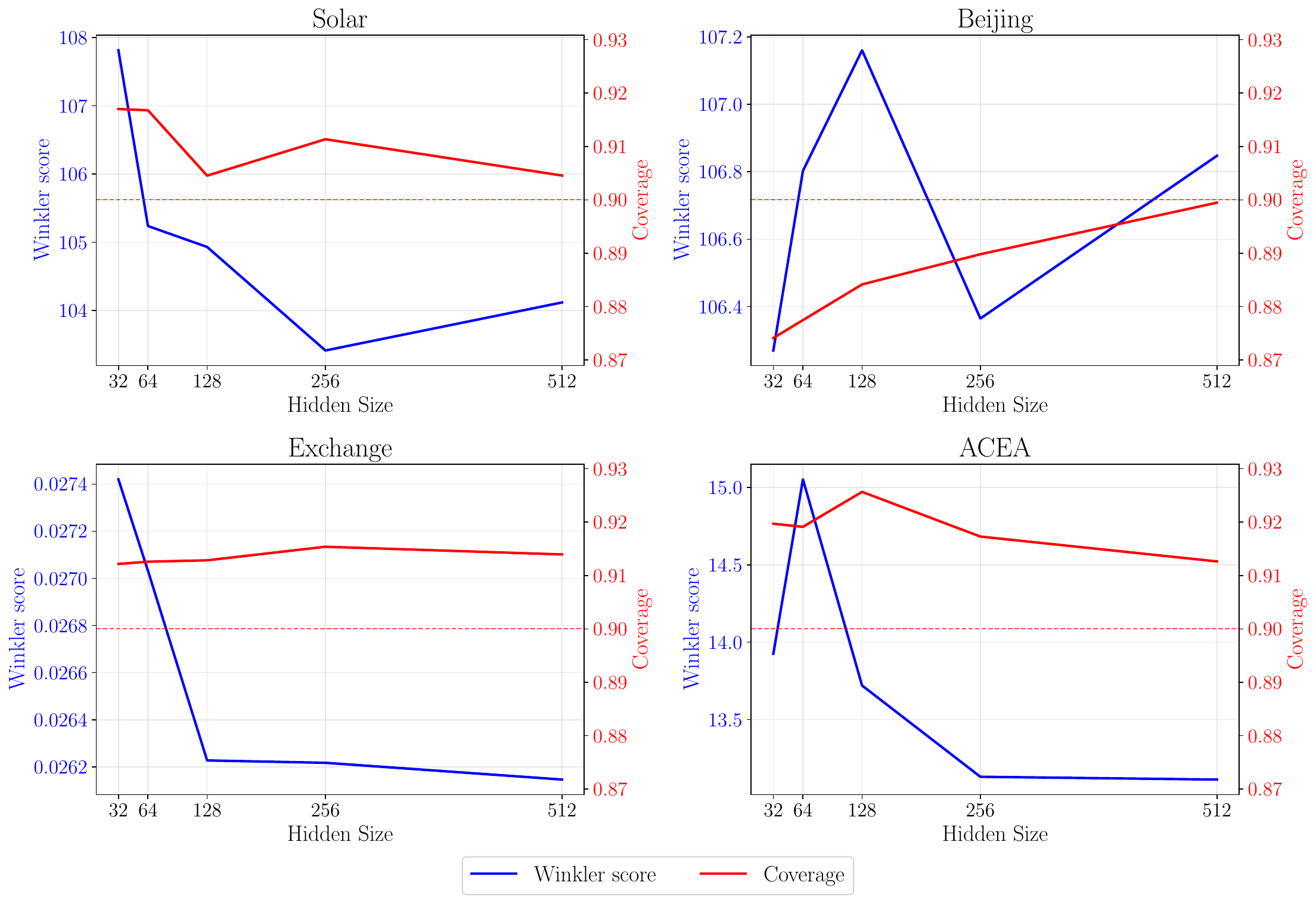}
    \caption{Sensitivity analysis of the reservoir size.}
    \label{fig:sens_size}
\end{figure}
\autoref{fig:sens_size} shows the behaviour of \gls{rescp} depending on the capacity of the reservoir. Apart from the Beijing dataset, the method can achieve the nominal coverage also when the dimensionality of the reservoir space is contained, at the price of a higher Winkler score. For the Beijing dataset, a bigger reservoir dimensionality is required to reach the target coverage.

\begin{figure}
    \centering
    \includegraphics[width=0.9\linewidth, trim=50 20 50 10]{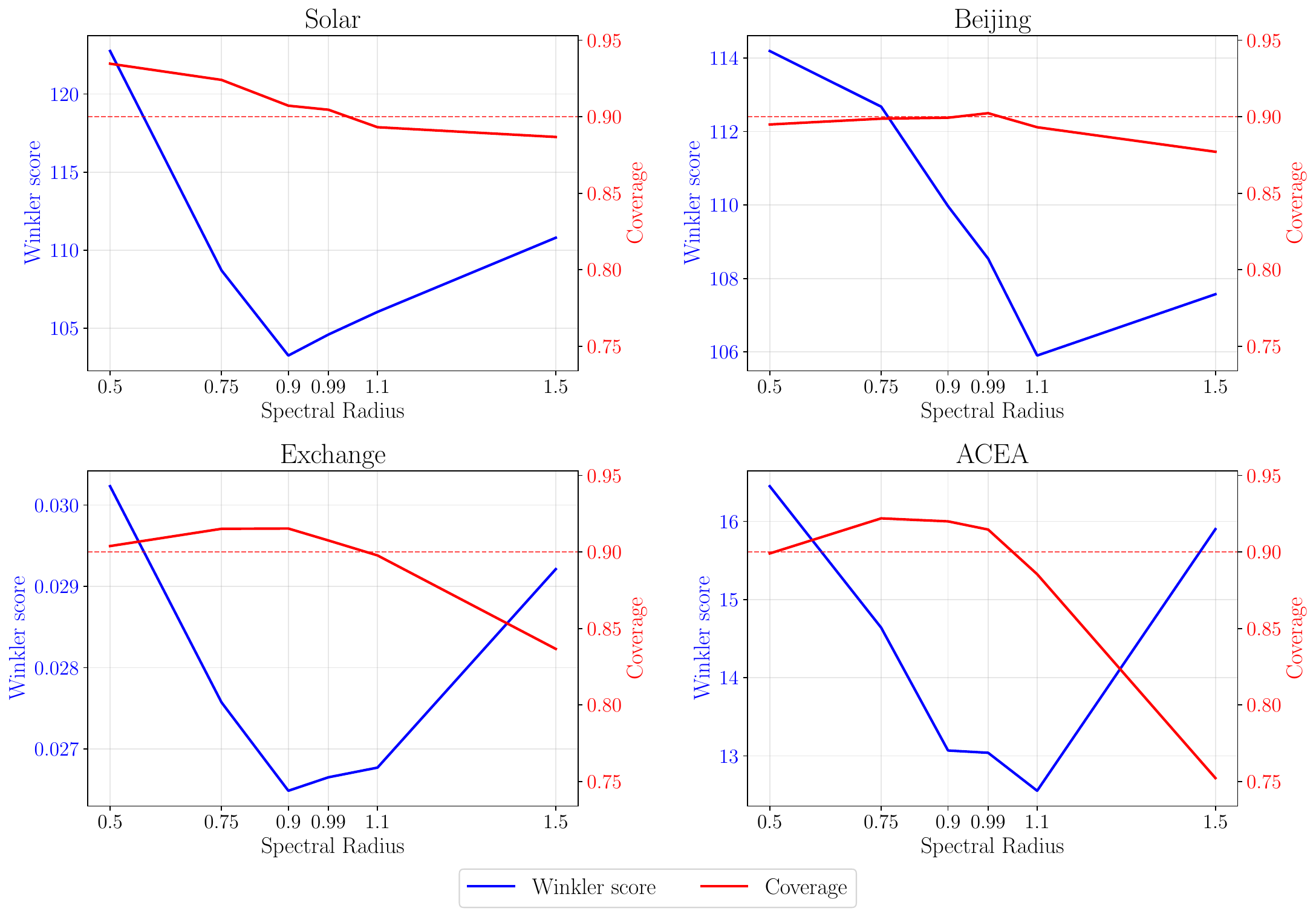}
    \caption{Sensitivity analysis of the spectral radius.}
    \label{fig:sens_spectral}
\end{figure}
The sensitivity analysis of the spectral radius can be seen in \autoref{fig:sens_spectral}. It is interesting to see that in the Solar and Beijing datasets, the best performance is achieved even with $\rho(\mW_h) \geq 1$. This is due to the condition being only sufficient, not necessary: while $\rho(\mW_h ) < 1$ guarantees contractive dynamics in the autonomous linearized reservoir~\citep{gallicchio2020deep}, driven reservoirs with input scaling, leaky units, and bounded nonlinearities~\citep{dong2022asymptotic, ceni2025edge}.

\begin{figure}
    \centering
    \includegraphics[width=0.9\linewidth, trim=50 20 50 10]{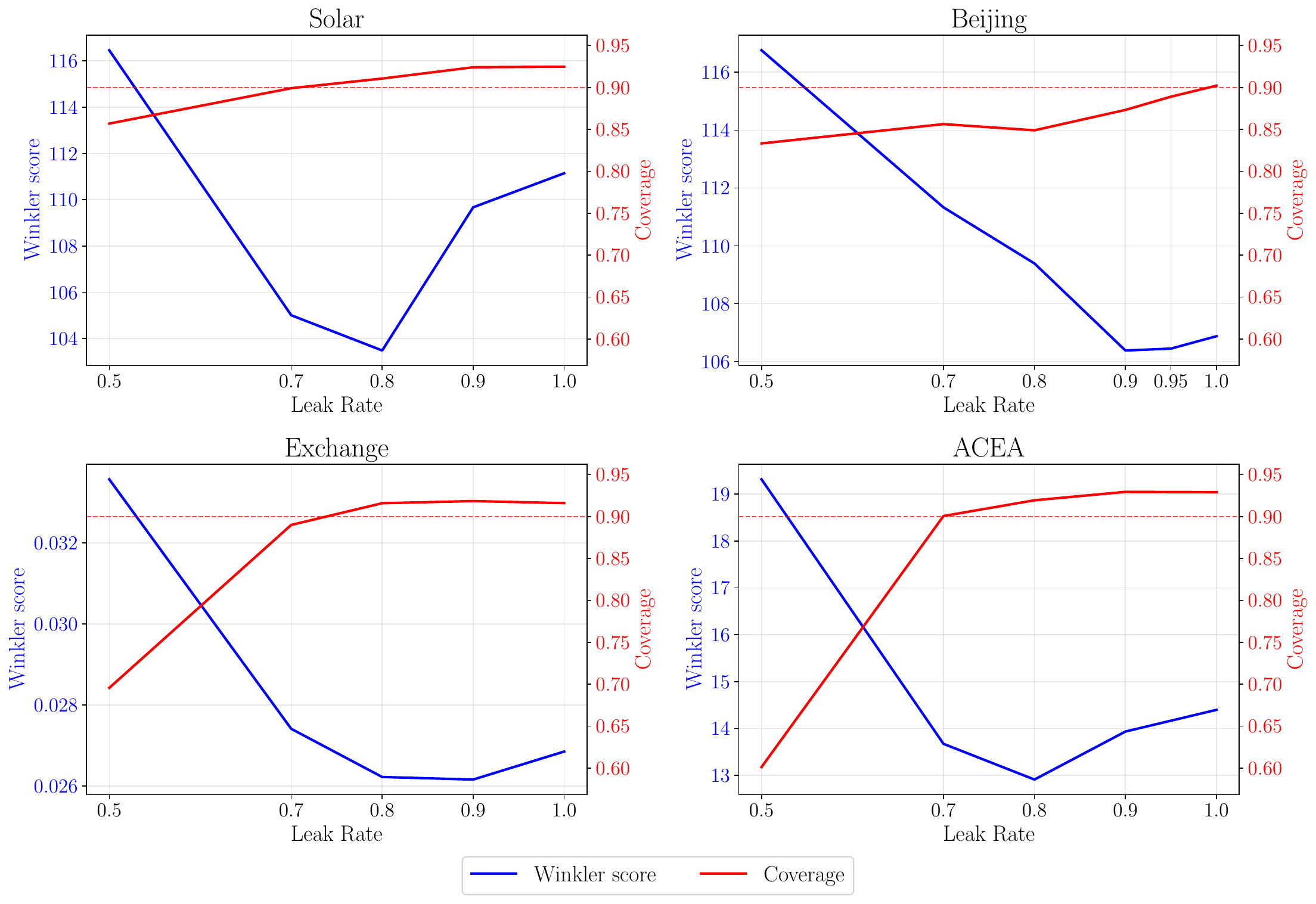}
    \caption{Sensitivity analysis of the leak rate.}
    \label{fig:sens_leak}
\end{figure}
In \autoref{fig:sens_leak}, we observe how varying the leak rate affects the reservoir’s performance. In particular, a low value of the leaking rate slows the reservoir’s state updates, causing the network to retain past information for longer but reducing its responsiveness to new inputs. We can see how this heavily affects the observed coverage in almost all datasets.

\begin{figure}
    \centering
    \includegraphics[width=0.9\linewidth, trim=50 20 50 10]{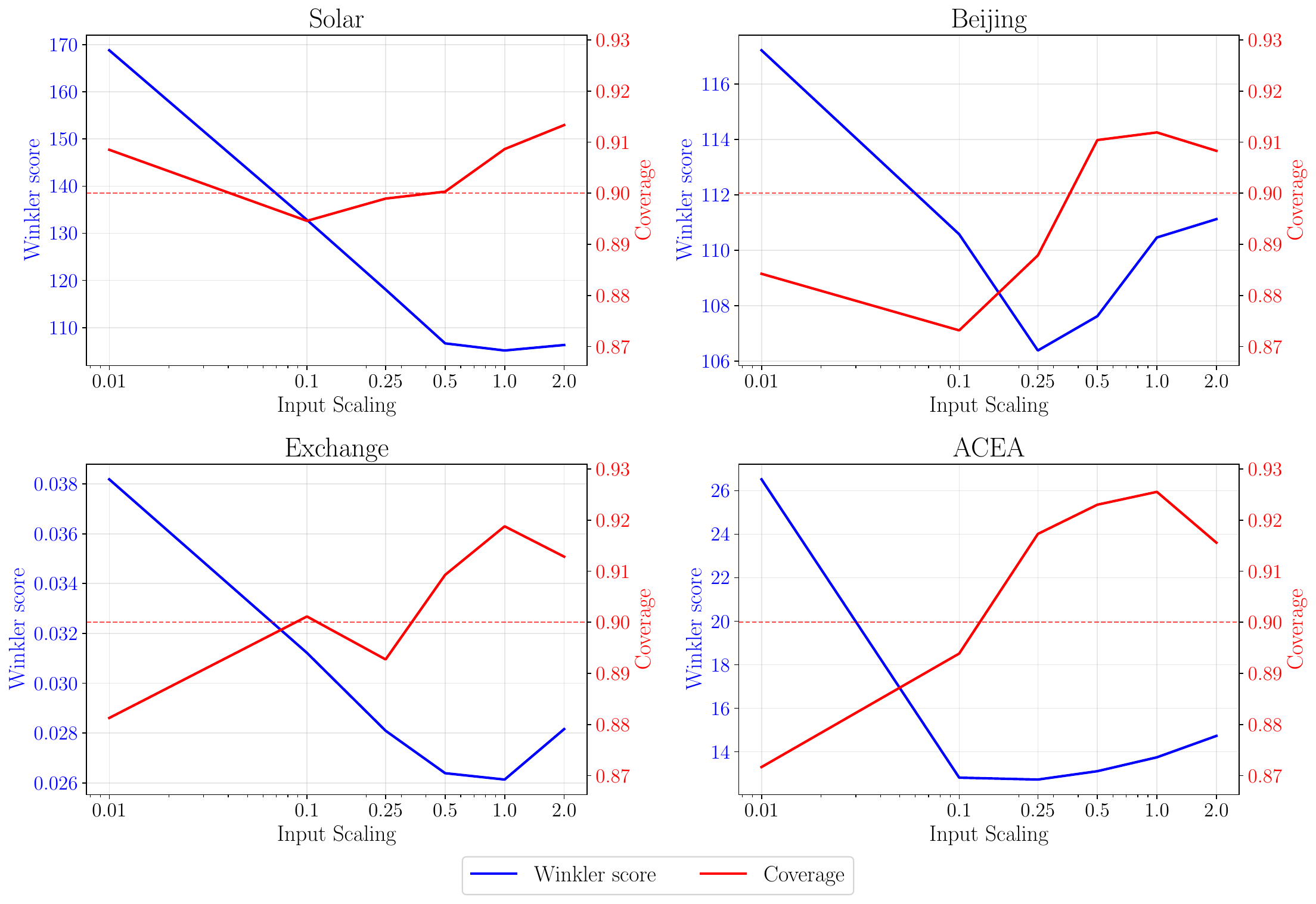}
    \caption{Sensitivity analysis of the input scaling.}
    \label{fig:sens_input_scaling}
\end{figure}
Finally, the effect of different values of input scaling can be found in \autoref{fig:sens_input_scaling}. High values of input scaling lead the $\tanh$ activation function to saturation during state update, yielding loss of sensitivity to input variations. On the other hand, a small input scaling keeps the activation function in the linear part, limiting the richness of nonlinear dynamics necessary to capture complex temporal patterns.
\section{{Implementation Details}}\label{app:reproducibility}
{
The code have been written in Python~\citep{van2009python} using the following open-source packages:
\begin{itemize}
    \item Numpy~\citep{harris2020array};
    \item Pandas~\citep{reback2020pandas,mckinney2010data};
    \item PyTorch~\citep{paszke2019pytorch};
    \item PyTorch Lightning~\citep{falcon2019pytorch};
    \item Torch Spatiotemporal~\citep{cini2022torch};
    \item \texttt{reservoir-computing}~\citep{bianchi2020reservoir}.
\end{itemize}
Experiments were conducted on a machine equipped with a AMD Ryzen 9 7900X CPU and a NVIDIA GeForce RTX 4090.
}
\section{Use of Large Language Models} 
We acknowledge the use of Large Language Models as a writing tool for minor edits in single sentences.

\section{Ethics Statement}
The work presented in this paper is about basic machine learning research and we perform experiments on standard, publicly available datasets. 
The authors have read and adhere to the ICLR Code of Ethics and do not foresee any direct ethical concerns or potential for misuse.

\section{Reproducibility Statement}
The code to reproduce the experiments presented in this paper will be made publicly available.

\end{document}